\documentclass[11pt]{article}
\usepackage[labelfont=bf]{caption}
\usepackage[utf8]{inputenc}
\usepackage{fullpage}
\usepackage[T1]{fontenc}
\usepackage{natbib}
\usepackage[colorlinks=true,citecolor=blue]{hyperref} 
\usepackage{url}            
\usepackage{booktabs}       
\usepackage{amsfonts}       
\usepackage{nicefrac}       
\usepackage{microtype}   
\usepackage{hhline}
\usepackage{makecell}

\usepackage{comment}
\usepackage{lipsum}
\usepackage{setspace}
\usepackage{mathtools}
\usepackage{graphicx}
\usepackage{amssymb}
\usepackage{bbm}
\usepackage{amsthm}
\usepackage{xpatch}
\usepackage{mathrsfs}

\usepackage{url}
\usepackage{array}
\usepackage{wrapfig}
\usepackage{tabularx}
\usepackage[normalem]{ulem} 
\usepackage{enumerate}
\usepackage{enumitem}
\usepackage[usenames,dvipsnames]{xcolor}
\usepackage{mathtools}

\usepackage{booktabs}       

\usepackage{footnote}

\usepackage{thmtools}
\usepackage{thm-restate}

\usepackage{hyperref}
\usepackage{cleveref}

\newtheorem*{thm*}{Theorem}

\newcommand{\Rc}{\mathcal{R}}
\newcommand{\Prob}{\mathbb{P}}
\newcommand{\E}{\mathbb{E}}

\usepackage{algorithm}
\usepackage{algpseudocode}

\title{Learning to Stop with Surprisingly Few Samples}

\author{Daniel Russo \\Columbia University \\ \texttt{djr2174@gsb.columbia.edu}
	\and
	Assaf Zeevi \\ Columbia University \\ \texttt{assaf@gsb.columbia.edu}
	\and
	Tianyi Zhang \\Columbia University \\ \texttt{tz2376@gsb.columbia.edu}
	}
\begin{document}

\maketitle

\begin{abstract}%
We consider a discounted infinite horizon optimal stopping problem. If the underlying distribution is known a priori, the solution of this problem is obtained via dynamic programming (DP) and is given by a well known threshold rule. When information on this distribution is lacking, a natural (though naive) approach is  ``explore-then-exploit," whereby the unknown distribution or its parameters are estimated over an initial exploration phase, and this estimate is then used in the DP to determine actions over the residual exploitation phase. We show: (i) with proper tuning, this approach leads to performance comparable to the full information DP solution; and (ii) despite common wisdom on the sensitivity of such ``plug in" approaches in DP due to propagation of  estimation errors,   a surprisingly ``short" (logarithmic in the horizon) exploration horizon suffices to obtain said performance.  In cases where the underlying distribution is heavy-tailed, these observations are even more pronounced: a {\it single sample}  exploration phase suffices.     
\end{abstract}

\section{Introduction}
\label{introsec}

\paragraph{The optimal stopping problem.} The folklore of  optimal stopping problems traces back to the work of the British mathematician Arthur Cayley in the late 1900's,  with the first rigorous formulation appearing in
\cite{moser1956problem}; for a review and historical account cf. \cite{ferguson1989solved}. The common structure in most optimal stopping problems considers a sequence $X_1,\ldots, X_n$
of independent random variables that, in the simplest instance, are drawn from a common distribution $F$. The  values of these random variables are   revealed sequentially and the
 player's objective, roughly speaking,  is to {\it stop} the sequence at a point where it is perceived to have reached its maximum value, and collect that as a reward. In this paper, we consider a discounted formulation of this problem (for reasons that will be discussed shortly): fixing a discount factor $\gamma \in (0,1)$, the player seeks to solve 
\begin{equation}
\label{problem1.1}
    V^* = \sup_{1 \leq \tau \leq n}{\mathbb{E}_{F}\left[\gamma^{\tau}X_{\tau}\right]},
\end{equation}
where the supremum is taken over all stopping times $\tau$ with respect to the sequence of observations that are bounded by the horizon length $n$.
Further technical details are deferred to Section \ref{sec-2}. 

This optimal stopping problem can be solved by dynamic programming (DP). The optimal  policy is given by a sequence of  fixed thresholds. Specifically, $\tau^* = \inf{\{1\leq i \leq n\: : \; X_i \leq {S_{i}^{(n)}}\}}$ where the thresholds $S_{i}^{(n)} =: A_{n-1}$ are determined the  backward recursions: $A_0 = 0;\,\,
A_j = \gamma\cdot \mathbb{E}_F \left[ \max \left\{X, A_{j-1}\right\} \right],\,\,  j = 1,\ldots,n$; see, e.g., \cite{bertsekas1995dynamic} for further discussion and an elaboration on the solution of the Bellman equation. Here $X$ denotes a generic draw from the distribution $F$ and the expectation is index by $F$ to make clear that the solution is directly determined by this problem primitive. 

 The problem outlined above has an especially elegant solution in the infinite horizon setting, namely, in the  asymptotic regime where $n \to \infty$. The 
optimal stopping rule  takes the form of a \textit{stationary} threshold policy:
\begin{equation}
\label{policy1.2}
    \tau^* = \inf{\{i\geq 1 \; :\; X_i \leq {S^*}\}},
\end{equation}
where the threshold is the unique solution of Bellman's fixed point  equation,
\begin{equation}
\label{threhsoldbellman}
    S^* = \gamma\cdot \mathbb{E}_F\left[\max{\{X, S^*\}}\right].
\end{equation}
It is due to this simplification that the infinite horizon  formulation is better suited for  highlighting  salient features of the learning problems described next.

\paragraph{The stopping problem under incomplete information.}
 A voluminous literature studies the optimal stopping problem and various variants thereof under complete information on the underlying primitives, primarily, the distribution $F$.  In contrast, and outside of work on so called secretary problems where typically one only observes relative ranks of $X_1,\ldots,X_n$ (cf. \cite{ferguson1989solved}),  very few antecedents consider the impact of incomplete prior information on achievable performance.  A brief review of such work is deferred to the end of this section; the reader is also referred to the recent paper by   \cite{goldenshluger2017optimal} and references therein. In particular,  the latter paper  considers a finite horizon problem, and for  nonparametric classes of distributions $F$, proposes a rank-based policy which is proven to be asymptotically  optimal (as the horizon grows large) relative to a benchmark given by the complete information solution.  Their proposed policy, which is derived as a solution to an auxiliary ranking problem,  possesses a relatively simple recursive structure but must be solved numerically.  
 
  An open question flagged in  \cite{goldenshluger2017optimal} is whether simpler families of policies  might  yield competitive  performance if the class of distributions $F$ is suitably  restricted. For example, natural algorithms might mimic the threshold structure of the optimal policy in \eqref{threhsoldbellman} by solving an auxiliary DP that replaces the true distribution $F$ with an estimate. A risk with such an approach is that even relatively small estimation errors could result in policies that perform poorly. In fact, it is widely recognized that solutions to Bellman's equation can be quite sensitive to perturbations in problem primitives when the discount factor is close to one; For example, see \cite{nilim2005robust} who discuss the potential propagation of errors in dynamic programming recursions.

\paragraph{Main contributions.} 
Under a reasonably broad parametric class of distributions, this paper gives a granular analysis of the optimal stopping DP and the impact of mis-estimation of the problem parameters on decision performance. This is used to establish the efficacy of simple policy which, in the spirit of model predictive control, uses some initial observations to estimate the distribution and subsequently optimizes performance assuming this ``plug-in" estimate were correct. We give a sharp asymptotic analysis as the discount factor $\gamma$ tends to 1, yielding several insights. 

The first insight pertains to the (minimal) number of samples needed to support learning in the optimal stopping problem under  incomplete  information.\footnote{With slight abuse of terminology, we will refer to this in what follows as {\it sample complexity}. As will become evident shortly,  the way we define this property  does not directly conform with  the  traditional $(\epsilon,\delta)$ PAC-learning interpretation, but it does capture the essence of the latter in our setting.} An exploration phase that collects only on the order of $\log(1/(1-\gamma))^2$ observations from $F$ suffices to learn enough about this distribution to make the ``plug in" approach near optimal. In contrast to the common wisdom described above that solutions to the Bellman equation can be quite sensitive even to small mis-estimations, collecting an amount of data that scales only logarithmically in the ``effective horizon'' is sufficient to support near optimal solutions in the case of optimal stopping.  
This threshold is sharp, in the sense that any number of samples which is of lower order is catastrophic for the decision-maker, losing half of the attainable value; see Theorem \ref{estimationthm}.  

Surprisingly, the length of the exploration horizon required to support near-optimal performance is lower in problems where the underlying distribution have tails that decrease more slowly. This is especially pronounced when the tails are heavy:  a single observation is revealing enough to make the ``plug-in'' policy asymptotically optimal; see Theorem \ref{estimationthm_onesample}.  This result is driven by an intrinsic robustness of decision quality to inaccurate estimation as the time horizon grows.

The results above are supported by a detailed analysis of the scale of parameter mis-estimation that can be tolerated without degrading decision performance. The critical scale is approximately $1/(\log (1/(1-\gamma)))$. In particular, any mis-estimation smaller than this scale induces a  threshold policy that is asymptotically optimal, and larger scale perturbations preclude that;  see Theorem \ref{thmpositive}. Moreover, the effects of said perturbation are highly asymmetric: the performance of the resulting threshold policy is far more sensitive to overestimation and it is reasonably robust to underestimation; see Theorem \ref{thmnegative}. This asymmetry is one of the key attributes that support our sample complexity findings.

\paragraph{Potential for broader relevance}
Our work focuses on a classic optimal stopping problem -- a problem which since its inception has served to illustrate features that pertain to  broader sequential decision making contexts. We use it to elucidate issues that arise specifically at the intersection of sequential decision making and statistical inference: the  sensitivity of DP solutions to mis-estimation and the sample complexity required to ensure the efficacy of plug-in type policies.  By revealing rich and unexpected behavior, the paper offers a detailed ``case-study" that motivates further investigation. We  are hopeful that this line of work will produce more general insights on the  sample complexity of learning in structured classes of dynamic optimization problems -- like inventory control or queuing control -- and how this depends on key problem primitives.

\paragraph{Related literature.} Our paper is connected to two major strands of literature. The first revolves around the optimal stopping problem, where there are only a few entries that study the incomplete information setting. The second is more directly related to the learning theory literature, for example, the efficacy of explore-then-commit (ETC) policies, but more broadly, reinforcement learning and sample complexity consideration in that space of problems.  (For brevity, we omit a review of the general principles of model predictive control and the like, which underlie the explore-then-commit and plug-in policies.)   These two strands are reviewed below in that order.  

The initial motivation to consider optimal stopping with partial information originates with the secretary problem.  \cite{stewart1978optimal} is among the first studies in this direction, and considers a uniform distribution $F$ with unknown mean and variance.  \cite{petruccelli1980best} considers an even more restricted setting where  only the mean is unknown. 
\cite{samuels1981minimax} considered minimizing the expected quantile of the observation selected based on Stewart’s work and constructs a minimax policy.  \cite{petruccelli1985maximin} considers normal distribution with unknown mean and variance, and  \cite{boshuizen1992moment} proposes a moment-based policy to handle a sequence of independent uniformly bounded random variables given that only the means and/or variances are known. All these antecedents restrict $F$ to have simple parametric form, and the proposed policies rely heavily on this supposition. This greatly restricts the breadth of insights one can tease out. In particular, neither one of these studies directly focus on the regret and sample complexity of the problem,  or their implications on implementation of simple yet universal learning rules like ETC. 

The more recent paper by \cite{goldenshluger2017optimal}, discussed earlier,  is an exception insofar as it studies broad nonparametric classes of distributions, and characterizes minimax regret  over said classes. The policy that is designed in that paper and shown to be minimax optimal (in order) relies on solving an auxiliary ranking problem. The latter is fairly intuitive yet leads to a far more complex learning algorithm compared to ETC-type policies. Moreover, that paper actually calls into  question the general prospects and efficacy of ``plug in" approaches in the context of optimal stopping with incomplete information.  As indicated earlier, our investigation is by and large triggered by this question and provides some initial (somewhat surprising) evidence to the contrary.    

  Our work also relates to broader research on reinforcement learning.  Despite a flurry of recent work in this area, very few papers develop understanding of specific structured decision problems, as ours does. Nearly all results on the sample complexity of online or offline reinforcement learning reveal that sample complexity scales super-linearly in the effective time horizon; see for example the recent works of \cite{azar2017minimax,jin2018q, agarwal2020model} and references therein. That is, learning to make near-optimal decisions requires more interactions than the time horizon allows.  Hence, successful RL algorithms learn across initial epochs of interaction how to optimize in future epochs. By contrast, our formulation requires optimizing within a single epoch and requires learning at a much faster timescale.  Results on learning in average cost Markov decision processes like those of \cite{jaksch2010near} appear to learn in a single episode, but require that the problem has small diameter, essentially meaning that any state is reachable in a small number of periods. It is unclear how such a notion could be adapted to optimal stopping: the state of the system (the most recent observation $X_n$) is continuous and unbounded, and the entire goal of the decision-maker is to reach states (i.e., very high values of $X_n)$ that are  just barely reachable within the given time horizon. 
  
  Two very recent papers suggest that it is possible to learn a near-optimal policy in general finite-state finite-horizon MDPs with a number of interactions that scale only logarithmically in the problem's time horizon \citep{wang2020long, zhang2020reinforcement}. It appears nontrivial to adapt their results, which depend polynomially on the number of distinct states, to our problem where the state space is continuous and unbounded. Nevertheless, when viewed in light of their work, our results suggest that the sample complexity of learning in a broad class of structured MDPs might also depend only logarithmically on the effective horizon. It is worth mentioning that our work provides very sharp asymptotic sample complexity results, including tight lower bounds, in a style that is quite different from \cite{wang2020long} and \cite{zhang2020reinforcement}.

\paragraph{Organization of the paper.} The paper is structured as follows. In Section 2 we provide the model, formulate the problem and  define the performance metric of relative regret. Section 3 describes the ``plug in" based explore-then-commit policy. Section 4 
contains the main results concerning the sample complexity requirements of the plug-in policy, including the result on sufficiency of a single observation. Section 5 provides the interpretation of the main results based on the probability of stopping and a non-traditional loss function, as well as a verify brief outline of the proofs. Some numerical results are presented in Section 6. Section 7 discusses the limitations and open questions. Proofs of all statements are given in the Appendix.

\section{Problem Formulation} \label{sec-2}
\paragraph{The model.} A player observes a sequence of independent and identically distributed random variables, $(X_1,X_2,\ldots)$ drawn from an unknown distribution $F$. At each stage $n=1,2,\ldots$ they may either:  stop the game and collect a reward given by $\gamma^n X_n$, where $\gamma \in (0,1)$ is a discount factor; or continue playing in which case the observed value is lost without recourse and the next observation in the sequence is revealed.  A common motivation for this set up is a  seller that wishes to maximize the expected net present value from the sale of a single indivisible item. The seller interacts with potential buyers across the  sequence of rounds and the random variables defined above are the offers s/he receives at each round. An {\it admissible policy} is given by a random stopping time $\tau$ adapted to the filtration $\left(\mathcal{F}_{n}\right)_{n \in \mathbb{N}}$ where $\mathcal{F}_n=\sigma(X_1, \cdots, X_n)$. The expected net present value earned by stopping rule $\tau$ under distribution $F$ can be written concisely as $\mathbb{E}_{F}\left[ \gamma^\tau X_{\tau} \right]$.  Optimizing this over admissible stopping times gives the value of the game, defined earlier in (\ref{problem1.1}).    
This will be viewed as the {\it full information benchmark}. Note that the player cannot solve this optimization problem directly as $F$, the key stochastic primitive, is not known a priori. 

\paragraph{Performance metric and objectives.} Motivated by the solution structure in the full information setting given in (\ref{policy1.2}) and (\ref{threhsoldbellman}), we seek to design a single stopping rule that offers competitive performance without knowledge of $F$. The shortfall in performance of an admissible policy induced by $\tau$ can be measured through its relative regret, defined by
\begin{equation}
	\label{relativeregret}
	\mathcal{R}(F, \gamma, \tau) := \frac{\mathbb{E}_{F}\left[\gamma^{\tau^*}X_{\tau^*}\right] - \mathbb{E}_{F}\left[\gamma^{\tau}X_{\tau}\right]}{\mathbb{E}_{F}\left[\gamma^{\tau^*}X_{\tau^*}\right]},
\end{equation}
where $\tau^*$ is the full information optimal stopping rule under $F$. Specifically, it takes the form of the threshold policy \eqref{policy1.2} whose threshold value is the solution to the Bellman equation \eqref{threhsoldbellman} under $F$. 

As is often the case in such problems, it will be instructive to consider the asymptotic behavior of the  relative regret, embedding it within a sequence of problems with discount factor tending to $1$. In effect, this means we are looking at increasingly longer ``effective time horizons."  Consider an algorithm that specifies a stopping rule $\tau_{\gamma}$ for a problem instance with discount factor $\gamma \in (0,1)$. We say this algorithm has \emph{universally vanishing relative regret on the distribution class $\mathcal{D}$} if  
\begin{equation}
\lim_{\gamma \to 1}  \mathcal{R}(F, \gamma, \tau_{\gamma}) =0  \quad \text{for every } F\in \mathcal{D}. 
\end{equation}
We usually omit the subscript on $\tau_{\gamma}$, as it is clear from context. Vanishing regret indicates that the price of only having partial knowledge that $F\in \mathcal{D}$ is negligible when the effective horizon is long. 

\paragraph{Admissible distributions and notation.} For tractability and to better elucidate key structural properties of the learning problem, we focus our analysis on a simple parametric family of distributions. We fix a scale parameter $\alpha \geq 1/2$ throughout the paper\footnote{The requirement that $\alpha \geq \frac{1}{2}$ is a technical condition due to the bounds in Lemma \ref{Bounds on Tailed Probability} in Appendix \ref{characterizingexp}. We conjecture  that Theorem \ref{estimationthm_onesample} holds for any $\alpha >0$.}.  For some location parameter $\theta \in \mathbb{R}$, we let  $F_{\theta}$ be the distribution whose probability density function $f_{\theta}$ takes the form
\begin{equation}
	\label{distribution}
	f_{\theta}(x) = C_0 \exp{\left(-|x-\theta|^{\alpha}\right)} 
\end{equation}
where $C_0$ is the normalization constant. We study whether particular algorithms have universally vanishing regret on the class distributions $\mathcal{D_\alpha} := \{F_{\theta}: \theta \in \mathbb{R}_{+}\}.$
Such algorithms must adapt to the unknown location parameter $\theta$. The value of $\alpha$ controls both the difficulty of estimating the location parameter from data and the likelihood of observing values much larger than the mean.  

 Some general notation that will be used in the sequel. For functions $f:(0,1)\to \mathbb{R}$ and $g:(0,1)\to \mathbb{R}$, we say $f(\gamma)=o(g(\gamma))$ if $f(\gamma)/g(\gamma) \to 0$ as $\gamma \to 1$ and $f(\gamma)=\omega( g(\gamma))$ if $f(\gamma)/ g(\gamma) \to \infty$ as $\gamma \to 1$. We write $g(\gamma) \sim f(\gamma)$ if $g(\gamma)/f(\gamma) \rightarrow 1$. To denote expectations taken over  $F_{\theta}$, we sometimes write $\mathbb{E}_{\theta}[\cdot]$ in place of the more cumbersome notation $\mathbb{E}_{F_{\theta}}[\cdot]$.The notation $\mathbb{P}_{\theta}[\cdot]$ is defined similarly. We write $\Rc(\theta, \gamma, \tau) := \Rc(F_{\theta}, \gamma, \tau)$ to denote relative regret for distributions in the location family $D_{\alpha}$.

\section{The Explore-then-Commit Plug-in Policy}
We study an extremely simple procedure for online learning in optimal stopping, presented in Algorithm \ref{alg:plugin}. The algorithm proceeds in three phases; for completeness its details are also summarized below in pseudo-code.  In the first, exploration phase,  it simply observes the first $N$ values,  and uses these  to learn about the unknown distribution. $N$ is a tuning parameter of the policy that will be expounded upon shortly.
In the second phase, a maximum likelihood estimator 
$\hat{\theta}$ is used to estimate the distribution's location parameter $\theta$. Finally, the algorithm "plugs in" said estimate into the Bellman equation, and optimizes it as if the estimated parameter value  were correct. Precisely, it employs the optimal threshold policy  described in \eqref{policy1.2} only using the plugged in estimate. The resulting stopping time can be written formally as 
\begin{equation}\label{tau-policy}
    \tau_N=\min\{n \geq N:  X_n \geq S^*(F_{\hat{\theta}} , \gamma ) \}.
\end{equation} 
We call this procedure the $N$ sample ``plug in"  explore-then-commit policy, or just ``plug in" policy for short. 

The approach described above is, as indicated, quite naive, and  there are certainly more refined learning algorithms than this family  of ETC solutions.  In particular, rather than terminating learning after the first $N$ stages, continuous improvement of the ``plug in" and the induced Bellman equation can be achieved via updating of said estimate also over the exploitation phase. Or one can  allow  the exploration phase to be determined in a fully  online manner as a function of collected observations rather being fixed in advance. However, as we will show next, the very simple ETC policy described above is remarkably effective as is, without these modifications.  The performance guarantees that spell this out add to the already attractive nature of the policy due to its apparent simplicity. This follows a vast literature, usually referred to as model predictive control,  that tackles dynamic optimization under incomplete information by ``separating"    statistical inference and decision-making.  The challenge is then to show when  such approaches lead to provably near optimal performance. This is what we establish in this paper in the context of our optimal stopping problem.

\begin{algorithm}\label{alg1}
\caption{Explore-then-commit policy}
\label{alg:plugin}
\textbf{Input:} Distribution's shape parameter $\alpha \geq 1/2$, discount factor $\gamma <1$, sample size $N \in \mathbb{N}^{+}$.
\begin{algorithmic}[1]
\State Observe and skip the first $N$ samples $X_1,\,...\, X_N$. Compute $\hat{\theta}  = \arg\max_{\theta} \sum_{i=1}^{n} \log f_{\theta}(X_i)$.
\State Solve for a threshold $S^*(F_{\hat{\theta}}, \gamma)$ satisfying the Bellman Equation \eqref{threhsoldbellman} when the underlying distribution is $F_{\hat{\theta}}$.
\State For each period $n>N$, observe the offer $X_n$ and stop if it exceeds the threshold $S^*(F_{\hat{\theta}}, \gamma)$.
\end{algorithmic}
\end{algorithm}

\section{Main Results: Sample Complexity}
\label{sec:main}
\paragraph{Light-tailed distributions.} Our main results concern the first order asymptotic behavior of the relative regret (\ref{relativeregret}) under the ETC $N$ sample plug-in policy, defined via the stopping time $\tau_N$ in (\ref{tau-policy}).  The first result demonstrates a sharp phase-transition with respect to the sample size. Specifically, the plug-in policy has universally vanishing relative regret on the class of distributions $\mathcal{D_\alpha}$ if its sample size exceeds the critical threshold identified below.  When the sample size falls below this threshold, relative regret tends to 1/2, meaning that the plug-in policy loses precisely half of the maximal value obtained in the full information problem.  Vanishing regret also  requires the fairly obvious requirement that the initial  sample size $N$ is $o(1/(1-\gamma))$, effectively meaning that the algorithm does not forego a constant fraction of the problem's time horizon on estimating the  distribution. For exploration sample sizes that grow more slowly than the critical threshold, attaining the sharp constant of 1/2 requires that $N=\omega(1)$ so that our asymptotic analysis applies. (Recall the notation $a_n = \omega(b_n)$ means $\lim_{n \to \infty} a_n/b_n =\infty $.)     Define the critical sample size scaling
\begin{equation}\label{eq:N_critical}
N_{\rm critical}(\gamma)=\left(\frac{\left(\log{\frac{1}{1-\gamma}}\right)^{1-\frac{1}{\alpha}}}{\log\log{\frac{1}{1-\gamma}}}\right)^2.    
\end{equation}


\begin{restatable}{thm}{estimationthm}\textbf{(Sample complexity light-tailed distributions)} 
\label{estimationthm}
Suppose $\alpha >1$.
\begin{itemize}
    \item[(i.)] If $N = o\left(N_{\rm critical}(\gamma)\right)$ and $N=\omega(1)$, then
    $$\lim_{\gamma \rightarrow 1} \Rc(\theta, \gamma, \tau_N) = \frac{1}{2}  \quad \text{for any } \theta \in \mathbb{R}.$$
    \item[(ii.)] If  $N = \omega\left(N_{\rm critical}(\gamma)\right)$ and $N = o\left(\frac{1}{1-\gamma}\right)$, then 
    $$\lim_{\gamma \rightarrow 1} \Rc(\theta, \gamma, \tau_N) = 0  \quad \text{for any } \theta \in \mathbb{R}.$$
\end{itemize}
\end{restatable}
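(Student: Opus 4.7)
First I would condition on the first $N$ observations and use the i.i.d.\ structure of $(X_i)$ together with the stationarity of the threshold rule to write
\[
\mathbb{E}_\theta\!\left[\gamma^{\tau_N} X_{\tau_N}\right] \;=\; \gamma^N \cdot \mathbb{E}\!\left[\Psi\bigl(\hat\theta,\theta,\gamma\bigr)\right],
\]
where $\Psi(\hat\theta,\theta,\gamma)$ denotes the expected discounted reward of the stationary threshold rule with threshold $S^*(F_{\hat\theta},\gamma)$ applied to a fresh i.i.d.\ stream from $F_\theta$. Because $N_{\rm critical}(\gamma) = o\bigl(1/(1-\gamma)\bigr)$ in both regimes, $\gamma^N \to 1$, reducing the task to understanding $\mathbb{E}[\Psi(\hat\theta,\theta,\gamma)]/V^*$, and using the elementary identity $V^* = S^*(F_\theta,\gamma)$. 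At this point Theorems~\ref{thmpositive} and \ref{thmnegative} would supply the key ``cliff'' dichotomy: there is a critical asymmetric tolerance $\epsilon^*(\gamma)$ of order $(\log\log\tfrac{1}{1-\gamma})\big/(\log\tfrac{1}{1-\gamma})^{1-1/\alpha}$ --- precisely $1/\sqrt{N_{\rm critical}(\gamma)}$ --- such that $\Psi/V^* \to 1$ whenever $\hat\theta - \theta \leq \epsilon^*(\gamma)$ (any amount of underestimation is tolerated) and $\Psi/V^* \to 0$ whenever $\hat\theta - \theta \gg \epsilon^*(\gamma)$.

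\textbf{Step 2: MLE concentration and symmetry.} Next I would pin down the sampling law of $\hat\theta$. For the family $f_\theta(x) = C_0 \exp(-|x-\theta|^\alpha)$ with $\alpha > 1$, the MLE is a strictly convex $M$-estimator with finite Fisher information $I(\alpha) = \alpha^2\, \mathbb{E}_\theta\bigl[|X-\theta|^{2\alpha-2}\bigr]$. Standard $M$-estimator theory delivers local asymptotic normality, $\sqrt{N}(\hat\theta - \theta) \xrightarrow{d} \mathcal{N}\bigl(0, I(\alpha)^{-1}\bigr)$, and a Chernoff bound on the score together with a quadratic lower bound on the empirical criterion yields sub-Gaussian tails $\mathbb{P}(|\hat\theta - \theta| > t) \leq C\exp(-cNt^2)$ in the $t$ range of interest. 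Symmetry of $f_\theta$ about $\theta$ makes the limit law centered and symmetric, so in particular $\mathbb{P}(\hat\theta > \theta) \to 1/2$.

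\textbf{Step 3: Combine the two regimes.} For part (ii), $N = \omega(N_{\rm critical})$ forces $\sqrt{N}\,\epsilon^*(\gamma) \to \infty$, so $\mathbb{P}(|\hat\theta - \theta| > \epsilon^*) \to 0$ by the tail bound in Step~2; Step~1 then gives $\Psi/V^* \to 1$ with probability tending to one, and bounded convergence on $0 \leq \Psi/V^* \leq 1$ closes the argument. For part (i), $N = o(N_{\rm critical})$ forces $\sqrt{N}\,\epsilon^*(\gamma) \to 0$, so the intermediate band $\{|\hat\theta - \theta| \leq \epsilon^*(\gamma)\}$ has probability $O\bigl(\sqrt{N}\,\epsilon^*(\gamma)\bigr) \to 0$; by the CLT and symmetry $\mathbb{P}(\hat\theta > \theta + \epsilon^*) \to 1/2$ and $\mathbb{P}(\hat\theta < \theta - \epsilon^*) \to 1/2$, and the Step~1 dichotomy yields $\Psi/V^* \to 0$ on the first event and $\Psi/V^* \to 1$ on the second. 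Averaging via bounded convergence produces the advertised limit $1/2$, i.e., $\mathcal{R}(\theta,\gamma,\tau_N) \to 1/2$.

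The hard part is not the MLE side, which is essentially standard, but establishing the critical tolerance $\epsilon^*(\gamma)$ in Step~1 with exactly the right $\log\log/\log^{1-1/\alpha}$ order. Any coarser estimate would leave a gap between the sample-complexity thresholds in parts (i) and (ii), so the sharp constant $\tfrac12$ (and the matching $N_{\rm critical}$) hinges on a delicate asymptotic expansion of the Bellman fixed-point equation~\eqref{threhsoldbellman} as $\gamma \to 1$ under a perturbed location parameter, capturing simultaneously the asymmetry between over- and under-estimation and the logarithmic growth of $S^*(F_\theta,\gamma)$.
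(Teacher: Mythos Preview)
Your proposal is correct and follows essentially the same route as the paper's proof: condition on $X_1,\ldots,X_N$ to extract the $\gamma^N$ factor and reduce to $\mathbb{E}[\mathcal{R}(\theta,\gamma,\tau^*(\theta+\hat\epsilon))]$, establish asymptotic normality of the MLE (the paper's Lemma~\ref{MLEoptimal}), then partition according to the sign and magnitude of $\hat\epsilon$ relative to $\epsilon_{\rm critical}(\gamma)$ and apply Theorems~\ref{thmpositive}--\ref{thmnegative} via bounded convergence. Two small caveats worth tightening: the paper uses cutoffs $2\epsilon_{\rm critical}$ and $\tfrac12\epsilon_{\rm critical}$ (not exactly $\epsilon_{\rm critical}$) so as to sit strictly on one side of the phase transition in Theorem~\ref{thmpositive}, and it introduces a separate cutoff $\epsilon_3(\gamma)=(\log\tfrac{1}{1-\gamma})^{1/(2\alpha)}$ for the underestimation tail because Theorem~\ref{thmnegative} carries the growth condition~\eqref{epsilonbounded} --- so ``any amount of underestimation is tolerated'' slightly overstates what is available; your sub-Gaussian tail claim is not needed, as asymptotic normality alone handles all the probability estimates.
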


\paragraph{Discussion.} Note that we always have the upper bound $N_{\rm critical}(\gamma) = O\left(\left(\log{\frac{1}{1-\gamma}}\right)^{2}\right)$, which is independent of $\alpha$. The term $1/{(1-\gamma)}$ can be interpreted as the length of effective time horizon. Thus Theorem \ref{estimationthm} demonstrates that the ETC policy relies on a plug-in that is \emph{nearly independent} of the problem's horizon. 

In contrast, when insufficient samples are collected, the relative regret tends to precisely 1/2, which is less than the maximal relative regret of 1 but is still completely independent of the distribution's location and scale parameters. We show in Section \ref{sec:analysis_ideas} that this is due to an intrinsic asymmetry in the estimation effects on the regret, where the plug-in policy is far more robust to underestimation than overestimation.

Perhaps the most surprising component of this result is that sample complexity requirements are milder when the distribution is heavier tailed. Specifically, notice that the distributions in \eqref{distribution} are heavier tailed for smaller values of $\alpha$ but that the critical sample size \eqref{eq:N_critical}  is decreasing in $\alpha$. Although estimation is more difficult for smaller $\alpha$, the performance of plug-in policies is less sensitive to estimation errors in such problems and this latter effect dominates. At the extreme, the critical sample size in Theorem \ref{estimationthm} seems to vanish as $\alpha$ tends to 1. 

\paragraph{Heavy-tailed setting.} The  next result confirms this observation on heavier-tailed settings, showing remarkably that for distributions with sub-exponential tails the \emph{single sample} plug-in policy has vanishing regret. (Recall that $\tau_1$ denotes the plug-in policy when $N=1$.) Of course, the single-sample plug-in estimator is inaccurate, especially when the distribution has heavy tails. This result is driven by the intrinsic robustness of decision quality to inaccurate estimation as the time horizon grows. 



\begin{restatable}[Sufficiency of a single observation]{thm}{onesample}
\label{estimationthm_onesample}
If $\alpha \in \left[\frac{1}{2},1 \right]$, then
    $$\lim_{\gamma \rightarrow 1} \mathcal{R}(\theta, \gamma, \tau_1) = 0 \quad \text{for any }  \theta \in \mathbb{R}.$$
\end{restatable}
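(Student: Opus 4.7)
The plan is to reduce the theorem to a pointwise perturbation analysis of a simple threshold policy and then pass the $\gamma\to 1$ limit inside the expectation over the single-sample estimate using dominated convergence. The key structural observation is that $\{F_\theta\}$ is a location family, so the plug-in threshold satisfies the exact shift identity $S^*(F_x,\gamma)=S^*(F_\theta,\gamma)+(x-\theta)$, while the MLE from one sample is $\hat\theta=X_1$. After the policy skips $X_1$, the independent iid stream $X_2,X_3,\ldots$ generates a geometric-type hitting time of the fixed (random) threshold, so
\begin{equation*}
V_1(\theta,\gamma)\;=\;\gamma\,\mathbb{E}_{X_1\sim F_\theta}\!\left[U\bigl(S^*(F_\theta,\gamma)+(X_1-\theta),\theta,\gamma\bigr)\right],\qquad U(S,\theta,\gamma):=\frac{\gamma\,p(S)\,M(S)}{1-\gamma(1-p(S))},
\end{equation*}
with $p(S)=\mathbb{P}_\theta(X\ge S)$, $M(S)=\mathbb{E}_\theta[X\mid X\ge S]$, and $V^*=U(S^*(F_\theta,\gamma),\theta,\gamma)$. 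Writing $\epsilon:=X_1-\theta\sim F_0$, the relative regret becomes
\begin{equation*}
\Rc(\theta,\gamma,\tau_1)\;=\;1-\gamma\,\mathbb{E}_{\epsilon\sim F_0}\!\left[\frac{U(S^*+\epsilon,\theta,\gamma)}{U(S^*,\theta,\gamma)}\right],
\end{equation*}
so the task reduces to showing the expectation tends to $1$.

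Next I would derive the first-order asymptotics of the Bellman primitives. A Laplace-type tail estimate gives $p(S)\sim(C_0/\alpha)S^{1-\alpha}\exp(-S^\alpha)$, and rearranging \eqref{threhsoldbellman} into the self-consistency relation $S^*(1-\gamma)=\gamma p^*(M^*-S^*)$ yields $S^*\sim L:=(\log(1/(1-\gamma)))^{1/\alpha}$, $M^*-S^*=O(L^{1-\alpha})$, and crucially the scale separation $p^*/(1-\gamma)\to\infty$. This step is technical but essentially the same type of calculation that underpins Theorem \ref{thmpositive}; the condition $\alpha\ge 1/2$ enters precisely because the tail bounds cited in the appendix need it.

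The heart of the argument is the pointwise limit $U(S^*+\epsilon)/U(S^*)\to 1$ for every fixed $\epsilon\in\mathbb{R}$ when $\alpha\in[1/2,1]$. The key lever is that $t\mapsto t^\alpha$ is concave for $\alpha\in(0,1]$: the sub-additivity bound $|(L+\epsilon)^\alpha-L^\alpha|\le|\epsilon|^\alpha$ holds independently of $L$, and the tangent estimate sharpens it to $(L+\epsilon)^\alpha-L^\alpha=\alpha L^{\alpha-1}\epsilon+o(L^{\alpha-1})$ for each fixed $\epsilon$. Hence $p(S^*+\epsilon)/p^*\to 1$ when $\alpha\in[1/2,1)$ (since $L^{\alpha-1}\to 0$), whereas at the boundary $\alpha=1$ one gets $p(S^*+\epsilon)/p^*\to e^{-\epsilon}$. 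In the borderline case the apparent decay is exactly compensated by the denominator of $U$: because $p^*\gg 1-\gamma$, the ratio $(1-\gamma+\gamma p^*)/(1-\gamma+\gamma p(\epsilon))$ also tends to $e^{\epsilon}$, and the product cancels. Combined with $M(S^*+\epsilon)/M^*\to 1$ (both asymptotic to $L$), this yields the claimed pointwise ratio limit.

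To close, I would apply dominated convergence. For $\gamma$ close to $1$ the perturbed threshold $S^*+\epsilon$ is eventually positive for any fixed $\epsilon$, so $U(S^*+\epsilon)\in[0,V^*]$ and $U(S^*+\epsilon)/V^*\in[0,1]$; the constant $1$ is $F_0$-integrable and dominates. Thus the expectation converges to $1$ and $\Rc(\theta,\gamma,\tau_1)\to 0$. The main obstacle I anticipate is twofold: (i) establishing a $\gamma$-uniform envelope for $U(S^*+\epsilon)/V^*$ on the tail events where $X_1$ is extremely large or extremely negative (so that the threshold either overshoots dramatically or nearly collapses), and (ii) making the $\alpha=1$ cancellation between numerator and denominator in $U$ quantitative enough that dominated convergence truly applies. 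Both hinge on the scale-separation $p(S^*+\epsilon)\gg 1-\gamma$ for each fixed $\epsilon$, which is what the heavy-tail regime $\alpha\le 1$ buys over the $\alpha>1$ setting of Theorem \ref{estimationthm}.
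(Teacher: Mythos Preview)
Your proposal contains one genuine error and otherwise lands close to the paper's argument.

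\textbf{The error.} The ``exact shift identity'' $S^*(F_x,\gamma)=S^*(F_\theta,\gamma)+(x-\theta)$ is false. Discounting destroys translation invariance of the Bellman fixed point: writing $T=S^*(F_\theta,\gamma)-\theta$ and substituting into \eqref{threhsoldbellman} gives $T+(1-\gamma)\theta=\gamma\,\mathbb{E}_{F_0}[\max\{Y,T\}]$, which still depends on $\theta$. (Concretely, for $\alpha=1$ one gets $S^*(1-\gamma)/\gamma=\tfrac12 e^{-(S^*-\theta)}$, and $S^*-\theta$ cannot be constant in $\theta$.) The paper in fact devotes Appendix~\ref{uniformapprox} to proving the \emph{approximate} version you need: $\partial S^*(\theta,\gamma)/\partial\theta\to 1$ uniformly (Lemma~\ref{lemmalocationshift}) and hence $S^*(\theta+\epsilon,\gamma)-S^*(\theta,\gamma)\in[(1-c_0)\epsilon,(1+c_0)\epsilon]$ (Corollary~\ref{coro5.6}). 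Your ratio analysis of $p(S^*+\epsilon)/p(S^*)$ and the $\alpha=1$ cancellation are correct \emph{for a threshold perturbed by exactly $\epsilon$}, but the plug-in policy uses $S^*(\theta+\epsilon,\gamma)$, not $S^*(\theta,\gamma)+\epsilon$, so you must route through this approximation (or, equivalently, through Theorems~\ref{thmpositive} and~\ref{thmnegative}, which already absorb it).

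\textbf{Comparison with the paper.} Once the shift identity is replaced by its approximate form, your strategy and the paper's coincide in substance: both condition on $\hat\epsilon=X_1-\theta$, bound the conditional relative regret in $[0,1]$, and pass the limit inside via bounded/dominated convergence. The organizational difference is that the paper does not re-derive the pointwise limit $U(S^*(\theta+\epsilon,\gamma))/V^*\to 1$ from tail asymptotics as you do; it simply observes that for $\alpha\le 1$ the critical perturbation $\epsilon_{\rm critical}(\gamma)\to\infty$, splits the integral at a growing cutoff $\epsilon_2(\gamma)=\tfrac{1}{2}\epsilon_{\rm critical}(\gamma)$ (and similarly $\epsilon_3(\gamma)$ on the negative side), and invokes Theorems~\ref{thmpositive} and~\ref{thmnegative} directly on the bounded piece while the unbounded piece has vanishing probability. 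This sidesteps both the shift-identity issue and your anticipated obstacle~(ii) about the $\alpha=1$ cancellation, at the cost of relying on the earlier perturbation theorems as black boxes. Your more explicit computation of the $p$-ratio and the numerator/denominator cancellation is a nice alternative derivation of the same pointwise limit, but it buys no extra generality here.
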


We note that the results presented above focus on  pointwise convergence 
for the plug-in policy that hold under any fixed location parameter. A careful reading of the proof techniques in Appendix \ref{uniformapprox} shows that relative regret also 
converges uniformly on compact sets. That is, taking $N=1$ if 
$\alpha \in [1/2,1]$ and $N=\omega(N_{\rm critical}(\gamma))$ if $\alpha>1$,
it holds that for any real numbers $\theta_1 \leq \theta_2$, 
$$\lim_{\gamma \rightarrow 1} \sup_{\theta \in [\theta_1,\theta_2]} \Rc(\theta, \gamma, \tau_1) = 0.$$
In the next section we explore in more detail the underlying structural properties of the DP solution and the Bellman equation and their implications on required estimation accuracy.  This will be presented in the form of an infinitesimal perturbation and sensitivity analysis.

\section{Theoretical Foundations: Sensitivity and Perturbation Analysis of the Full Information Problem}\label{sec:analysis_ideas}
The statistics governing the plug-in policy are quite simple, as we know that  under general conditions the maximum likelihood estimator is asymptotically normal. The difficulty in our  analysis is to understand how estimation errors that are on the order of $1/\sqrt{N}$ impact downstream decisions made by the algorithm. To frame this question more precisely, for given  discount factor let $\tau^*(\theta)$ denote the optimal stopping rule were the distribution known to be $F_{\theta}$. Our analysis centers around understanding the relative regret $\mathcal{R}\left( \theta, \gamma, \tau^*(\theta + \epsilon)  \right)$ incurred by employing  the optimal policy $\tau^*(\theta+\epsilon)$ under a location parameter that is perturbed from the truth by some $\epsilon$. This $\epsilon$ can roughly be thought of as the error in estimating the true parameter $\theta$, but throughout the section we take a non-stochastic view and study the sensitivity to arbitrary perturbations. 

Define the critical perturbation magnitude to be 
\begin{equation}\label{eq:critical_epsilon}
	\epsilon_{\rm critical}(\gamma)= \frac{1}{\alpha \sqrt{N_{\rm critical}(\gamma)}}= \frac{\log{\log{\frac{1}{1-\gamma}}}}{\alpha \left(\log{\frac{1}{1-\gamma}}\right)^{1-\frac{1}{\alpha}}}.
\end{equation}
The next result shows that comparing against this critical scale  determines whether a plug-in policy will be robust to overestimation of the location parameter; the case of underestimation is treated separately for reasons that will become obvious shortly.  
\begin{restatable}[Phase-transition for relative regret under overestimation]{thm}{thmpositive}
	\label{thmpositive}
	If  $\epsilon : [0,1]\to \mathbb{R}_{+}$ satisfies $\liminf_{\gamma \to 1}  \epsilon(\gamma)/ \epsilon_{\rm critical}(\gamma) > 1$, then 
	$$\lim_{\gamma \rightarrow 1} \mathcal{R}(\theta, \gamma, \tau^*({\theta+\epsilon(\gamma)})) = 1  \quad \text{for any } \theta \in \mathbb{R}.$$
	If  $\epsilon : [0,1]\to \mathbb{R}_{+}$ satisfies   $\limsup_{\gamma \to 1}  \epsilon(\gamma)/ \epsilon_{\rm critical}(\gamma) < 1$, then
	$$\lim_{\gamma \rightarrow 1} \mathcal{R}( \theta, \gamma, \tau^*({\theta+\epsilon(\gamma)})) = 0 \quad \text{for any } \theta \in \mathbb{R}.$$
\end{restatable}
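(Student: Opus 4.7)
The plan is to reduce the relative regret to an explicit ratio and then control that ratio via sharp tail asymptotics. By the location-shift invariance of $\{F_\theta\}$ one has $S^*(F_{\theta+\epsilon},\gamma)=S^*+\epsilon$ where $S^*:=S^*(F_\theta,\gamma)$, so $\tau^*(\theta+\epsilon)$, evaluated under the true distribution $F_\theta$, is a geometric stopping time with success probability $p(\epsilon):=\mathbb{P}_\theta(X>S^*+\epsilon)$. A direct sum of the geometric series gives
\[
V(\epsilon) := \mathbb{E}_\theta\bigl[\gamma^{\tau^*(\theta+\epsilon)} X_{\tau^*(\theta+\epsilon)}\bigr] = \frac{\gamma\, M(\epsilon)}{1-\gamma+\gamma p(\epsilon)}, \qquad M(\epsilon):=\mathbb{E}_\theta\bigl[X \mathbf{1}\{X>S^*+\epsilon\}\bigr].
\]
At $\epsilon=0$ the Bellman equation \eqref{threhsoldbellman} yields the identity $V(0)=S^*$ (the full-information value), so the whole problem reduces to analyzing
\[
1-\Rc(\theta,\gamma,\tau^*(\theta+\epsilon)) \;=\; \frac{V(\epsilon)}{V(0)} \;=\; \frac{M(\epsilon)}{M(0)} \cdot \frac{1-\gamma+\gamma p(0)}{1-\gamma+\gamma p(\epsilon)}.
\]

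Next I would establish the two key asymptotic scales. For the density $f_\theta(x)=C_0\exp(-|x-\theta|^\alpha)$, integration-by-parts (encoded in Lemma \ref{Bounds on Tailed Probability}) gives $\mathbb{P}_\theta(X>\theta+s)\sim C_0 s^{1-\alpha} e^{-s^\alpha}/\alpha$ as $s\to\infty$, and similarly $\int_s^\infty e^{-u^\alpha}\,du\sim s^{1-\alpha} e^{-s^\alpha}/\alpha$. Substituting these into the Bellman equation rewritten as $(1-\gamma)S^*=\gamma\int_{S^*}^\infty \mathbb{P}_\theta(X>t)\,dt$ and solving to leading order yields
\[
s^*:=S^*-\theta\;\sim\;\Bigl(\log\tfrac{1}{1-\gamma}\Bigr)^{1/\alpha}, \qquad p(0)\;\sim\;\alpha\,(1-\gamma)\log\tfrac{1}{1-\gamma},
\]
so $p(0)$ exceeds $1-\gamma$ by a $\log(1/(1-\gamma))$ factor, which is the asymmetry that drives the phase transition. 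Since $\epsilon=O(\epsilon_{\rm critical})=o(s^*)$ under either hypothesis, expanding $(s^*+\epsilon)^\alpha = (s^*)^\alpha + \alpha(s^*)^{\alpha-1}\epsilon(1+o(1))$ yields
\[
\frac{p(\epsilon)}{p(0)} \;=\; \exp\!\bigl(-\alpha(s^*)^{\alpha-1}\epsilon(1+o(1))\bigr) \;=\; \Bigl(\log\tfrac{1}{1-\gamma}\Bigr)^{-(\epsilon/\epsilon_{\rm critical}(\gamma))(1+o(1))},
\]
where I used $\alpha(s^*)^{\alpha-1}\sim\alpha(\log\tfrac{1}{1-\gamma})^{1-1/\alpha}$ together with the definition \eqref{eq:critical_epsilon} of $\epsilon_{\rm critical}$. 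A parallel calculation gives $M(\epsilon)/M(0)\sim p(\epsilon)/p(0)$, since for these exponentially decaying tails $M(\epsilon)\sim S^* p(\epsilon)$ to leading order.

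Assembling the pieces yields the two cases. When $\liminf \epsilon/\epsilon_{\rm critical}>1$, the ratio $p(\epsilon)/p(0)$ decays faster than $1/\log(1/(1-\gamma))$, so $p(\epsilon)=o(1-\gamma)$ and $1-\gamma+\gamma p(\epsilon)\sim 1-\gamma$, while $1-\gamma+\gamma p(0)\sim\alpha(1-\gamma)\log(1/(1-\gamma))$; combining with the matching decay of $M(\epsilon)/M(0)$ gives $V(\epsilon)/V(0)\sim\alpha(\log(1/(1-\gamma)))^{1-\epsilon/\epsilon_{\rm critical}}\to 0$, i.e.\ $\Rc\to 1$. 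When $\limsup \epsilon/\epsilon_{\rm critical}<1$, $p(\epsilon)$ still dominates $1-\gamma$ by a factor $(\log(1/(1-\gamma)))^{1-\epsilon/\epsilon_{\rm critical}}\to\infty$, so $1-\gamma+\gamma p(\epsilon)\sim\gamma p(\epsilon)$ and the multiplicative comparison with the $M$-ratio forces $V(\epsilon)/V(0)\to 1$. The hard part will be the sharpness demanded in the preceding paragraph: the exponent $-\alpha(s^*)^{\alpha-1}\epsilon$ must be identified, up to a $(1+o(1))$ factor, with $-(\epsilon/\epsilon_{\rm critical})\log\log(1/(1-\gamma))$, which requires tracking sub-leading polynomial corrections to $s^*$ and to $p(0)$ uniformly over the relevant range of $\epsilon$. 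The fragility of the $\log\log$ factor in $\epsilon_{\rm critical}$ means a crude one-sided tail bound will not suffice; the two-sided bounds of Lemma \ref{Bounds on Tailed Probability} are exactly what makes this precise tracking possible.
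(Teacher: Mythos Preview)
Your approach mirrors the paper's: both reduce to comparing the perturbed stopping probability $P_\theta(S^*(\theta+\epsilon,\gamma))$ against $1-\gamma$ via the explicit reward formula for threshold policies (what the paper packages as Propositions~\ref{prop:stopping_prob} and~\ref{phasetransitionlemma}). However, your opening step contains a genuine error.

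The claim $S^*(F_{\theta+\epsilon},\gamma)=S^*+\epsilon$ is false for $\gamma<1$. Substituting $S^*+\epsilon$ into the Bellman equation under $F_{\theta+\epsilon}$ yields
\[
\gamma\,\mathbb{E}_\theta\bigl[\max\{X+\epsilon,\,S^*+\epsilon\}\bigr]=\gamma\bigl(S^*/\gamma+\epsilon\bigr)=S^*+\gamma\epsilon\neq S^*+\epsilon,
\]
so the discounted Bellman fixed point is \emph{not} translation-invariant. What is true---and what the paper establishes by implicitly differentiating the Bellman equation (Lemma~\ref{lemmalocationshift}, Corollary~\ref{coro5.6})---is that
\[
\frac{\partial S^*(\theta,\gamma)}{\partial\theta}=\frac{P_\theta(S^*)}{P_\theta(S^*)+(1-\gamma)/\gamma}\longrightarrow 1,
\]
so $S^*(\theta+\epsilon,\gamma)-S^*(\theta,\gamma)=\epsilon(1+o(1))$ with an error of order $\epsilon/\log\frac{1}{1-\gamma}$. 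That correction is small enough to be absorbed into the $(1+o(1))$ you already carry in the exponent of $p(\epsilon)/p(0)$, so your downstream formula survives; but its justification requires this derivative argument, not location-shift invariance of the family $\{F_\theta\}$.

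A second, smaller gap: in the overestimation case you assert $\epsilon=O(\epsilon_{\rm critical})$, but the hypothesis $\liminf_{\gamma\to 1}\epsilon/\epsilon_{\rm critical}>1$ does not bound $\epsilon$ from above. The paper handles this by assuming without loss of generality that $\epsilon$ satisfies \eqref{thetabound}, appealing to monotonicity of the stopping probability in the threshold to cover larger perturbations.
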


The next result shows that performance is quite robust to underestimation, revealing a fundamental asymmetry.  For any fixed scalar $\epsilon>0$, employing the plug-in policy with underestimated location parameter $\theta - \epsilon$ yields vanishing relative regret. Vanishing regret is even possible when $\epsilon$ grows ``slowly" with the discount factor. 
\begin{restatable}[Robustness to underestimation]{thm}{thmnegative}
	\label{thmnegative}
	If $\epsilon : [0,1]\to \mathbb{R}_{+}$ satisfies
	\begin{equation}
		\label{epsilonbounded}
		\epsilon(\gamma) = o\left(\left(\log{\frac{1}{1-\gamma}}\right)^{\frac{1}{\alpha}}\right) \quad \text{as } \gamma \to 1, 
	\end{equation}
	then 
	$$\lim_{\gamma \rightarrow 1} \mathcal{R}(\theta, \gamma, \tau^*({\theta-\epsilon(\gamma)})) = 0 \quad \text{for any } \theta\in \mathbb{R}.$$
\end{restatable}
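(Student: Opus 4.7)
The plan is to exploit translation invariance of the location family together with a favorable asymmetry of the value function around its maximum: underestimating the location parameter by $\epsilon$ merely lowers the threshold by $\epsilon$, which is innocuous so long as $\epsilon$ is small relative to the threshold itself. Write $X_n = \theta + Y_n$ with $Y_n \sim F_0$ and let $S^*_\gamma := S^*(F_0,\gamma)$, so that under $F_\theta$ the policy $\tau^*(\theta - \epsilon(\gamma))$ is exactly the geometric stopping rule $\tau_s := \inf\{n\geq 1: Y_n \geq s\}$ with $s = S^*_\gamma - \epsilon(\gamma)$. Defining $V_0(s) := \mathbb{E}_0[\gamma^{\tau_s}Y_{\tau_s}]$ and $G(s) := \mathbb{E}_0[\gamma^{\tau_s}]$, linearity gives
\begin{equation*}
\mathbb{E}_\theta\!\left[\gamma^{\tau^*(\theta-\epsilon)} X_{\tau^*(\theta-\epsilon)}\right] = V_0(S^*_\gamma-\epsilon) + \theta\, G(S^*_\gamma-\epsilon),
\end{equation*}
and analogously $\mathbb{E}_\theta[\gamma^{\tau^*} X_{\tau^*}] = S^*_\gamma + \theta\, G(S^*_\gamma)$ using $V_0(S^*_\gamma)=S^*_\gamma$ from the Bellman equation. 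Since $G(\cdot) \in [0,1]$ and $S^*_\gamma \to \infty$, the $\theta$-dependent contributions are lower-order, so vanishing relative regret reduces to showing $V_0(S^*_\gamma - \epsilon(\gamma))/S^*_\gamma \to 1$.

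The geometric structure of $\tau_s$ gives the closed form
\begin{equation*}
V_0(s) = \frac{\gamma\, \mathbb{E}_0[Y\,\mathbf{1}\{Y \geq s\}]}{1-\gamma+\gamma\, p(s)} \;\geq\; s \cdot \frac{\gamma\, p(s)}{1-\gamma+\gamma\, p(s)},
\end{equation*}
where $p(s) := \bar F_0(s)$ and the inequality uses $\mathbb{E}_0[Y\mathbf{1}\{Y\geq s\}] \geq s\, p(s)$. Applying this at $s = S^*_\gamma - \epsilon$ and combining with the trivial upper bound $V_0(S^*_\gamma - \epsilon) \leq V_0(S^*_\gamma) = S^*_\gamma$ yields
\begin{equation*}
\Bigl(1-\tfrac{\epsilon}{S^*_\gamma}\Bigr)\cdot \frac{\gamma\, p(S^*_\gamma - \epsilon)}{1-\gamma+\gamma\, p(S^*_\gamma - \epsilon)} \;\leq\; \frac{V_0(S^*_\gamma - \epsilon)}{S^*_\gamma} \;\leq\; 1.
\end{equation*}
The argument thus reduces to two asymptotic facts about the full-information problem for $F_0$: (a) $S^*_\gamma \sim (\log(1/(1-\gamma)))^{1/\alpha}$, so the hypothesis $\epsilon(\gamma) = o((\log(1/(1-\gamma)))^{1/\alpha})$ forces $\epsilon/S^*_\gamma \to 0$; and (b) $p(S^*_\gamma)/(1-\gamma) \to \infty$, which since $p$ is decreasing implies $p(S^*_\gamma - \epsilon)/(1-\gamma) \to \infty$ and hence drives the middle fraction to $1$.

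Both facts come from the fixed-point relation $S^*_\gamma\bigl(1-\gamma+\gamma p(S^*_\gamma)\bigr) = \gamma\, \mathbb{E}_0[Y\mathbf{1}\{Y\geq S^*_\gamma\}]$ together with the tail estimates $p(s) \asymp s^{1-\alpha} e^{-s^\alpha}$ and $\mathbb{E}_0[Y \mid Y \geq s] = s + s^{1-\alpha}/\alpha + o(s^{1-\alpha})$ supplied by the tail-probability lemma in Appendix \ref{characterizingexp}. Substituting these estimates into the fixed-point equation pins down $p(S^*_\gamma) \sim \alpha\,(S^*_\gamma)^\alpha (1-\gamma)$, which both yields the scaling in (a) (by rearranging $\log(1/p(S^*_\gamma)) \sim (S^*_\gamma)^\alpha$ against $\log(1/(1-\gamma))$) and gives (b) directly since $(S^*_\gamma)^\alpha \to \infty$. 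The main technical obstacle is (a): one must control $S^*_\gamma$ sharply enough that perturbations of size $o((\log(1/(1-\gamma)))^{1/\alpha})$ remain genuinely small relative to $S^*_\gamma$, which requires combining the Bellman fixed-point equation with two-sided tail bounds rather than merely first-order asymptotics.
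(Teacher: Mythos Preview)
Your proposal is correct and follows essentially the same approach as the paper: both arguments reduce to the closed-form value of a threshold policy (the paper's equation~\eqref{performanceformula}), use the Bellman relation to deduce $P_0(S^*_\gamma)=\omega(1-\gamma)$ (the paper's Corollary~\ref{prop4.2}), invoke monotonicity of the tail so that the same holds at the smaller threshold $S^*_\gamma-\epsilon$, and appeal to the asymptotic $S^*_\gamma\sim(\log\tfrac{1}{1-\gamma})^{1/\alpha}$ (the paper's Lemma~\ref{Lemma5.1}) to make the hypothesis on $\epsilon$ bite. The only difference is packaging: the paper routes these ingredients through the abstract Proposition~\ref{phasetransitionlemma}, whereas you inline that proposition's proof, using the optimality bound $V_0(s)\le S^*_\gamma$ in place of tracking the $\mu$-ratio explicitly.
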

Our main results from the previous section follow relatively easily from these two theorems. For instance, consider the claim in Theorem \ref{estimationthm} that $\lim_{\gamma \rightarrow 1} \Rc(\theta, \gamma, \tau_N) = \frac{1}{2}$ when $\alpha>1$ and $N$ grows slower than the critical sample size. This can be understood by imagining that the estimated location parameter used by the $N$ sample plug-in policy follows a $N(\theta, \sigma^2/N  )$ distribution. This is approximately the case, due to classical asymptotic theory of  maximum likelihood estimation. Now, there is roughly a 1/2 chance of underestimation, to which performance is robust. There is roughly a 1/2 chance of overestimation by a magnitude on the order of $\sigma \sqrt{N}$. By Theorem \ref{thmpositive}, this event leads to  relative regret of 1 when $N< N_{\rm critical}(\gamma)$. 

\paragraph{Interpretation based on the probability of stopping.} 
The next proposition helps clarify where this critical threshold (and hence the critical sample size) arises from. Notice that when the decision-maker underestimates the location parameter, they are willing to accept and stop on slightly lower values  than is optimal. As a result, they can be expected to earn slightly less, but stop earlier. The results above suggest performance is relatively robust to such underestimation even if the time horizon is extremely long. When the location parameter is overestimated, the decision-maker tends to  reject and not stop for values they would have accepted under an optimal policy. This may lead to a larger ultimate  value  $\mathbb{E}[X_{\tau}]$, but it could entail passing on an extremely sequence of observation before stopping. This risk is what drives the sensitivity to overestimation in Theorem \ref{thmpositive}. 

The next result interprets the critical perturbation magnitude \eqref{eq:critical_epsilon} in terms of the expected stopping time. It shows that when the decision-maker overestimates the location parameter by more than this critical magnitude, they are unlikely to stop within the problem's effective time horizon of $1/(1-\gamma)$. The decision-maker's mistaken optimism about future observations causes them to reject nearly all of them. Overestimation by less than the critical magnitude does not impact the decision-maker as severely and they still stop well within the effective time horizon. While Proposition \ref{prop:stopping_prob} appears to study the mean of the stopping time, it is worth noting that it effectively characterizes the full distribution because $\tau^*(\theta+\epsilon)$ has a geometric distribution.

\begin{restatable}[Phase transition for the stopping time]{prop}{propOnTimeStopping}
\label{prop:stopping_prob}
    \leavevmode\newline If $\epsilon : [0,1]\to \mathbb{R}_{+}$ satisfies $\liminf_{\gamma \to 1}  \epsilon(\gamma)/ \epsilon_{\rm critical}(\gamma) > 1$, then for every $\theta \in \mathbb{R}$,
	\[
	\mathbb{E}_{\theta}\left[ \tau^*(\theta+\epsilon(\gamma)) \right] = \omega\left( \frac{1}{1-\gamma} \right). 
	\]
	If $\epsilon : [0,1]\to \mathbb{R}_{+}$ satisfies $\limsup_{\gamma \to 1}  \epsilon(\gamma)/ \epsilon_{\rm critical}(\gamma) < 1$, then for every $\theta \in \mathbb{R}$,
	\[
	\mathbb{E}_{\theta}\left[ \tau^*(\theta+\epsilon(\gamma)) \right] = o\left( \frac{1}{1-\gamma} \right).  
	\]
\end{restatable}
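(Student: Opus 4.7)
}
The plan is to reduce the statement to a sharp large-deviations-type expansion of a single tail probability of $F_{0}$, and then read off the phase transition from that expansion. Since $F_{\theta}$ is a location family, $S^{*}(F_{\theta+\epsilon(\gamma)},\gamma)=\theta+\epsilon(\gamma)+s^{*}(\gamma)$ where $s^{*}(\gamma):=S^{*}(F_{0},\gamma)$. Under $X_{n}\sim F_{\theta}$ i.i.d., the stopping rule $\tau^{*}(\theta+\epsilon(\gamma))$ accepts each draw independently with probability $p(\gamma):=\mathbb{P}_{0}[X>s^{*}(\gamma)+\epsilon(\gamma)]$, so $\tau^{*}(\theta+\epsilon(\gamma))$ is a geometric random variable and $\mathbb{E}_{\theta}[\tau^{*}(\theta+\epsilon(\gamma))]=1/p(\gamma)$. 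Hence the claim reduces to showing $p(\gamma)/(1-\gamma)\to 0$ when $\liminf \epsilon/\epsilon_{\rm critical}>1$ and $p(\gamma)/(1-\gamma)\to\infty$ when $\limsup \epsilon/\epsilon_{\rm critical}<1$.

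Next I would obtain sharp asymptotics for $s^{*}(\gamma)$ and for the tail at $s^{*}(\gamma)$. Rewriting the Bellman equation at $\theta=0$ gives
\[
\frac{(1-\gamma)\,s^{*}(\gamma)}{\gamma}\;=\;\mathbb{E}_{0}[(X-s^{*})^{+}]\;=\;\int_{s^{*}}^{\infty}\mathbb{P}_{0}[X>x]\,dx .
\]
The tail-probability bounds for $f_{\theta}(x)\propto\exp(-|x-\theta|^{\alpha})$ (the ones used in Lemma~\ref{Bounds on Tailed Probability} of Appendix~\ref{characterizingexp}) give $\mathbb{P}_{0}[X>x]=\frac{C_{0}}{\alpha}x^{1-\alpha}e^{-x^{\alpha}}(1+o(1))$ and the same form for the mean-excess integral with an extra factor $x^{1-\alpha}/\alpha$. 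Solving the Bellman relation asymptotically yields
\[
s^{*}(\gamma)=\bigl(\log\tfrac{1}{1-\gamma}\bigr)^{1/\alpha}\bigl(1+o(1)\bigr),\qquad \mathbb{P}_{0}[X>s^{*}(\gamma)]=\alpha\,\bigl(s^{*}(\gamma)\bigr)^{\alpha}\,(1-\gamma)\,(1+o(1)).
\]

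I then expand the tail at the perturbed threshold. Since $\epsilon_{\rm critical}(\gamma)=o(s^{*}(\gamma))$, the relevant perturbation satisfies $\epsilon(\gamma)=o(s^{*}(\gamma))$ in either regime (in the overestimation regime one can truncate at, say, $2\epsilon_{\rm critical}$ without loss because larger $\epsilon$ only makes $p$ smaller). Using the exact density form and a Taylor expansion of $(s^{*}+\epsilon)^{\alpha}=(s^{*})^{\alpha}+\alpha(s^{*})^{\alpha-1}\epsilon+O((s^{*})^{\alpha-2}\epsilon^{2})$, together with the same tail lemma,
\[
\log\mathbb{P}_{0}[X>s^{*}+\epsilon]-\log\mathbb{P}_{0}[X>s^{*}]=-\alpha\,(s^{*})^{\alpha-1}\,\epsilon(\gamma)\,(1+o(1)) .
\]
Combining this with the Bellman-based identification of $\mathbb{P}_{0}[X>s^{*}]$ in terms of $1-\gamma$,
\[
\log\frac{p(\gamma)}{1-\gamma}=\alpha\log s^{*}(\gamma)\,(1+o(1))-\alpha\,(s^{*}(\gamma))^{\alpha-1}\,\epsilon(\gamma)\,(1+o(1))+o(1).
\]
Substituting $s^{*}(\gamma)\sim(\log\tfrac{1}{1-\gamma})^{1/\alpha}$ turns this into
\[
\log\frac{p(\gamma)}{1-\gamma}=\log\log\tfrac{1}{1-\gamma}\,(1+o(1))-\alpha\bigl(\log\tfrac{1}{1-\gamma}\bigr)^{1-1/\alpha}\epsilon(\gamma)\,(1+o(1)),
\]
and the ratio $\epsilon(\gamma)/\epsilon_{\rm critical}(\gamma)$ is precisely the quotient of the two terms on the right. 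Thus $\liminf \epsilon/\epsilon_{\rm critical}>1$ forces the negative term to dominate, giving $p(\gamma)=o(1-\gamma)$ and hence $\mathbb{E}[\tau^{*}]=\omega(1/(1-\gamma))$; conversely $\limsup \epsilon/\epsilon_{\rm critical}<1$ forces the positive term to dominate, giving $\mathbb{E}[\tau^{*}]=o(1/(1-\gamma))$.

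\paragraph{Main obstacle.} The delicate part is not the final algebra but getting the tail expansions tight enough. In particular, the correction $\alpha\log s^{*}(\gamma)\asymp \log\log\tfrac{1}{1-\gamma}$ is what selects the exact $\log\log/\log^{1-1/\alpha}$ shape of $\epsilon_{\rm critical}(\gamma)$, so the Mills-ratio-type estimate for $\mathbb{P}_{0}[X>x]$ must be sharp up to a multiplicative $1+o(1)$ (not merely up to a polynomial factor), and the Bellman inversion step must preserve this level of precision. This is exactly what the paper's tail-probability lemma is designed to supply; once it is invoked, the rest of the proof is the forward calculation sketched above.
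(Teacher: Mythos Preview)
Your overall strategy coincides with the paper's: reduce to comparing the tail probability $p(\gamma)=P_{\theta}\bigl(X>S^{*}(\theta+\epsilon(\gamma),\gamma)\bigr)$ against $1-\gamma$, expand $\log p(\gamma)$ via a Mills-ratio estimate and a first-order Taylor step in the exponent, identify $P_{\theta}(S^{*}(\theta,\gamma))\sim\alpha(s^{*})^{\alpha}(1-\gamma)$ from the Bellman relation, and read off the dichotomy from whether $\epsilon$ lies above or below $\epsilon_{\rm critical}$. The final display and the phase-transition reading are both correct.

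There is, however, one genuine error at the very start. The identity $S^{*}(F_{\theta+\epsilon(\gamma)},\gamma)=\theta+\epsilon(\gamma)+s^{*}(\gamma)$ with $s^{*}(\gamma)=S^{*}(F_{0},\gamma)$ is \emph{false}. The Bellman fixed-point equation $S^{*}=\gamma\,\mathbb{E}[\max(X,S^{*})]$ is not translation-equivariant: replacing $X$ by $X+c$ and testing $S^{*}+c$ gives $S^{*}+c=\gamma c+\gamma\,\mathbb{E}[\max(X,S^{*})]=\gamma c+S^{*}$, which fails for $c\neq 0$. Discounting breaks the shift symmetry. This is precisely why the paper spends Appendix~\ref{uniformapprox} proving Lemma~\ref{lemmalocationshift} ($\partial S^{*}/\partial\theta\to 1$ uniformly) and Corollary~\ref{coro5.6}, obtaining only the approximate relation
\[
(1-c_{0})\,\epsilon(\gamma)\;<\;S^{*}(\theta_{0}+\epsilon(\gamma),\gamma)-S^{*}(\theta_{0},\gamma)\;<\;(1+c_{0})\,\epsilon(\gamma)
\]
for $\gamma$ near $1$. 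The paper then works directly with $P_{\theta_{0}}$ and $S^{*}(\theta_{0},\gamma)$ (never reducing to $\theta=0$), plugging this $\Delta(\gamma)\approx\epsilon(\gamma)$ into the tail-ratio Lemma~\ref{lemma6.3}. If you want to keep your reduction to $\theta=0$, you must invoke Corollary~\ref{coro5.6} (with $\theta_{0}=0$) and check that the resulting additive error in the threshold, of order $(\theta+\epsilon)/\log\frac{1}{1-\gamma}$, contributes only $o(1)$ to $\log\bigl(p(\gamma)/(1-\gamma)\bigr)$; this holds, but it is an extra calculation you have not done. Once that gap is closed, your computation and the paper's Appendix~\ref{proofprop2} argument are the same.
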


\paragraph{Greater robustness with heavier tails.}
Paralleling our discussion in Section \ref{sec:main}, $\epsilon_{\rm critical}(\gamma)$ increases with $\alpha$, suggesting greater robustness for heavier tailed distributions.  In fact, for heavy-tailed distribution ($\alpha \leq 1$), we find $\epsilon_{\rm critical}(\gamma) \to \infty$ as $\gamma \to 1$, implying robustness to increasingly large estimation errors. This increasing robustness is what drives Theorem \ref{estimationthm_onesample}. 

A rough explanation  of this phenomenon is as follows.  Recall the form of the probability density function, $f_{\theta}(x-\theta) = C_0 {\rm exp}(-x^\alpha)$. Lemma \ref{newsandwich} shows that the optimal threshold behaves roughly as $S^*(\theta, \gamma) \approx \theta + \log\left( 1/[1-\gamma] \right)^{1/\alpha}$. A rough idea of this effect  arises  from plugging this approximation on the right hand side into the PDF, giving  $f_{\theta}( S^*(\theta, \gamma) -\theta ) \approx C_0 (1-\gamma)$. This  suggests that the optimal threshold can be found in the tail of distribution where  with density is  roughly proportional to $(1-\gamma)$. When the location parameter undergoes a positive  perturbation by a some small $\epsilon>0$, the threshold employed by the plug-in policy also increases by roughly $\epsilon$. Such a perturbation has a much more significant impact on light tailed distributions. Here the density $f_{\theta}(x)$ decays very rapidly around $x\approx S^*(\theta, \gamma)$, meaning that it may be \emph{extremely unlikely} to observe value that are $\epsilon$ larger than the optimal threshold. Heavier tailed distributions are flatter around their optimal threshold so the same magnitude of overestimation may not have as dramatic an effect. Flatter tails make performance less sensitive to miss-estimation of the stopping threshold.  The case where $\alpha<1$ is precisely when the PDF is strictly log-convex, so that the logarithm of the PDF becomes increasingly flat near the optimal threshold as the horizon grows.  


\paragraph{Loss function interpretation.} Studying the regret of the plug-in policy reduces, effectively, to studying the estimation error of the location parameter in a custom loss function that captures the downstream impact of  estimations errors on actions.   To describe this more carefully, write $\ell_{\gamma}(\theta, \hat{\theta}) = \mathcal{R}\left( \theta, \gamma,  \tau^*(\hat{\theta}) \right).$ A loss function like this has rich properties that are not seen in common instances like the squared or log loss. First, it is natural to consider $\gamma$ not as a fixed parameter, but as one that grows with the sample size, capturing how data requirements scale with the problem's effective time horizon. In the limit as $\gamma \to 1$, $\ell_{\gamma}(\theta, \hat{\theta})$ becomes incredibly asymmetric in its second argument: one can show that for small $\epsilon>0$, $\ell_{\gamma}(\theta, \theta -\epsilon) = O(\epsilon)$ as $\gamma \to 1$ while $\ell_{\gamma}(\theta, \theta +\epsilon) \to 1$ as $\gamma \to 1$. This characteristic, where a negligible but constant overestimation leads to a catastrophic loss for the decision-maker already suggests that the sample size must grow with the problem's time horizon. Of course, estimating the location parameter is more difficult when the distribution is heavy tailed ($\alpha <1$), but the loss function becomes less sensitive in its second argument in that case. This phenomenon highlights the interaction between decision-making (stopping)  and statistical inference; namely,  the loss function is derived endogenously and is not hypothesized upfront.

\paragraph{Outline of the proofs.}
We provide an overview of analysis and proofs of Theorem \ref{thmpositive} and Theorem \ref{thmnegative}. At a high level, the analysis of these optimal stopping problems as the discount factor tends to 1 revolves around analyzing tail behavior of the class of distributions. The challenging factor is that the optimal threshold policy $S^*(\theta, \gamma)$ is only defined implicitly as a solution to Bellman's fixed point equation underlying (\eqref{problem1.1}). To develop, for example, sharp asymptotics regarding the probability of stopping under a perturbed plug-in policy (see Prop~\ref{prop:stopping_prob}), requires studying the behavior of $1-F_{\theta}(S^*(\theta + \epsilon(\gamma), \gamma))$ as $\gamma \to 1$. This is the the inverse CDF evaluated at the plug-in stopping threshold $S^*(\theta + \epsilon(\gamma), \gamma))$ under the perturbation $\epsilon(\gamma)$. This in itself requires sharp asymptotics for the threshold $S^*(\theta + \epsilon(\gamma), \gamma))$ under growing discount factor and  growing perturbations. These asymptotic considerations are worked out rigorously  in the Appendix.

\section{Numerical Illustration}
\label{numericalsec}
In this section, we present a simple numerical illustration. Our presentation focuses on three common exponential family distributions that closely mimic the tail behavior of \eqref{distribution} with different choices of $\alpha$: the Weibull  distribution $(\alpha =1/2)$;  the exponential distribution ($\alpha=1$);  and the Normal distribution ($\alpha =2)$. Figure \ref{fig:1}, compares the expected discounted reward generated by the plug-in policies against an optimal policy that knows the distribution's mean. Under the heavier-tailed Weibull distribution, the plug-in policy that estimates the location parameter based on $N=1$ samples provides essentially perfect performance. Mirroring our theory, more samples are required to have comparatively strong performance under the exponential or normal distributions. For the normal experiment, the line labeled ``nearly-horizon independent number of samples'' refers to the sample size $N = \left(\log{\frac{1}{1-\gamma}}\right)^2$ that serves as an upper bound on the scaling of the critical samples size in \eqref{eq:N_critical}.

\begin{figure}
    \centering
    \includegraphics[width=5cm]{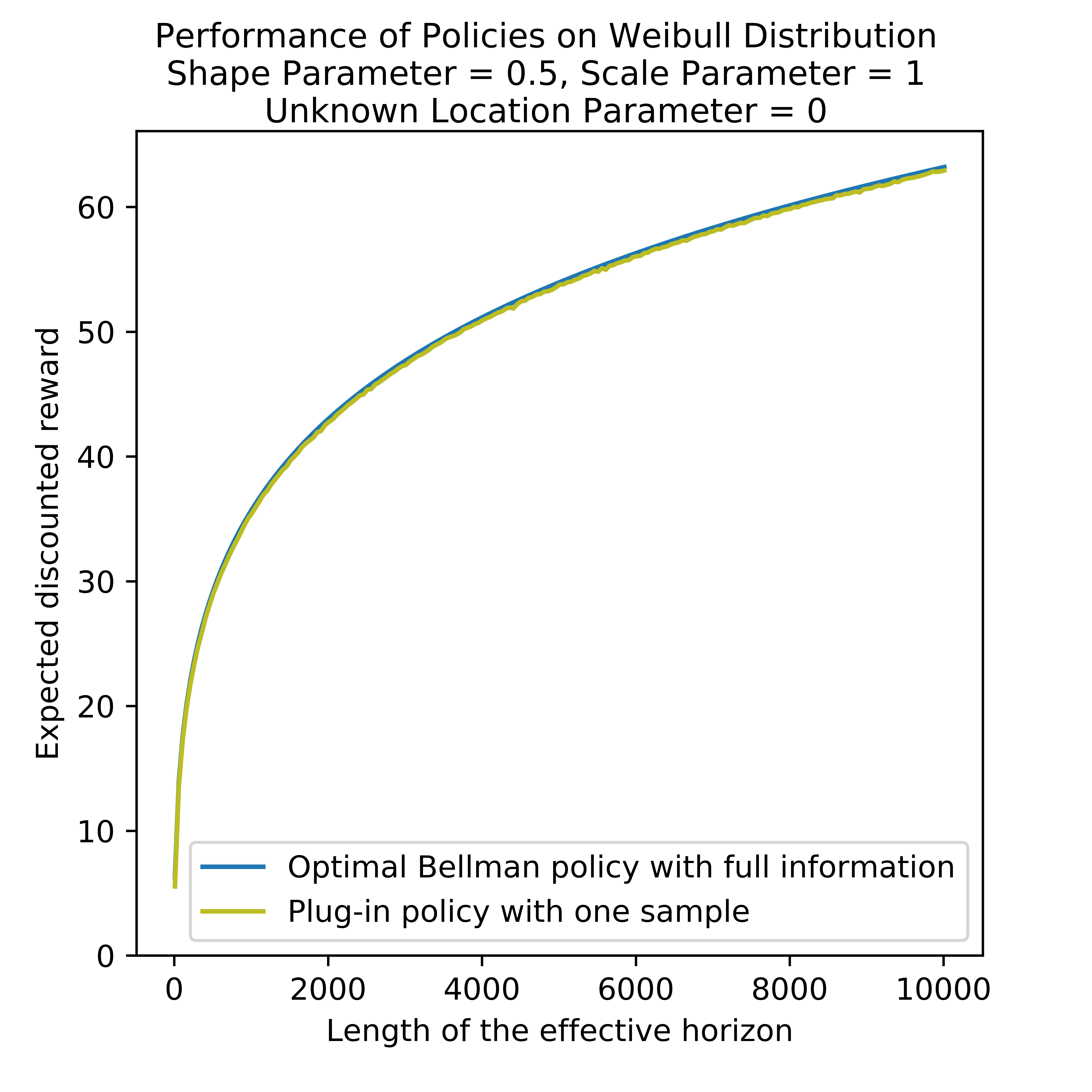}
    \includegraphics[width=5cm]{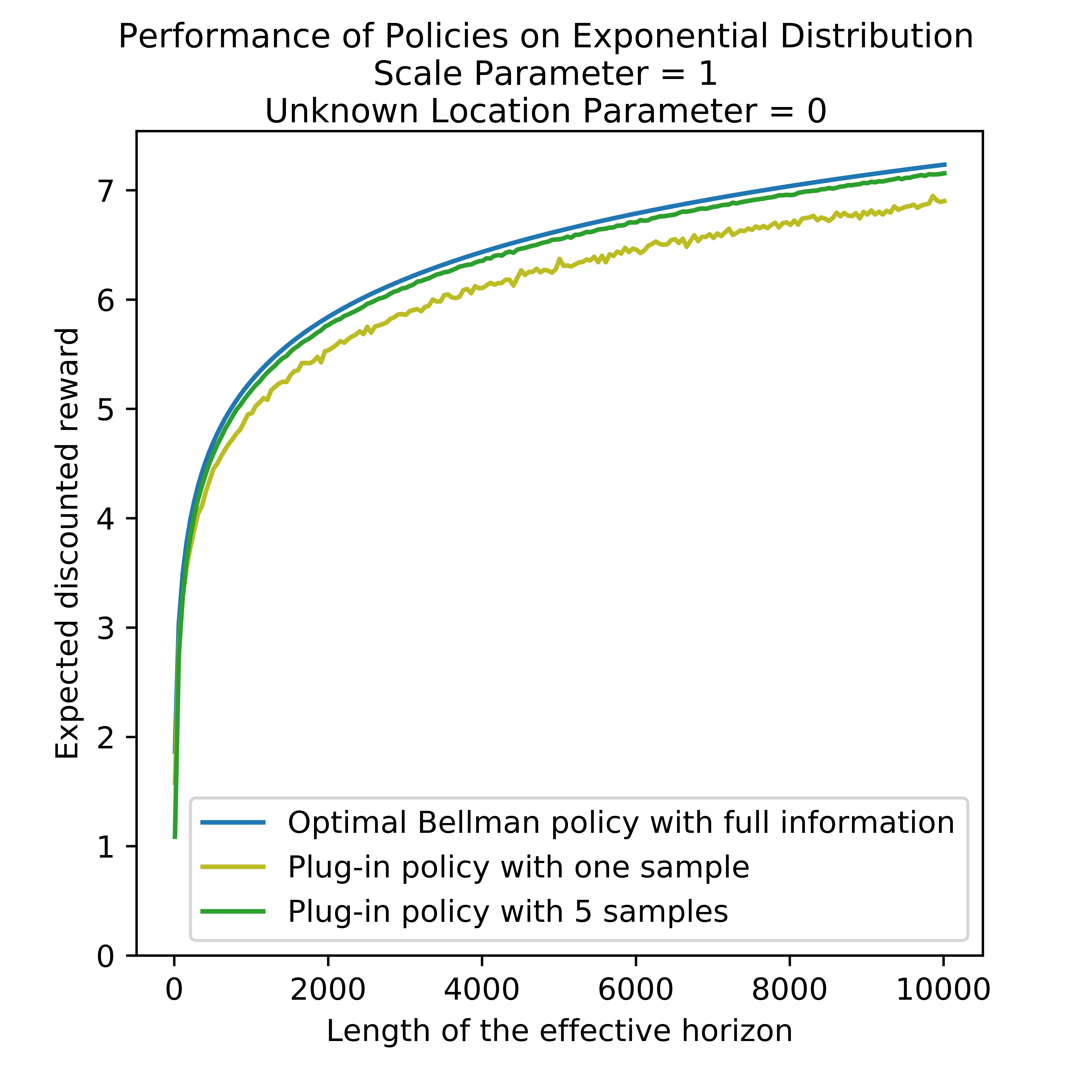}
    \includegraphics[width=5cm]{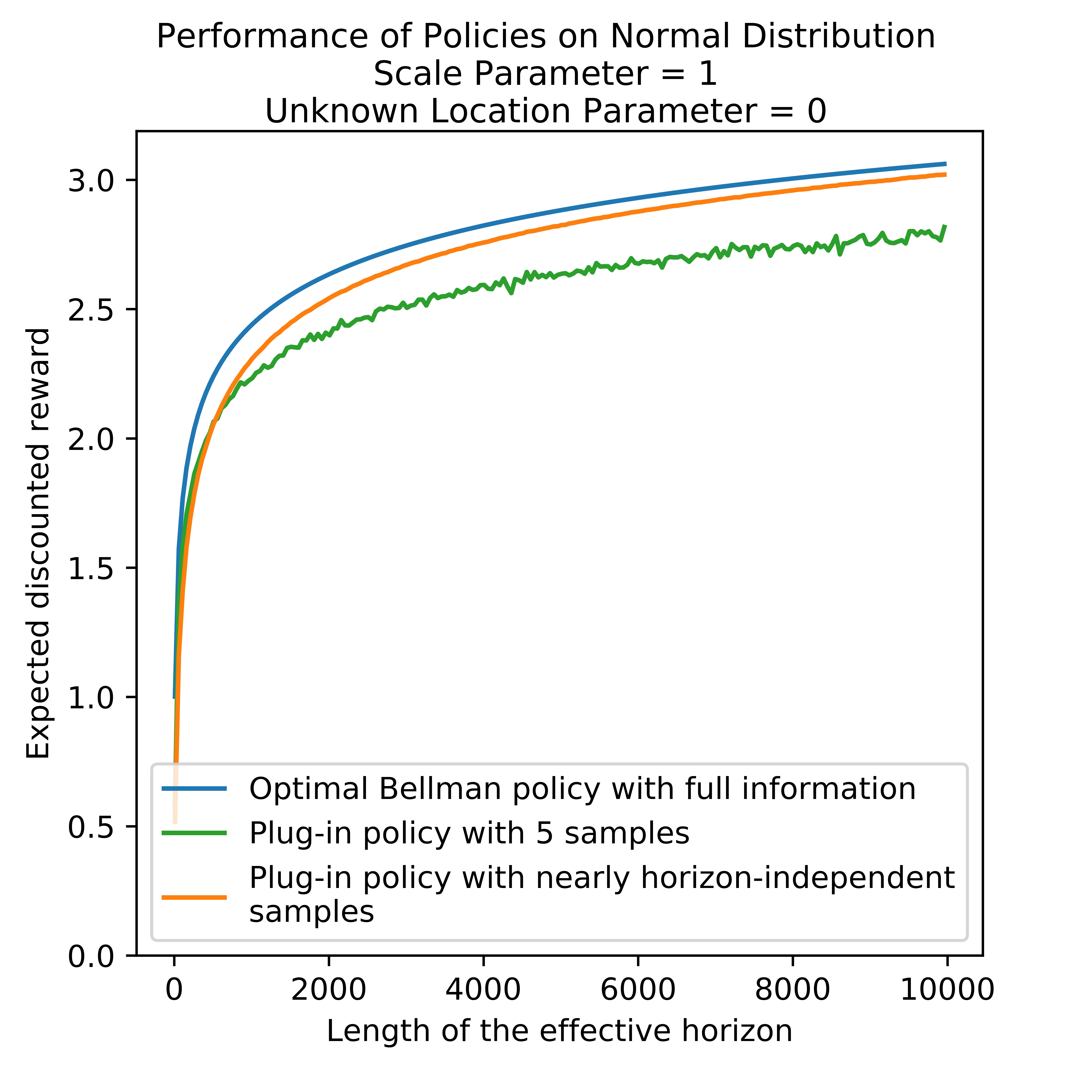}
    \caption{Performance of explore-then-plug-in policies across problems with varying discount factor $\gamma$. The horizontal axis displays the effective horizon $(1-\gamma)^{-1}$.  }
    \label{fig:1}
\end{figure}

Figure \ref{fig:2} studies the performance of a threshold policy $\tau^*(\theta + \epsilon)$ computed with respect to a location parameter that deviates from the truth of $\theta$ by some given perturbation level. Notice that different magnitudes of mis-estimation are reasonably likely to manifest under the Weibull, exponential, and normal distributions, respectively.  To allow for meaningful comparisons, we set $\epsilon=z\sigma$ where $\sigma$ is the standard deviation of the distribution and plot results as the multiplier $z$ varies. The standard error of the sample mean is proportional to $\sigma$ in each case. For the heavier-tailed Weibull distribution, performance is quite insensitive to perturbation of the location parameter, even as the horizon grows. For the normal, performance becomes increasingly sensitive to overestimation as the effective time horizon grows. Asymmetry in the curves also becomes more pronounced as the time horizon grows; slight overestimation results in significant  performance loss but underestimation, in contrast,  has only a mild impact. This plot seems to suggest that effective optimal stopping policies must carefully guard against mis-estimation. Given this intuition, it may be surprising that the naive plug-in policy produces competitive performance while collecting a nearly horizon independent number of samples.

\begin{figure}
    \centering
    \includegraphics[width=5cm]{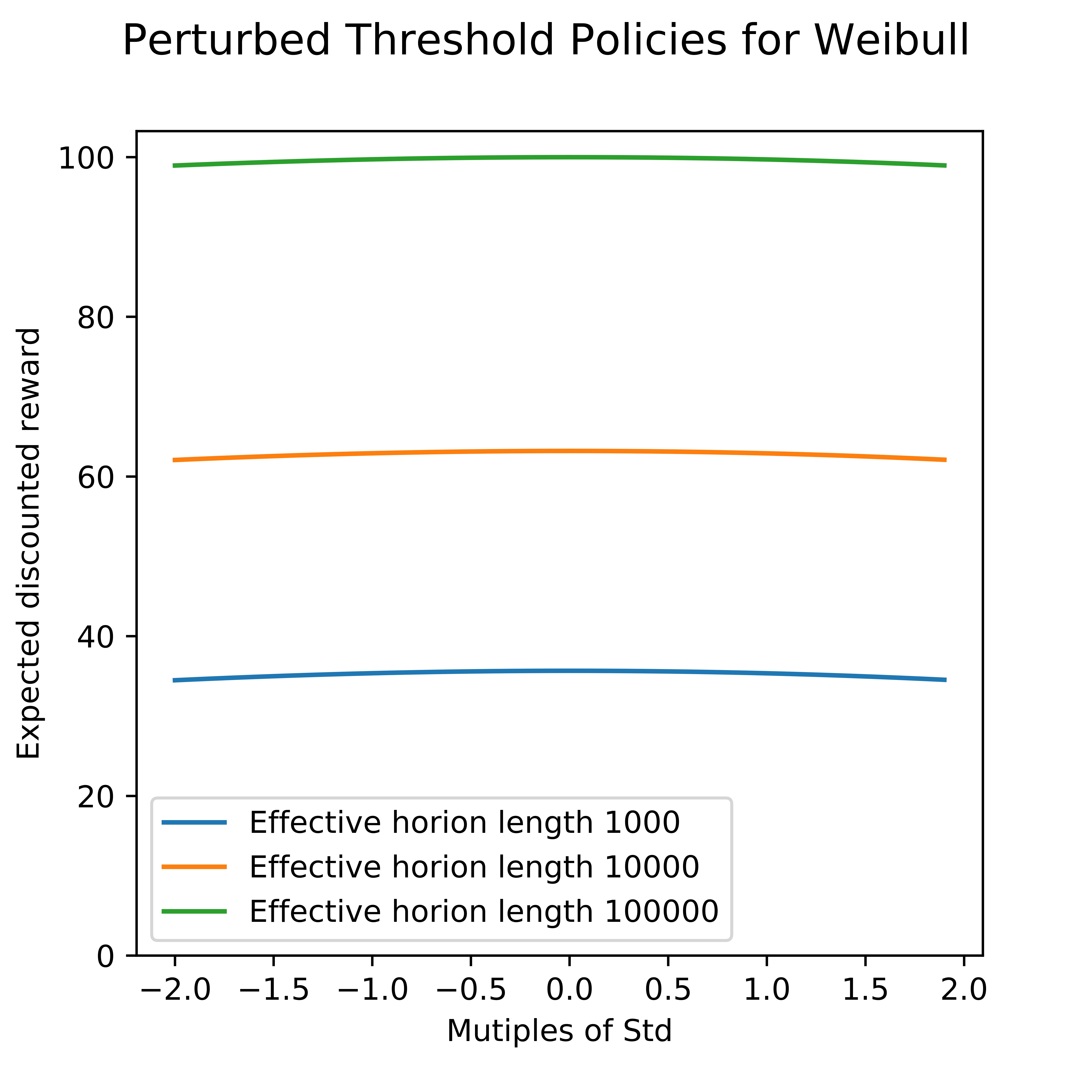}
    \includegraphics[width=5cm]{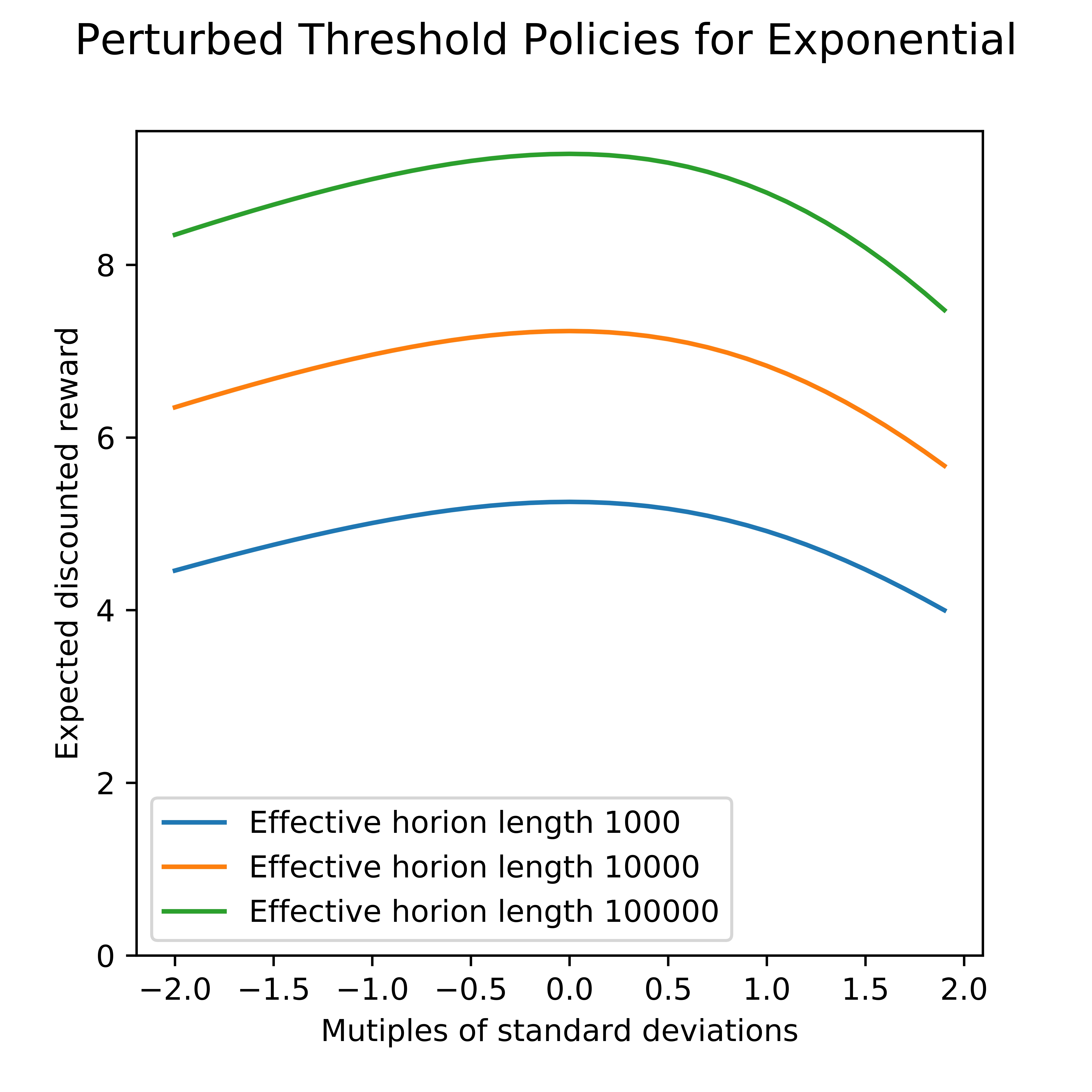}
    \includegraphics[width=5cm]{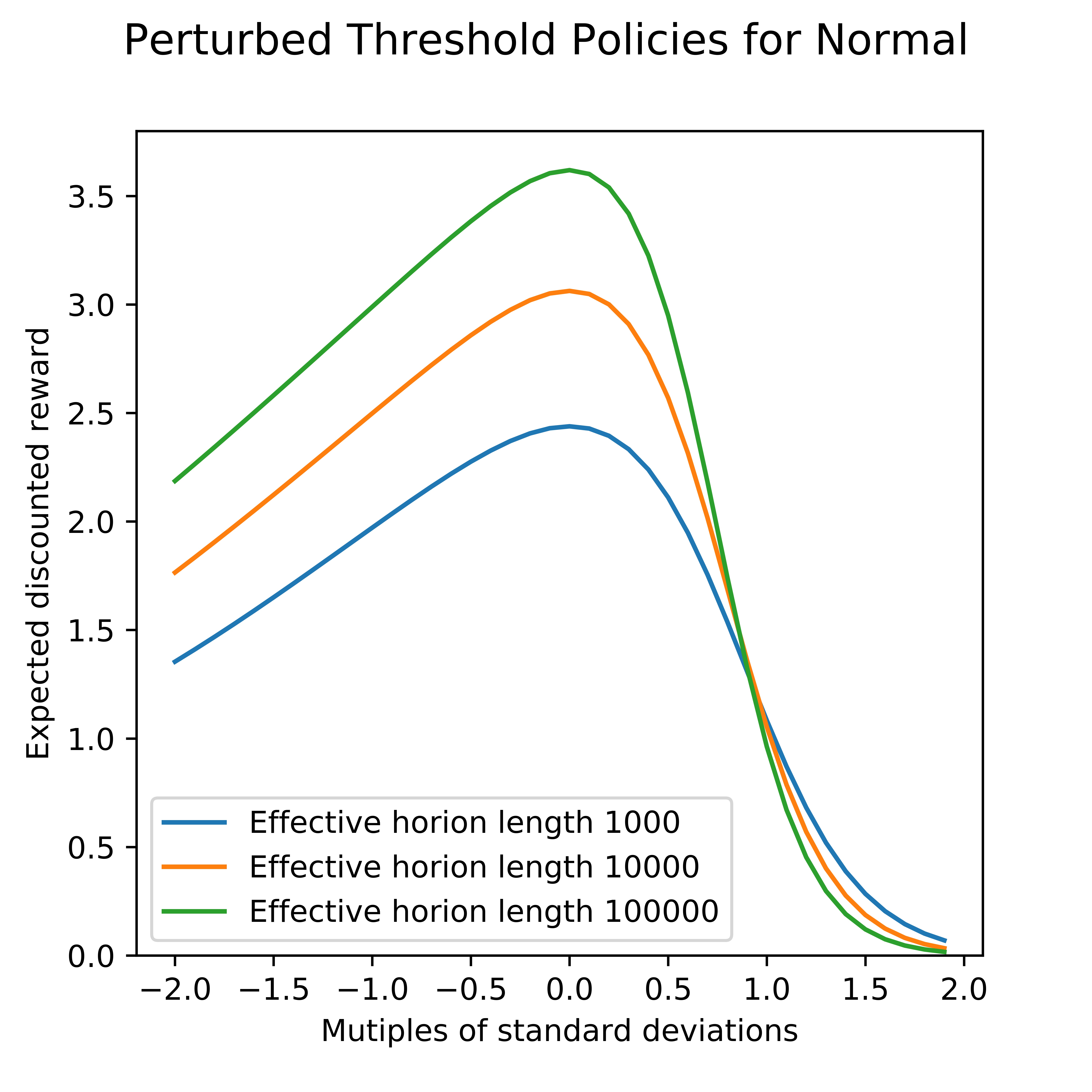}
    \caption{Performance of the threshold policy $\tau^*(\theta + z \sigma)$ computed with a location parameter $\theta$ that deviates from the truth a multiple $z$ times the standard deviation $\sigma$ of the offer distribution. }
    \label{fig:2}
\end{figure}

\section{Discussion and Open Problems}

Throughout the paper, our goal was  to state theoretical results that give a crisp illustrations of these findings, and by that optimizing the presentation to facilitate  key takeaways rather than striving for generality.
One limitation of this approach  is the rather specific choice of the parametric family of densities in \eqref{distribution} governing the optimal stopping problem.  While knowledge of the scale parameter was assumed ex ante, preliminary numerical experiments suggest that the plug-in policy performs well with very little data even when the scale parameter is unknown a priori. Confirming this rigorously is an open question. Our results may not extend as gracefully to completely non-parametric formulations -- where typical observations provide very little information about the tail of the offer distribution. (That point is the main focus in \cite{goldenshluger2017optimal}.) 

Our interest in this case-study of optimal stopping is partially driven by that problem per se, and open questions that pertain, but more broadly  in learning challenges in  sequential decision problems. While generic finite state and action Markov decision processes have served as the common baseline in theoretical reinforcement learning, features of the optimal stopping problem may translate more naturally to other core problems in operations research, like inventory control, queuing admission control, dynamic pricing, and sequential auctions. In particular, in these problems the search for effective policies often reduces to carefully adjusting thresholds (much like the main characteristic of the full information  policy in optimal stopping).  In those contexts, the decision-maker may use data to estimate particular distributions, like the distribution of arrivals in a queuing system, and these distributions are linked to state transitions through known structural equations. In such settings,  more passive forms of exploration may suffice, in contrast to the temporally extended periods of active exploration required to reach poorly understood states in, say,  a tabular reinforcement learning problem. Hence, constructing customized solutions to structured problems remains an interesting area of research in reinforcement learning.

\bibliography{ref}
\bibliographystyle{plainnat}

\newpage 
\appendix

\section{Characterizing the Exponential-Decay Distribution}
This section provides some technical results on the approximation of tailed probability, hazard rate and other metric characterizing the level of concentration of the exponential-decay distribution \eqref{distribution}.
\label{characterizingexp}


Define the inverse CDF as
\[
P_{\theta}(S) = 1-F_{\theta}(S), 
\]
the conditional expectation larger than $S$ as
\[
\mu_{\theta}(S) = \mathbb{E}_{\theta}[X|X>S],
\]
the hazard rate as
\[
h_{\theta}(S) = \frac{f_{\theta}(S)}{P_{\theta}(S)}
\]
and $$g_\theta(S) := \frac{P_{\theta}(S)}{\int_S^{+\infty} P_{\theta}(x) dx}.$$

\begin{restatable}[Bounds on Tailed Probability]{lemma}{Bounds on Tailed Probability}
\label{Bounds on Tailed Probability}
Consider any $\theta \in \mathbb{R}$  and $S > \theta + 1$.  
\begin{itemize}
    \item If $\alpha \geq 1,$ then 
    $$C_0\cdot \left(\frac{1}{\alpha (S-\theta)^{\alpha-1}} - \frac{\alpha-1}{\alpha^2 (S-\theta)^{2\alpha-1}}\right) e^{-(S-\theta)^\alpha} \leq P_{\theta}(S)\leq \frac{C_0}{\alpha (S-\theta)^{\alpha-1}}e^{-(S-\theta)^\alpha}.$$
    \item If $\alpha \in [1/2, 1)$, then
    $$\frac{C_0}{\alpha (S-\theta)^{\alpha-1}}e^{-(S-\theta)^\alpha} \leq P_{\theta}(S)\leq C_0\cdot \left(\frac{1}{\alpha (S-\theta)^{\alpha-1}} + \frac{1-\alpha}{\alpha^2 (S-\theta)^{2\alpha-1}}\right) e^{-(S-\theta)^\alpha}.$$
\end{itemize}
\end{restatable}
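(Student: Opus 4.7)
The plan is to reduce to a pure integral in a single variable and then control it by two successive integrations by parts. First, a change of variables $y = x-\theta$ turns $P_{\theta}(S)$ into $C_0 \int_{s}^{\infty} e^{-y^{\alpha}}\, dy$ where $s := S-\theta > 1$, so it suffices to sandwich the integral $I(s) := \int_s^\infty e^{-y^\alpha}\,dy$ between the claimed two-term expressions (without the $C_0$ prefactor).

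The key observation is that $e^{-y^{\alpha}} = -\tfrac{1}{\alpha y^{\alpha-1}} \tfrac{d}{dy} e^{-y^{\alpha}}$, so integrating by parts once yields
\[
I(s) \;=\; \frac{1}{\alpha s^{\alpha-1}}\, e^{-s^{\alpha}} \;-\; \frac{\alpha-1}{\alpha}\int_{s}^{\infty} \frac{e^{-y^{\alpha}}}{y^{\alpha}}\, dy.
\]
The remainder integral is nonnegative, and its coefficient $(\alpha-1)/\alpha$ changes sign at $\alpha = 1$. So for $\alpha \geq 1$ the remainder is subtracted, immediately giving the upper bound $I(s) \leq \tfrac{1}{\alpha s^{\alpha-1}}e^{-s^{\alpha}}$, while for $\alpha \in [1/2,1)$ the remainder is added, immediately giving the lower bound $I(s) \geq \tfrac{1}{\alpha s^{\alpha-1}}e^{-s^{\alpha}}$. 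That handles one direction in each case.

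For the opposite direction, I would integrate by parts a second time on the remainder integral, again using $e^{-y^{\alpha}} = -\tfrac{1}{\alpha y^{\alpha-1}} \tfrac{d}{dy} e^{-y^{\alpha}}$. A direct computation will produce a boundary term equal to $\tfrac{\alpha-1}{\alpha^{2} s^{2\alpha-1}} e^{-s^{\alpha}}$ plus a further remainder proportional to $(\alpha-1)(2\alpha-1) \int_s^\infty y^{-2\alpha} e^{-y^{\alpha}}\,dy$. This is where the hypothesis $\alpha \geq 1/2$ is used crucially: it guarantees $2\alpha-1 \geq 0$, so the sign of $(\alpha-1)(2\alpha-1)$ is determined solely by the sign of $\alpha-1$. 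For $\alpha \geq 1$ both factors are nonnegative, so the further remainder has the same sign as in the first step and gets discarded to give the claimed lower bound on $I(s)$; for $\alpha \in [1/2, 1)$ the factors have opposite signs, the further remainder has the opposite sign, and discarding it yields the claimed upper bound.

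The proof is thus essentially bookkeeping: the main thing to check carefully is the algebra of the boundary term in the second integration by parts (computing $\tfrac{d}{dy}\bigl(-\tfrac{1}{\alpha y^{2\alpha-1}}\bigr) = \tfrac{2\alpha-1}{\alpha y^{2\alpha}}$) and then tracking the signs of the $(\alpha-1)$ and $(2\alpha-1)$ factors in each case. I do not expect any real obstacle; the role of $S > \theta + 1$ (i.e.\ $s > 1$) is only to make the power $y^{-\alpha}$ and $y^{-(2\alpha-1)}$ terms well-behaved and to ensure the algebraic manipulations are valid.
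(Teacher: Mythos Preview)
Your proposal is correct and essentially the same as the paper's proof: both arguments amount to two successive integrations by parts, keeping the boundary terms and discarding the sign-definite remainder (whose sign is controlled by $(\alpha-1)$ and $(\alpha-1)(2\alpha-1)$, the latter requiring $\alpha \geq 1/2$). The only cosmetic difference is that the paper first substitutes $y=(x-\theta)^{\alpha}$ so the exponential becomes $e^{-y}$ and then bounds the polynomial prefactor (equivalently, recognizes an exact antiderivative), whereas you integrate by parts directly on $\int_s^{\infty} e^{-y^{\alpha}}\,dy$; the resulting inequalities are identical.
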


\begin{proof}
For $\alpha \geq 1,$ we have
$$\begin{aligned}
P_{\theta}(S) &= \int_{(S-\theta)^{\alpha}}^{+\infty} \frac{C_0}{\alpha y^{\frac{\alpha-1}{\alpha}}} e^{-y} dy
\\ &\leq \frac{C_0}{\alpha (S-\theta)^{\alpha-1}} \int_{(S-\theta)^{\alpha}}^{+\infty} e^{-y} dy
\\ &= \frac{C_0}{\alpha (S-\theta)^{\alpha-1}}e^{-(S-\theta)^\alpha}.
\end{aligned}$$
On the other hand,
$$\begin{aligned}
P_{\theta}(S) &\geq C_0\cdot \int_{(S-\theta)^{\alpha}}^{+\infty} \left(\frac{1}{\alpha y^{\frac{\alpha-1}{\alpha}}} - \frac{(\alpha-1)(2\alpha-1)}{\alpha^3 y^{\frac{3\alpha -1}{\alpha}}}\right) e^{-y} dy
\\ &= C_0\cdot \left[\frac{\alpha-1}{\alpha^2 y^{\frac{2\alpha-1}{\alpha}}} - \frac{1}{\alpha y^{\frac{\alpha-1}{\alpha}}}\right]^{+\infty}_{(S-\theta)^\alpha} 
\\ &= C_0\cdot \left(\frac{1}{\alpha (S-\theta)^{\alpha-1}} - \frac{\alpha-1}{\alpha^2 (S-\theta)^{2\alpha-1}}\right) e^{-(S-\theta)^\alpha}.
\end{aligned}$$

For $\frac{1}{2} \leq \alpha <1,$ we can obtain the same type of result by changing the directions of two inequalities and swapping the upper and lower bounds.
\end{proof}

Next we consider a metric of a distribution: $$g_\theta(S) = \frac{P_{\theta}(S)}{\int_S^{+\infty} P_{\theta}(x) dx}.$$
Similar to hazard rate, $g_\theta(S)$ also characterizes the decay rate of the tail. This can be formalized by Lemma \ref{lemma3.1}.
\begin{restatable}{lemma}{lemma3.1}
\label{lemma3.1}
For any $S\in \mathbb{R}$,
$$g_{\theta}(S) = \frac{1}{\mu_{\theta}(S)-S}.$$
\end{restatable}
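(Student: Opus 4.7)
The plan is to recognize that the identity $g_\theta(S) = 1/(\mu_\theta(S) - S)$ is equivalent, after rearrangement, to the statement
\[
\int_S^{+\infty} P_\theta(x)\, dx \;=\; P_\theta(S)\cdot \bigl(\mu_\theta(S) - S\bigr),
\]
so the whole task reduces to establishing this single equality. Since $\mu_\theta(S) - S = \mathbb{E}_\theta[X - S \mid X > S]$, and $\mathbb{P}_\theta[X > S] = P_\theta(S)$, the right-hand side is just $\mathbb{E}_\theta[(X-S)\mathbf{1}\{X > S\}]$. So the claim collapses to the clean integral identity
\[
\mathbb{E}_\theta\!\left[(X-S)\mathbf{1}\{X > S\}\right] \;=\; \int_S^{+\infty} P_\theta(x)\, dx.
\]

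To prove this identity I would simply apply Fubini (or integration by parts) to swap the order of integration. Writing $x - S = \int_S^x dt$ for $x > S$, I get
\[
\int_S^{+\infty} (x - S)\, f_\theta(x)\, dx \;=\; \int_S^{+\infty}\!\!\int_S^{x} dt\, f_\theta(x)\, dx \;=\; \int_S^{+\infty}\!\!\int_t^{+\infty} f_\theta(x)\, dx\, dt \;=\; \int_S^{+\infty} P_\theta(t)\, dt,
\]
which is exactly what is needed. Dividing both sides by $P_\theta(S)$ yields $\mu_\theta(S) - S = \int_S^{+\infty} P_\theta(x)\,dx / P_\theta(S)$, and taking reciprocals gives $g_\theta(S) = 1/(\mu_\theta(S) - S)$.

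There is essentially no obstacle: the only thing to check is that the Fubini swap is justified, which holds because the integrand $(x-S) f_\theta(x)\mathbf{1}\{x > S\}$ is nonnegative, and that $\mu_\theta(S)$ is finite so that the right-hand side makes sense as a real number, which follows from the integrability of $|X|$ under the exponential-decay density $f_\theta$ defined in \eqref{distribution}. One implicit nondegeneracy to mention is that $P_\theta(S) > 0$ for every $S \in \mathbb{R}$ (since $f_\theta$ has full support), so the division is legitimate and $g_\theta(S)$ and $\mu_\theta(S)$ are both well-defined for all $S$.
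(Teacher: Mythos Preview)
Your proof is correct and takes essentially the same approach as the paper: both arguments establish $\int_S^{\infty} P_\theta(x)\,dx = P_\theta(S)(\mu_\theta(S)-S)$ via Fubini/Tonelli to swap the order of integration. You actually supply slightly more justification (nonnegativity for Tonelli, finiteness of $\mu_\theta(S)$, and $P_\theta(S)>0$) than the paper does.
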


\begin{proof}
Note that for any $S\in \mathbb{R}$,
$$\begin{aligned}
\int_S^{\infty} P_{\theta}(x) dx &= \int_S^{\infty} \left(\int_x^{\infty} f_{\theta}(t) dt\right) dx
\\ &= \int_S^{\infty} \left(\int_S^{t} f_{\theta}(t) dx\right) dt
\\ &= \int_S^{\infty} (t-S)f_{\theta}(t) dt
\\ &= P_{\theta}(S)\cdot \left(\mu_{\theta}(S) -S\right).
\end{aligned}$$
\end{proof}

Now we can state the asymptotic approximation results on $h_\theta(S)$ and $g_\theta(S)$. Note that Lemma \ref{uniform g and h} holds uniformly for all location parameters $\theta \in \mathbb{R}$.
\begin{restatable}{lemma}{uniform g and h}
\label{uniform g and h}
For any $\theta \in \mathbb{R}$, as $S \rightarrow \infty$,
$$h_{\theta}(S) \sim \alpha (S-\theta)^{\alpha-1},$$
$$g_{\theta}(S) \sim \alpha (S-\theta)^{\alpha-1}.$$
Moreover, as $T \rightarrow +\infty$,
$$\sup_{\theta \in \mathbb{R}}{\left| \frac{h_{\theta}(T + \theta)}{\alpha T^{\alpha-1}}-1\right|} \longrightarrow 0,$$ $$\sup_{\theta \in \mathbb{R}}{\left| \frac{g_{\theta}(T + \theta)}{\alpha T^{\alpha-1}}-1\right|} \longrightarrow 0.$$
\end{restatable}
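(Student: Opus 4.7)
First I would observe that the density $f_\theta$ has a pure location-shift form, so the change of variables $y = x - \theta$ makes the shifted tail probability $P_\theta(T+\theta) = C_0 \int_T^\infty e^{-y^\alpha}\,dy$, the shifted density $f_\theta(T+\theta) = C_0 e^{-T^\alpha}$, and the shifted iterated tail $\int_{T+\theta}^\infty P_\theta(x)\,dx$ all independent of $\theta$. Consequently $h_\theta(T+\theta) = h_0(T)$ and $g_\theta(T+\theta) = g_0(T)$, which immediately collapses the supremum over $\theta$ in the uniform statement to the single corresponding quantity at $\theta = 0$. The uniform claims therefore reduce automatically to pointwise asymptotics for $h_0(T)$ and $g_0(T)$ as $T \to \infty$.

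For the hazard rate $h_0(T) = f_0(T)/P_0(T)$, the plan is to divide $f_0(T) = C_0 e^{-T^\alpha}$ by the two-sided bounds on $P_0(T)$ supplied by Lemma \ref{Bounds on Tailed Probability}. In the case $\alpha \geq 1$ this immediately yields
\[
\alpha T^{\alpha-1} \;\le\; h_0(T) \;\le\; \frac{\alpha T^{\alpha-1}}{1 - \frac{\alpha-1}{\alpha T^{\alpha}}},
\]
and a squeeze gives $h_0(T)/(\alpha T^{\alpha-1}) \to 1$. The case $\alpha \in [1/2,1)$ is handled symmetrically, using the reversed inequalities in the same lemma.

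For $g_0(T) = P_0(T)/\int_T^\infty P_0(x)\,dx$, I would apply L'Hopital's rule to the $0/0$ indeterminate ratio
\[
\frac{\int_T^\infty P_0(x)\,dx}{P_0(T)/(\alpha T^{\alpha-1})}
\]
as $T \to \infty$. Differentiating the numerator gives $-P_0(T)$, and a short calculation reduces the quotient of derivatives to $\alpha T^{\alpha-1}/\bigl(h_0(T) + (\alpha-1)/T\bigr)$. Invoking the asymptotic $h_0(T) \sim \alpha T^{\alpha-1}$ from the previous step together with the observation $\frac{(\alpha-1)/T}{\alpha T^{\alpha-1}} = \frac{\alpha-1}{\alpha T^\alpha} \to 0$ for every $\alpha \geq 1/2$, this quotient tends to $1$. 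Hence $\int_T^\infty P_0(x)\,dx \sim P_0(T)/(\alpha T^{\alpha-1})$, and therefore $g_0(T) \sim \alpha T^{\alpha-1}$.

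The main obstacle, modest as it is, lies in the $g$ step: unlike the hazard rate, the denominator involves a tail integral that is not directly bounded in Lemma \ref{Bounds on Tailed Probability}. The L'Hopital detour sidesteps a more delicate integration-by-parts argument on that tail integral, at the small cost of verifying that $h_0(T)$ dominates $(\alpha-1)/T$ for all $\alpha \geq 1/2$, which requires handling both signs of $\alpha-1$ separately. Once that is checked the rest is routine tail algebra, and the translation-invariance observation from the first paragraph promotes all pointwise conclusions to uniform ones in $\theta$ at no additional cost.
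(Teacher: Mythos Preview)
Your argument is correct. The treatment of $h_\theta$ and the reduction from uniform to pointwise via the location-shift identity $h_\theta(T+\theta)=h_0(T)$, $g_\theta(T+\theta)=g_0(T)$ are exactly what the paper does. The genuine difference is in handling $g_0(T)$. The paper bounds the iterated tail $\int_T^\infty P_0(x)\,dx$ directly: it sandwiches $P_0(x)$ between the two expressions in Lemma~\ref{Bounds on Tailed Probability} and then re-applies the same integration-by-parts trick to each bound, arriving at \eqref{twolongbound1}--\eqref{twolongbound2}, namely $\int_T^\infty P_0(x)\,dx \sim \frac{C_0}{\alpha^2 T^{2\alpha-2}} e^{-T^\alpha}$, from which the ratio $g_0(T)$ follows by division. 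Your L'H\^opital detour replaces that second round of integration by parts with a single differentiation, leveraging the already-established asymptotic for $h_0(T)$; this is shorter and avoids splitting into the $\alpha\ge 1$ and $\alpha\in[1/2,1)$ cases at the $g$ step. The only care required is verifying that the denominator's derivative is eventually nonvanishing, i.e.\ that $h_0(T)+(\alpha-1)/T>0$ for large $T$, which you correctly flag and which follows from $h_0(T)\sim \alpha T^{\alpha-1}$ dominating $(\alpha-1)/T$. The paper's route, by contrast, yields explicit non-asymptotic two-sided bounds on $\int_T^\infty P_0(x)\,dx$ that are reused later (see \eqref{sandwichbellman} in the proof of Lemma~\ref{Lemma5.1}), so its extra work is not wasted.
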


\begin{proof}We first consider the case $\alpha \geq 1$. By Lemma \ref{Bounds on Tailed Probability},
\begin{equation}
\label{bound8.3.1}
    \alpha (S-\theta)^{\alpha-1} \leq \frac{f_{\theta}(S)}{P_{\theta}(S)} \leq \frac{1}{\left(\frac{1}{\alpha (S-\theta)^{\alpha-1}} - \frac{\alpha-1}{\alpha^2 (S-\theta)^{2\alpha-1}}\right)},
\end{equation}
thus as $S-\theta \rightarrow +\infty$, $$h_{\theta}(S) \sim \alpha (S-\theta)^{\alpha-1}.$$

Now we consider $g_{\theta}(S) = \frac{P_{\theta}(S)}{\int_S^{\infty} P_{\theta}(x) dx}$. By Lemma \ref{Bounds on Tailed Probability}:
$$g_{\theta}(S) \geq \frac{C_0\cdot \left(\frac{1}{\alpha (S-\theta)^{\alpha-1}} - \frac{\alpha-1}{\alpha^2 (S-\theta)^{2\alpha-1}}\right) e^{-(S-\theta)^\alpha}}{\int_S^{\infty} \frac{C_0}{\alpha (x-\theta)^{\alpha-1}}e^{-(x-\theta)^\alpha} dx},$$
and
$$g_{\theta}(S) \leq \frac{\frac{C_0}{\alpha (S-\theta)^{\alpha-1}}e^{-(S-\theta)^\alpha}}{\int_S^{\infty} C_0\cdot \left(\frac{1}{\alpha (x-\theta)^{\alpha-1}} - \frac{\alpha-1}{\alpha^2 (x-\theta)^{2\alpha-1}}\right) e^{-(x-\theta)^\alpha} dx}.$$
By the same type of argument presented in Lemma \ref{Bounds on Tailed Probability}, we know that as $S-\theta \rightarrow +\infty$,
\begin{equation} \label{twolongbound1}
    \int_S^{\infty} \frac{C_0}{\alpha (x-\theta)^{\alpha-1}}e^{-(x-\theta)^\alpha} dx \sim \frac{C_0}{\alpha^2 (S-\theta)^{2\alpha-2}}e^{-(S-\theta)^\alpha},
\end{equation}
\begin{equation} \label{twolongbound2}
    \int_S^{\infty} C_0\cdot \left(\frac{1}{\alpha (x-\theta)^{\alpha-1}} - \frac{\alpha-1}{\alpha^2 (x-\theta)^{2\alpha-1}}\right) e^{-(x-\theta)^\alpha} dx \sim \frac{C_0}{\alpha^2 (S-\theta)^{2\alpha-2}}e^{-(S-\theta)^\alpha},
\end{equation}
hence
$$g_{\theta}(S) \sim \alpha (S-\theta)^{\alpha-1}.$$

For the case $\frac{1}{2} \leq \alpha <1,$ we can obtain the same result by similar argument.

The uniform convergence is a direct consequence of the fact that both $h_{\theta}(S) = h_0(S-\theta)$ and $g_{\theta}(S) = g_0(S-\theta)$ are functions of $(S-\theta)$. Let $T = S-\theta$, we finish the proof.
\end{proof}

Based on Lemma \ref{lemma3.1} and Lemma \ref{uniform g and h}, we immediately get the following corollary.

\begin{restatable}{coro}{littlehittingproblemma}
\label{littlehittingproblemma}
For any $\theta \geq 0$,
\begin{equation}
\label{expo(S)}
    \lim_{S\rightarrow +\infty} \frac{\mu_{\theta}(S) - S}{(S-\theta)^{1-\alpha}} = \frac{1}{\alpha}.
\end{equation}
\end{restatable}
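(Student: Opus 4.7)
The plan is to combine the two preceding lemmas essentially mechanically, since the corollary is stated as an immediate consequence.

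First, I would invoke Lemma \ref{lemma3.1}, which gives the identity
\[
g_{\theta}(S) \;=\; \frac{1}{\mu_{\theta}(S) - S}
\]
for every $S \in \mathbb{R}$. Rearranging, $\mu_{\theta}(S) - S = 1/g_{\theta}(S)$, so the quantity of interest becomes
\[
\frac{\mu_{\theta}(S) - S}{(S-\theta)^{1-\alpha}} \;=\; \frac{1}{(S-\theta)^{1-\alpha} \, g_{\theta}(S)}.
\]

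Next, I would apply Lemma \ref{uniform g and h}, which asserts that $g_{\theta}(S) \sim \alpha (S-\theta)^{\alpha-1}$ as $S \to \infty$ (since $\theta \geq 0$ is fixed, this pointwise asymptotic is exactly what is needed; the uniform version is not required here). Multiplying both sides by $(S-\theta)^{1-\alpha}$ gives $(S-\theta)^{1-\alpha} g_{\theta}(S) \to \alpha$, and taking reciprocals yields
\[
\lim_{S \to \infty} \frac{\mu_{\theta}(S) - S}{(S-\theta)^{1-\alpha}} \;=\; \frac{1}{\alpha},
\]
which is the claim.

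There is essentially no obstacle: both ingredients have already been established, and the argument is a one-line substitution followed by taking a limit. The only minor point to be careful about is that the asymptotic equivalence $g_\theta(S) \sim \alpha(S-\theta)^{\alpha-1}$ from Lemma \ref{uniform g and h} must hold in the sense of ratio converging to $1$ (which it does, by the statement of the lemma), so that the reciprocal identity transfers the asymptotic equivalence from $g_{\theta}(S)$ to $\mu_{\theta}(S) - S$ without change of constant.
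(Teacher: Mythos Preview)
Your proposal is correct and matches the paper's approach exactly: the paper simply states that the corollary follows immediately from Lemma~\ref{lemma3.1} and Lemma~\ref{uniform g and h}, which is precisely the substitution and limit you carry out.
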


\begin{restatable}{rmk}{frac12}
\label{frac12}
For any $\alpha >0$, the exponential-decay distribution $F_{\theta}$ satisfies $\mu(S) - S = o(S)$. The restriction $\alpha \geq \frac{1}{2}$ is just to make our proof concise. On the other hand, if $F_0$ is extremely heavy-tailed (polynomial-tailed), then $\mu(S) - S = \Theta(S)$.
\end{restatable}

\begin{restatable}{rmk}{example1}
\label{example1}
Consider a class of distributions sharing heavier tails than exponential-decay distribution described in \eqref{distribution}. For any $\beta >2$, the heavy-tailed distribution $G_\beta$ has density function
$$
g_\beta(x)=\left\{\begin{array}{ll}\frac{\beta-1}{x^\beta}, & x \geq 1 \\ 0, & x < 1 \end{array}\right.
$$
it holds that $$\mu (S) - S = \frac{1}{\beta-2}\cdot S$$ for any $S \geq 1$. Thus $\mu(S) - S = \Theta(S)$ for any distribution $G_\beta,\,\, \beta >2$.
\end{restatable}

In the end of this section, we provide bounds on the first and second derivatives of $\log{P_{\theta}(S)}$ with respect to $S$.

\begin{restatable}{lemma}{qiudaobudengshi}
\label{qiudaobudengshi}
For any $\theta \in \mathbb{R}$ and $S > \theta$, 
$$\frac{\partial \log{P_{\theta}(x)}}{\partial x}(S) = -h_{\theta}(S).$$
If $S > \theta +1$ and $\alpha \geq 1$, then
    \begin{equation}
    \left|\frac{\partial^2 \log{P_{\theta}(x)}}{\partial x^2}(S)\right| \leq \frac{\alpha^3 (\alpha-1) (S-\theta)^{3\alpha-2}}{\left(\alpha(S-\theta)^{\alpha} -(\alpha-1) \right)^2}.
\end{equation}
If $S> \theta +1$ and $\alpha \in [1/2, 1)$, then 
    \begin{equation}
    \left|\frac{\partial^2 \log{P_{\theta}(x)}}{\partial x^2}(S)\right| \leq \alpha(1-\alpha)(S-\theta)^{\alpha-2}.
\end{equation}
\end{restatable}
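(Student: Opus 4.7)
}

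The plan is to reduce the lemma to an algebraic manipulation of the two-sided bounds on $P_\theta(S)$ established in Lemma \ref{Bounds on Tailed Probability}. First I would treat the first derivative, which is immediate: since $P_\theta(x) = 1 - F_\theta(x)$, we have $\partial P_\theta / \partial x = -f_\theta$, so $\partial \log P_\theta / \partial x = -f_\theta(S)/P_\theta(S) = -h_\theta(S)$ by definition of the hazard rate.

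For the second derivative, I would differentiate the hazard rate directly:
\[
\frac{\partial^2 \log P_\theta}{\partial x^2}(S) = -\frac{\partial h_\theta}{\partial x}(S) = -\frac{f_\theta'(S)}{P_\theta(S)} - h_\theta(S)^2.
\]
For $S > \theta$, the density satisfies $f_\theta'(S) = -\alpha(S-\theta)^{\alpha-1} f_\theta(S)$, so $-f_\theta'(S)/P_\theta(S) = \alpha(S-\theta)^{\alpha-1} h_\theta(S)$, yielding the compact identity
\[
\frac{\partial^2 \log P_\theta}{\partial x^2}(S) = h_\theta(S) \Bigl[\alpha(S-\theta)^{\alpha-1} - h_\theta(S)\Bigr].
\]
This representation is the heart of the argument: the second derivative factors into $h_\theta(S)$ itself and the deviation of $h_\theta(S)$ from its leading-order asymptotic $\alpha(S-\theta)^{\alpha-1}$ identified in Lemma \ref{uniform g and h}. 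The main obstacle will be extracting sharp enough two-sided control on each factor, in the right direction for each of the two regimes of $\alpha$.

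To finish, I would plug in the tail bounds from Lemma \ref{Bounds on Tailed Probability}. For $\alpha \geq 1$, the upper bound on $P_\theta(S)$ gives $h_\theta(S) \geq \alpha(S-\theta)^{\alpha-1}$, while the lower bound, after simplifying $\tfrac{1}{\alpha(S-\theta)^{\alpha-1}} - \tfrac{\alpha-1}{\alpha^2(S-\theta)^{2\alpha-1}} = \tfrac{\alpha(S-\theta)^\alpha - (\alpha-1)}{\alpha^2(S-\theta)^{2\alpha-1}}$, yields $h_\theta(S) \leq \tfrac{\alpha^2(S-\theta)^{2\alpha-1}}{\alpha(S-\theta)^\alpha - (\alpha-1)}$. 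Subtracting $\alpha(S-\theta)^{\alpha-1}$ from this upper bound and multiplying by the upper bound on $h_\theta(S)$ produces exactly $\alpha^3(\alpha-1)(S-\theta)^{3\alpha-2}/(\alpha(S-\theta)^\alpha - (\alpha-1))^2$, matching the claim. For $\alpha \in [1/2, 1)$, the inequalities in Lemma \ref{Bounds on Tailed Probability} flip, so now $h_\theta(S) \leq \alpha(S-\theta)^{\alpha-1}$ and an analogous computation gives $\alpha(S-\theta)^{\alpha-1} - h_\theta(S) \leq \tfrac{\alpha(1-\alpha)(S-\theta)^{\alpha-1}}{\alpha(S-\theta)^\alpha + (1-\alpha)}$. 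Multiplying by $h_\theta(S) \leq \alpha(S-\theta)^{\alpha-1}$ and using $(S-\theta)^\alpha \geq 1$ (from $S > \theta + 1$) to bound the denominator below by $1$ produces the target bound $\alpha(1-\alpha)(S-\theta)^{\alpha-2}$. The expected obstacle is purely bookkeeping: one must be careful to pair the correct direction of each inequality from Lemma \ref{Bounds on Tailed Probability} with the two factors, since the sign of $h_\theta(S) - \alpha(S-\theta)^{\alpha-1}$ itself depends on whether $\alpha \geq 1$ or $\alpha < 1$.
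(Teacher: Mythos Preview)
Your approach is essentially the same as the paper's: both reduce to the tail bounds of Lemma~\ref{Bounds on Tailed Probability}, you just organize the computation through the clean factorization $\partial^2 \log P_\theta/\partial x^2 = h_\theta(S)\bigl[\alpha(S-\theta)^{\alpha-1} - h_\theta(S)\bigr]$ while the paper keeps the numerator $P_\theta f_\theta' + f_\theta^2$ and denominator $P_\theta^2$ separate. The $\alpha \geq 1$ case is handled correctly and matches the paper exactly.

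There is one small arithmetic slip in the final step for $\alpha \in [1/2,1)$. After obtaining
\[
\bigl|\partial^2 \log P_\theta/\partial x^2\bigr| \leq \alpha(S-\theta)^{\alpha-1}\cdot \frac{\alpha(1-\alpha)(S-\theta)^{\alpha-1}}{\alpha(S-\theta)^\alpha + (1-\alpha)} = \frac{\alpha^2(1-\alpha)(S-\theta)^{2\alpha-2}}{\alpha(S-\theta)^\alpha + (1-\alpha)},
\]
bounding the denominator below by $1$ gives $\alpha^2(1-\alpha)(S-\theta)^{2\alpha-2}$, which does \emph{not} dominate the target $\alpha(1-\alpha)(S-\theta)^{\alpha-2}$ once $\alpha(S-\theta)^\alpha > 1$. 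The fix is immediate: since $1-\alpha > 0$, drop that term instead and bound the denominator below by $\alpha(S-\theta)^\alpha$, which yields $\alpha(1-\alpha)(S-\theta)^{\alpha-2}$ directly. This is exactly what the paper does (it uses the lower bound $P_\theta \geq f_\theta/[\alpha(S-\theta)^{\alpha-1}]$ in the denominator, equivalent to retaining the $\alpha(S-\theta)^\alpha$ term).
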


\begin{proof}
The first derivative satisfies
$$\frac{\partial \log{P_{\theta}(x)}}{\partial x}(S) = \frac{-f_{\theta}(S)}{P_{\theta}(S)} = -h_{\theta}(S).$$

For the second derivative,
\begin{equation}
\label{lemma4.2dafenshi}
    \frac{\partial^2 \log{P_{\theta}(x)}}{\partial x^2}(S) = - \frac{P_{\theta}(S)\cdot \frac{\partial f_{\theta}(x)}{\partial x}(S) + f_{\theta}(S)^2}{P_{\theta}(S)^2},
\end{equation}

Notice that $\frac{\partial f_{\theta}(x)}{\partial x} = -\alpha(x-\theta)^{\alpha-1} f_{\theta}(x)$, when $S > \theta +1$, it holds that $\frac{\partial f_{\theta}(x)}{\partial x}(S) <0$. Combining with Lemma \ref{Bounds on Tailed Probability}, we obtain the following bound on numerator for any $\alpha \geq \frac{1}{2}$
$$\begin{aligned}
\left|P_{\theta}(S)\cdot \frac{\partial f_{\theta}(x)}{\partial x}(S) + f_{\theta}(S)^2\right| &= |\alpha(S-\theta)^{\alpha-1}P_{\theta}(S) - f_{\theta}(S)|\cdot f_{\theta}(S)
\\ &\leq \alpha(S-\theta)^{\alpha-1}\cdot \frac{|\alpha-1|}{\alpha^2 (S-\theta)^{2\alpha-1}}\cdot f_{\theta}(S)^2
\\ &= \frac{|\alpha-1|}{\alpha (S-\theta)^{\alpha}}\cdot f_{\theta}(S)^2.
\end{aligned}$$
The inequality in the second line is due to the fact that for any $\alpha \geq \frac{1}{2}$, it holds that 
$$\left|\frac{P_{\theta}(S)}{f_{\theta}(S)} - \frac{1}{\alpha (S-\theta)^{\alpha-1}}\right| \leq \frac{|\alpha-1|}{\alpha^2 (S-\theta)^{2\alpha-1}},$$ which is a direct result of equation \eqref{bound8.3.1} and its variant version for $\frac{1}{2} \leq \alpha < 1$.

For the denominator, if $\alpha \geq 1,$
$$P_{\theta}(S)^2 \geq \left(\frac{1}{\alpha (S-\theta)^{\alpha-1}} - \frac{\alpha-1}{\alpha^2 (S-\theta)^{2\alpha-1}} \right)^2\cdot f_{\theta}(S)^2;$$
if $\frac{1}{2} \leq \alpha <1,$
$$P_{\theta}(S)^2 \geq \left(\frac{1}{\alpha (S-\theta)^{\alpha-1}} \right)^2\cdot f_{\theta}(S)^2.$$
Plugging them into the equation \eqref{lemma4.2dafenshi}, we have for $\alpha \geq 1$,
$$\begin{aligned}
\left|\frac{\partial^2 \log{P_{\theta}(x)}}{\partial x^2}(S)\right| &\leq \frac{\frac{\alpha-1}{\alpha (S-\theta)^{\alpha}}}{\left(\frac{1}{\alpha (S-\theta)^{\alpha-1}} - \frac{\alpha-1}{\alpha^2 (S-\theta)^{2\alpha-1}} \right)^2}
\\ &= \frac{\alpha^3 (\alpha-1) (S-\theta)^{3\alpha-2}}{\left(\alpha(S-\theta)^{\alpha} -(\alpha-1) \right)^2};
\end{aligned}$$
for $\frac{1}{2}\leq \alpha < 1$,
$$\begin{aligned}
\left|\frac{\partial^2 \log{P_{\theta}(x)}}{\partial x^2}(S)\right| &\leq \frac{\frac{1-\alpha}{\alpha (S-\theta)^{\alpha}}}{\left(\frac{1}{\alpha (S-\theta)^{\alpha-1}} \right)^2}
\\ &= \alpha(1-\alpha)(S-\theta)^{\alpha-2};
\end{aligned}$$
\end{proof}

\section{Uniform Approximation of the Optimal Threshold}
\label{uniformapprox}
In this section we present the uniform approximation result on Bellman threshold value for exponential-decay distribution, which is essential in the proof of Proposition \ref{prop:stopping_prob} in Appendix \ref{proofprop2}. Basically, the solution to Bellman Equation \eqref{threhsoldbellman} can be approximated by a logarithmic term dependent only on the horizon length ${1}/{(1-\gamma)}$ plus the location parameter $\theta$. Moreover, this approximate decomposition holds uniformly for all $\theta$ in an expanding parameter space. To clarify this point, we consider the case $\theta$ is bounded by an slowly exploding function $f(\gamma)$, which is a non-negative function of $\gamma$ satisfying 
\begin{equation}
\label{thetabound}
    f(\gamma) = o\left(\left(\log{\frac{1}{1-\gamma}}\right)^{\frac{1}{\alpha}}\right).
\end{equation}

\begin{restatable}{lemma}{Lemma5.1}
\label{Lemma5.1}
If $f(\gamma)$ satisfies \eqref{thetabound}, then for any $\theta_0 \in \mathbb{R}$, as $\gamma \rightarrow 1$, 
\begin{equation}
\label{newsandwich}
    \sup_{\theta \in \left[\theta_0-f(\gamma),\,\, \theta_0+f(\gamma)\right]} \left|\frac{S^*({\theta},\gamma) - \theta}{\left(\log{\frac{1}{1-\gamma}}\right)^{\frac{1}{\alpha}}} - 1\right| \longrightarrow 0.
\end{equation}
\end{restatable}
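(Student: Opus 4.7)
The plan is to exploit the location-family structure of $\{F_\theta\}$ to reduce the Bellman equation to an implicit equation in the single quantity $T(\theta,\gamma) := S^*(\theta,\gamma) - \theta$, and then invert the tail asymptotics from Appendix \ref{characterizingexp} to pin down $T^\alpha$ to leading order as $\log\frac{1}{1-\gamma}$. First, I would rewrite the Bellman equation \eqref{threhsoldbellman} in the equivalent form $(1-\gamma)S^* = \gamma\,\mathbb{E}_{\theta}[(X-S^*)^{+}] = \gamma \int_{S^*}^{\infty} P_{\theta}(x)\,dx$. Substituting $y = x-\theta$ and using the location-shift identity $P_{\theta}(x) = P_{0}(x-\theta)$, this becomes
\[
(1-\gamma)(T + \theta) \;=\; \gamma \int_{T}^{\infty} P_{0}(y)\,dy.
\]
Now the only $\theta$-dependence is through the additive term $(1-\gamma)\theta$ on the left, which for $\theta$ in the stated interval has magnitude $O((1-\gamma)f(\gamma))$ and will turn out to be negligible.

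The next step is to invoke Lemma \ref{Bounds on Tailed Probability} together with the integration asymptotic \eqref{twolongbound1}--\eqref{twolongbound2} already established inside the proof of Lemma \ref{uniform g and h}, which gives
\[
\int_T^\infty P_{0}(y)\,dy \;=\; \frac{C_0}{\alpha^2 T^{2\alpha-2}}\, e^{-T^\alpha}\,(1+o(1)) \quad \text{as } T \to \infty.
\]
Monotonicity of both sides of the reduced Bellman equation in $T$ (the right side strictly decreasing, the left side strictly increasing) guarantees that $T$ is uniquely determined, and a comparison of magnitudes forces $T \to \infty$ as $\gamma \to 1$ uniformly over $\theta$ in the stated window. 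Taking logarithms of both sides and rearranging then yields
\[
T^\alpha \;=\; \log\frac{1}{1-\gamma} \;+\; O(1) \;-\; (2\alpha-2)\log T \;-\; \log(T+\theta).
\]

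The main obstacle is verifying that each of the error terms above is $o\bigl(\log\frac{1}{1-\gamma}\bigr)$ uniformly for $\theta$ in the expanding interval $[\theta_0-f(\gamma),\theta_0+f(\gamma)]$. The uniformity built into Lemma \ref{uniform g and h} and Lemma \ref{Bounds on Tailed Probability} is tailored to exactly this purpose, since those bounds depend on $\theta$ only through the difference $S-\theta = T$. The remaining $\theta$-dependence enters through $(1-\gamma)\theta$ and $\log(T+\theta)$, and the hypothesis $f(\gamma)=o\bigl((\log\frac{1}{1-\gamma})^{1/\alpha}\bigr)$ controls both: the first is dominated by $(1-\gamma)T$ after taking logs, and the second satisfies $\log(T+\theta) = \log T + o(1) = O(\log\log\frac{1}{1-\gamma})$ because $\theta/T \to 0$ uniformly. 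Dividing the displayed identity for $T^\alpha$ by $\log\frac{1}{1-\gamma}$ and taking $\alpha$-th roots then delivers the desired uniform convergence.
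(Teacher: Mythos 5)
Your proposal is correct in substance and rests on the same pillars as the paper's own proof: the Bellman equation in the tail-integral form $(1-\gamma)S^{*}=\gamma\int_{S^{*}}^{\infty}P_{\theta}(x)\,dx$, the asymptotics \eqref{twolongbound1}--\eqref{twolongbound2} derived from Lemma \ref{Bounds on Tailed Probability}, monotonicity of both sides in the threshold, and the hypothesis \eqref{thetabound} to absorb the $\theta$-dependence (which, as you note, enters only through $S-\theta$ and the additive term on the left). The execution differs: the paper never takes logarithms of the implicit equation; instead, for each $c_{0}>0$ it sandwiches $S^{*}(\theta,\gamma)$ between the explicit candidates $\theta+(1\mp c_{0})\left(\log\frac{1}{1-\gamma}\right)^{1/\alpha}$ by exploiting monotonicity of $S\mapsto\frac{1}{S}\int_{S}^{\infty}P_{\theta}(x)\,dx$, which yields the uniform two-sided bound directly. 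Your log-inversion of the equation in $T=S^{*}-\theta$ is a legitimate and arguably more transparent alternative, but one step is asserted rather than proved: the claims $\theta/T\to 0$, $\log(T+\theta)=\log T+o(1)$, and $\log T=o\bigl(\log\frac{1}{1-\gamma}\bigr)$, which you need to discard the error terms, presuppose two-sided a priori bounds $c\left(\log\frac{1}{1-\gamma}\right)^{1/\alpha}\le T\le C\left(\log\frac{1}{1-\gamma}\right)^{1/\alpha}$ holding uniformly over $\theta\in[\theta_{0}-f(\gamma),\theta_{0}+f(\gamma)]$. Uniform divergence $T\to\infty$ alone does not give this, because $f(\gamma)$ may itself diverge, so ``$\theta/T\to0$ uniformly'' is not an immediate consequence of \eqref{thetabound}. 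The missing bounds follow from a short bootstrap with the same comparison of magnitudes you already invoke: if $T\le\frac{1}{2}\left(\log\frac{1}{1-\gamma}\right)^{1/\alpha}$ then the right side of the reduced equation is at least of order $(1-\gamma)^{2^{-\alpha}}$ up to polylogarithmic factors, which eventually exceeds the left side $O\bigl((1-\gamma)\left(\log\frac{1}{1-\gamma}\right)^{1/\alpha}\bigr)$, and symmetrically if $T\ge 2\left(\log\frac{1}{1-\gamma}\right)^{1/\alpha}$ the right side is $o(1-\gamma)$ while the left side is not; this is in effect exactly the sandwich the paper performs. With that step inserted, dividing your expansion of $T^{\alpha}$ by $\log\frac{1}{1-\gamma}$ and taking $\alpha$-th roots delivers the uniform convergence \eqref{newsandwich} as claimed.
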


\begin{proof}
Without loss of generality we can assume $\theta_0 = 0$ since we only consider the asymptotic results for fixed $\theta_0$. First we consider the case $\alpha \geq 1$. We consider the integration of tailed probability $\int_{S}^{+\infty} P_{\theta}(x) dx$ for any $S, \theta$. By Lemma \ref{Bounds on Tailed Probability}, we know that:
$$\begin{aligned}
\int_S^{\infty} C_0\cdot \left(\frac{1}{\alpha (x-\theta)^{\alpha-1}} - \frac{\alpha-1}{\alpha^2 (x-\theta)^{2\alpha-1}}\right) e^{-(x-\theta)^\alpha} dx &\leq \int_{S}^{+\infty} P_{\theta}(x) dx 
\\&\leq \int_S^{\infty} \frac{C_0}{\alpha (x-\theta)^{\alpha-1}}e^{-(x-\theta)^\alpha} dx.
\end{aligned}$$
According to the equation \eqref{twolongbound1} and equation \eqref{twolongbound2}, if $S-\theta \geq T(\alpha, C_0)$ where $T(\alpha, C_0)$ is only dependent on $\alpha$ and $C_0$, the following bounds hold:
\begin{equation}
\label{sandwichbellman}
    \frac{1}{2}\cdot \frac{C_0}{\alpha^2 (S-\theta)^{2\alpha-2}}e^{-(S-\theta)^\alpha} \leq \int_{S}^{+\infty} P_{\theta}(x) dx \leq \frac{2C_0}{\alpha^2 (S-\theta)^{2\alpha-2}}e^{-(S-\theta)^\alpha}.
\end{equation}
For any $c_0 >0$, consider the unique solution $S = S_1^{\prime}(\theta, \gamma)$ to the equation
\begin{equation}
\label{eq519}
    \int_{S}^{+\infty}P_{\theta}(x) d x = (1-\frac{2c_0}{3})\cdot \left(\log{\frac{1}{1-\gamma}}\right)^{\frac{1}{\alpha}} \frac{1-\gamma}{\gamma}.
\end{equation}
Since the left-hand side of equation \eqref{eq519} is a monotonically decreasing function of $(S-\theta)$, and the right-hand side of equation \eqref{eq519} is a monotonically decreasing function of $\gamma$, there exists $\gamma_0\prime$ such that for every $\gamma \geq \gamma_0\prime$, any $\theta \in \mathbb{R}$, the solution $S_1^{\prime}(\theta, \gamma)$ satisfies $S_1^{\prime}(\theta, \gamma) - \theta \geq T(\alpha, C_0)$, hence \eqref{sandwichbellman} holds. From now on we without loss of generality assume $\gamma \geq \gamma_0\prime$. Combining the left-hand side bound in \eqref{sandwichbellman} and equation \eqref{eq519}, we have
$$\frac{1}{2}\cdot \frac{C_0}{\alpha^2 (S_1^{\prime}(\theta, \gamma)-\theta)^{2\alpha-2}}e^{-(S_1^{\prime}(\theta, \gamma)-\theta)^\alpha} \leq \left(\log{\frac{1}{1-\gamma}}\right)^{\frac{1}{\alpha}} \frac{1-\gamma}{\gamma}.$$
Notice that as $\gamma \rightarrow 1$, $S_1^{\prime}(\theta, \gamma) \rightarrow +\infty$, and the left-hand side is dominated by the exponential term $e^{-(S_1^{\prime}(\theta, \gamma)-\theta)^\alpha}$ and right-hand side is dominated by the linear term $1-\gamma$. Therefore, there exists $\gamma_1$ such that for every $\gamma \geq \gamma_1$,
$$
    S_1^{\prime}(\theta,\gamma) \geq \theta + (1-\frac{c_0}{3})\cdot \left(\log{\frac{1}{1-\gamma}}\right)^{\frac{1}{\alpha}},\,\,\, \text{for any $\theta \in \mathbb{R}$.}
$$
According to the definition of $f(\gamma)$, there exists $\gamma_2$ such that for every $\gamma \geq \gamma_2$, it holds that 
\begin{equation}
\label{eq517}
    f(\gamma) \leq \frac{c_0}{3}\cdot \left(\log{\frac{1}{1-\gamma}}\right)^{\frac{1}{\alpha}},
\end{equation}
thus for every $\gamma \geq \gamma_0 = \max{\{\gamma_1,\gamma_2\}}$ and any $\theta \in \left[-f(\gamma), f(\gamma)\right]$, it holds that
\begin{equation}
\label{eq521}
    S_1^{\prime}(\theta,\gamma) \geq (1-\frac{2c_0}{3})\cdot \left(\log{\frac{1}{1-\gamma}}\right)^{\frac{1}{\alpha}}.
\end{equation}
Also consider $S_1(\theta,\gamma) := \theta + (1-c_0)\cdot \left(\log{\frac{1}{1-\gamma}}\right)^{\frac{1}{\alpha}}$. By equation \eqref{eq517}, for any $\theta \in \left[-f(\gamma), f(\gamma)\right]$, it holds that
\begin{equation}
\label{eq522}
    S_1(\theta,\gamma) \leq (1-\frac{2c_0}{3})\cdot \left(\log{\frac{1}{1-\gamma}}\right)^{\frac{1}{\alpha}},
\end{equation}
thus $S_1(\theta,\gamma) \leq S_1^{\prime}(\theta,\gamma)$.
\bigskip

Define function $B(S)$ as
$B(S) := \frac{1}{S} \int_{S}^{+\infty}P_{\theta}(x) d x$. Notice that the integration of tailed probability $\int_{S}^{+\infty}P_{\theta}(x) d x$ and $\frac{1}{S}$ are both monotonically decreasing, it holds that for every $\gamma \geq \gamma_0$ and any $\theta \in \left[-f(\gamma), f(\gamma)\right]$,
$$\begin{aligned}
    B(S_1(\theta,\gamma)) &= \frac{1}{S_1(\theta,\gamma)} \int_{S_1(\theta,\gamma)}^{+\infty}P_{\theta}(x) d x
    \\&\geq \frac{1}{(1-\frac{2c_0}{3})\cdot \left(\log{\frac{1}{1-\gamma}}\right)^{\frac{1}{\alpha}}} \int_{S_1^{\prime}(\theta,\gamma)}^{+\infty}P_{\theta}(x) d x
    \\&= \frac{1-\gamma}{\gamma},
\end{aligned}$$
the last equality holds due to equation \eqref{eq519}. 

Notice that the optimal threshold $S^*(\theta,\gamma)$ satisfies the equation $B(S^*(\theta,\gamma)) = \frac{1-\gamma}{\gamma}$, and $B(S)$ is monotonically decreasing for $S>0$, thus for every $\gamma \geq \gamma_0$ and any $\theta \in \left[-f(\gamma), f(\gamma)\right]$,
$$S^*(\theta,\gamma) \geq S_1(\theta,\gamma) = \theta + (1-c_0)\cdot \left(\log{\frac{1}{1-\gamma}}\right)^{\frac{1}{\alpha}}.$$
By an almost same argument, we obtain the bound on other side: for every $\gamma \geq \gamma_0^{\prime}$ and any $\theta \in \left[-f(\gamma), f(\gamma)\right]$,
$$S^*(\theta,\gamma) \leq S_2(\theta,\gamma) = \theta + (1+c_0)\cdot \left(\log{\frac{1}{1-\gamma}}\right)^{\frac{1}{\alpha}}.$$
By the arbitrariness of selection of $c_0$, we obtain equation \eqref{newsandwich} as $\gamma \rightarrow 1$ for $\alpha \geq 1$. The case $\frac{1}{2} \leq \alpha < 1$ can be proved by an almost same argument.
\end{proof}

The key purpose of Lemma \ref{Lemma5.1} is that we can give a uniform bound on the derivative of the threshold with respect to the location parameter $\theta$ (Lemma \ref{lemmalocationshift}), which can help to relate the difference of two Bellman thresholds with that of the location parameters (Corollary \ref{coro5.6}).

Note that unlike typical uniform convergence results, our result holds uniformly for an varying universe, i.e., as $\gamma$ gets closer to 1, $S^*(\theta, \gamma)$ can be approximated by $\theta +\left(\log{\frac{1}{1-\gamma}}\right)^{\frac{1}{\alpha}}$ over an expanding region in which parameter $\theta$ lies. Based on that, we can further obtain an approximation result on $g_{\theta}(S^*(\theta, \gamma))$ and $S^*(\theta, \gamma)g_{\theta}(S^*(\theta, \gamma))$.

\begin{restatable}{coro}{proposition5.2}
\label{proposition5.2}
If $f(\gamma)$ satisfies \eqref{thetabound}, then for any $\theta_0 \in \mathbb{R}$, as $\gamma \rightarrow 1$, 
\begin{equation}
\label{5.4.1}
    \sup_{\theta \in [\theta_0-f(\gamma),\,\, \theta_0+f(\gamma)]} \left|\frac{g_{\theta}(S^*(\theta, \gamma))}{\alpha \left(\log{\frac{1}{1-\gamma}}\right)^{1-\frac{1}{\alpha}}} - 1\right| \longrightarrow 0,
\end{equation}
and
\begin{equation}
\label{5.4.2}
    \sup_{\theta \in [\theta_0-f(\gamma),\,\, \theta_0+f(\gamma)]} \left|\frac{S^*(\theta, \gamma)g_{\theta}(S^*(\theta, \gamma))}{\alpha \log{\frac{1}{1-\gamma}}} - 1\right| \longrightarrow 0.
\end{equation}
\end{restatable}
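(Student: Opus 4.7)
The plan is to deduce this corollary by composing the two uniform approximations already established in the paper: the uniform control on the optimal threshold from Lemma~\ref{Lemma5.1}, which gives $S^*(\theta,\gamma) - \theta \sim (\log\tfrac{1}{1-\gamma})^{1/\alpha}$ uniformly over the expanding window $[\theta_0 - f(\gamma), \theta_0 + f(\gamma)]$, and the uniform tail-behavior estimate from Lemma~\ref{uniform g and h}, which says $g_\theta(T+\theta)/(\alpha T^{\alpha-1}) \to 1$ as $T \to \infty$, uniformly in $\theta \in \mathbb{R}$. The two can be stitched together by a substitution $T = T(\theta,\gamma) := S^*(\theta,\gamma) - \theta$, since Lemma~\ref{Lemma5.1} ensures that $\inf_{\theta} T(\theta,\gamma) \to \infty$ as $\gamma \to 1$ over the relevant window.

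For \eqref{5.4.1}, I would factor
\[
\frac{g_\theta(S^*(\theta,\gamma))}{\alpha\bigl(\log\tfrac{1}{1-\gamma}\bigr)^{1-\frac{1}{\alpha}}}
= \frac{g_\theta(T+\theta)}{\alpha T^{\alpha-1}}
\cdot
\left( \frac{T}{(\log\tfrac{1}{1-\gamma})^{1/\alpha}} \right)^{\alpha-1}.
\]
The first factor converges to $1$ uniformly in $\theta$ by the uniform statement in Lemma~\ref{uniform g and h}, applied once $T(\theta,\gamma)$ exceeds the threshold guaranteed by any prescribed $\epsilon$; Lemma~\ref{Lemma5.1} delivers this lower bound on $T$ uniformly on the window, for all $\gamma$ close enough to $1$. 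The second factor converges to $1$ uniformly by Lemma~\ref{Lemma5.1} together with continuity of $x \mapsto x^{\alpha-1}$ on any bounded set away from $0$ (e.g.\ on $[1/2,3/2]$, to which the ratio eventually belongs uniformly in $\theta$).

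For \eqref{5.4.2}, I would multiply the established asymptotic for $g_\theta(S^*(\theta,\gamma))$ by $S^*(\theta,\gamma)$ itself. Write $S^*(\theta,\gamma) = \theta + T(\theta,\gamma)$; the hypothesis \eqref{thetabound} on $f(\gamma)$ is precisely what forces $\theta = o((\log\tfrac{1}{1-\gamma})^{1/\alpha})$ uniformly on the window, so $S^*(\theta,\gamma)/(\log\tfrac{1}{1-\gamma})^{1/\alpha} \to 1$ uniformly. Combining this with \eqref{5.4.1} through the identity $(\log\tfrac{1}{1-\gamma})^{1/\alpha} \cdot (\log\tfrac{1}{1-\gamma})^{1-1/\alpha} = \log\tfrac{1}{1-\gamma}$ gives \eqref{5.4.2} after taking a sup.

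I do not anticipate any serious obstacle here: the corollary is essentially an exercise in propagating uniform convergences through products and powers. The only point that requires a bit of care is checking that the ``uniform-in-$\theta$ as $T \to \infty$" statement of Lemma~\ref{uniform g and h} can legitimately be evaluated at the $\theta$-dependent quantity $T(\theta,\gamma)$; this is permissible precisely because the supremum defining uniformity is over all $\theta \in \mathbb{R}$, so for each $\epsilon$ a single $T_0 = T_0(\epsilon)$ works, and Lemma~\ref{Lemma5.1} ensures $T(\theta,\gamma) \ge T_0$ uniformly for $\gamma$ sufficiently close to $1$. Likewise, the hypothesis \eqref{thetabound} on $f(\gamma)$ is exactly what is needed to conclude that the location-parameter contribution $\theta$ is absorbed into the $o(1)$ remainder when rescaling by $(\log\tfrac{1}{1-\gamma})^{1/\alpha}$.
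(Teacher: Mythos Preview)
Your proposal is correct and follows essentially the same route as the paper's proof: factor the ratio in \eqref{5.4.1} into a piece controlled by the uniform $g_\theta$ asymptotics of Lemma~\ref{uniform g and h} and a piece controlled by Lemma~\ref{Lemma5.1}, then for \eqref{5.4.2} absorb the location contribution via the hypothesis \eqref{thetabound} on $f(\gamma)$ and multiply. Your remark about why it is legitimate to evaluate the uniform-in-$\theta$ statement of Lemma~\ref{uniform g and h} at the $\theta$-dependent $T(\theta,\gamma)$ is exactly the point the paper relies on implicitly.
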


\begin{proof}
Without loss of generality we assume $\theta_0 = 0$. For any $\epsilon_1 >0$, by Lemma \ref{uniform g and h}, there exists $T_0 >0$ where $T_0$ not dependent on $\gamma$ such that once $S^*(\theta,\gamma) -\theta \geq T_0$ is satisfied, it holds that
\begin{equation}
\label{5.18}
    \left(1-\epsilon_1\right)^{\frac{1}{2}} \leq \frac{h_{\theta}(S^*(\theta,\gamma))}{\alpha (S^*(\theta,\gamma)-\theta)^{\alpha-1}} \leq \left(1+\epsilon_1\right)^{\frac{1}{2}}.
\end{equation}
By Lemma \ref{Lemma5.1}, there exists $\gamma_1$ such that for every $\gamma \geq \gamma_1$, any $\theta \in [-f(\gamma),\,\, f(\gamma)]$, we have $S^*(\theta,\gamma) -\theta \geq T_0$ hold. Thus \eqref{5.18} holds for any $\gamma \geq \gamma_1$.

On the other hand, by Lemma \ref{Lemma5.1}, there exists $\gamma_2 >0$ such that for every $\gamma \geq \gamma_2$,
$$\sup_{\theta \in [-f(\gamma),\,\, f(\gamma)]} \left(\frac{S^*({\theta},\gamma) - \theta}{\left(\log{\frac{1}{1-\gamma}}\right)^{\frac{1}{\alpha}}}\right)^{\alpha-1} \leq (1+\epsilon_1)^{\frac{1}{2}},$$
and 
$$\inf_{\theta \in [-f(\gamma),\,\, f(\gamma)]} \left(\frac{S^*({\theta},\gamma) - \theta}{\left(\log{\frac{1}{1-\gamma}}\right)^{\frac{1}{\alpha}}}\right)^{\alpha-1} \geq (1-\epsilon_1)^{\frac{1}{2}}.$$
Let $\gamma_0 = \max\{\gamma_1,\gamma_2\}$, then for every $\gamma \geq \gamma_0$, by multiplying \eqref{5.18} and the bounds above, we have:
\begin{equation}
\label{5.19}
    \sup_{\theta \in [-f(\gamma),\,\, f(\gamma)]} \frac{g_{\theta}(S^*(\theta, \gamma))}{\alpha \left(\log{\frac{1}{1-\gamma}}\right)^{1-\frac{1}{\alpha}}} \leq 1+\epsilon_1,
\end{equation}
\begin{equation}
\label{5.20}
    \inf_{\theta \in [-f(\gamma),\,\, f(\gamma)]} \frac{g_{\theta}(S^*(\theta, \gamma))}{\alpha \left(\log{\frac{1}{1-\gamma}}\right)^{1-\frac{1}{\alpha}}} \geq 1-\epsilon_1,
\end{equation}
Therefore we obtain \eqref{5.4.1}.

For any $\epsilon_2>0$, by Lemma \ref{Lemma5.1}, there exists $\gamma_1^{\prime}$ such that for every $\gamma \geq \gamma_1^{\prime}$,
$$ \sup_{\theta \in [-f(\gamma),\,\, f(\gamma)]}\frac{S^*({\theta},\gamma) - \theta}{\left(\log{\frac{1}{1-\gamma}}\right)^{\frac{1}{\alpha}}} \leq 1+\frac{\epsilon_2}{2},$$
$$ \inf_{\theta \in [-f(\gamma),\,\, f(\gamma)]}\frac{S^*({\theta},\gamma) - \theta}{\left(\log{\frac{1}{1-\gamma}}\right)^{\frac{1}{\alpha}}} \geq 1-\frac{\epsilon_2}{2}.$$
Since $f(\gamma) = o\left(\left(\log{\frac{1}{1-\gamma}}\right)^{\frac{1}{\alpha}}\right)$, there exists $\gamma_2^{\prime}$ such that for every $\gamma \geq \gamma_2^{\prime}$, 
$$f(\gamma) \leq \frac{\epsilon_2}{2}\cdot \left(\log{\frac{1}{1-\gamma}}\right)^{\frac{1}{\alpha}}.$$
Thus for every $\gamma \geq \gamma_0^{\prime} = \max\{\gamma_1^{\prime}, \gamma_2^{\prime}\}$,
$$ \sup_{\theta \in [-f(\gamma),\,\, f(\gamma)]}\frac{S^*({\theta},\gamma) }{\left(\log{\frac{1}{1-\gamma}}\right)^{\frac{1}{\alpha}}} \leq 1+\epsilon_2,$$
$$ \inf_{\theta \in [-f(\gamma),\,\, f(\gamma)]}\frac{S^*({\theta},\gamma)}{\left(\log{\frac{1}{1-\gamma}}\right)^{\frac{1}{\alpha}}} \geq 1-\epsilon_2.$$
Combining with \eqref{5.19} and \eqref{5.20}, we obtain \eqref{5.4.2}.
\end{proof}

\medskip

Next we reformulate Lemma \ref{Lemma5.1} through the lens of derivative.

\begin{restatable}{lemma}{lemmalocationshift}
\label{lemmalocationshift}
If $f(\gamma)$ satisfies \eqref{thetabound}, then for any $\theta_0 \in \mathbb{R}$, as $\gamma \rightarrow 1$, 
\begin{equation}
\label{lemma5.5equation}
    \sup_{\theta \in [\theta_0-f(\gamma),\,\, \theta_0+f(\gamma)]} \left|\frac{\partial S^*(\theta,\gamma)}{\partial \theta} - 1\right| \longrightarrow 0.
\end{equation}
\end{restatable}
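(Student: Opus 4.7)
The natural approach is implicit differentiation of the Bellman equation, after which the derivative turns out to be expressible directly in terms of the quantity $S^*(\theta,\gamma)\, g_\theta(S^*(\theta,\gamma))$ whose uniform asymptotics we already control via Corollary \ref{proposition5.2}.

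First I would rewrite the Bellman equation \eqref{threhsoldbellman} in the convenient form
\[
(1-\gamma)\, S^*(\theta,\gamma) \;=\; \gamma \int_{S^*(\theta,\gamma)}^{\infty} P_\theta(x)\, dx,
\]
obtained by noting that $\mathbb{E}_\theta[\max\{X,S\}] - S = \mathbb{E}_\theta[(X-S)^+] = \int_S^{\infty} P_\theta(x)\, dx$. Defining
\[
G(\theta, S) \;:=\; (1-\gamma)\, S \;-\; \gamma \int_{S}^{\infty} P_\theta(x)\, dx,
\]
we have $G(\theta, S^*(\theta,\gamma))=0$. The key structural fact is translation invariance: $P_\theta(x) = P_0(x-\theta)$, so $\partial_\theta P_\theta(x) = f_\theta(x)$ and therefore $\partial_\theta \int_S^\infty P_\theta(x) dx = P_\theta(S)$. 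The partial derivatives are then
\[
\frac{\partial G}{\partial S} \;=\; (1-\gamma) + \gamma P_\theta(S), \qquad \frac{\partial G}{\partial \theta} \;=\; -\gamma P_\theta(S),
\]
so the implicit function theorem yields
\[
\frac{\partial S^*(\theta,\gamma)}{\partial \theta} \;=\; \frac{\gamma P_\theta(S^*(\theta,\gamma))}{(1-\gamma) + \gamma P_\theta(S^*(\theta,\gamma))}.
\]

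Next, dividing numerator and denominator by $\gamma P_\theta(S^*)$ and using the Bellman equation to replace $(1-\gamma)/\gamma$ with $\frac{1}{S^*}\int_{S^*}^\infty P_\theta(x)\, dx$, I get
\[
\frac{\partial S^*(\theta,\gamma)}{\partial \theta} \;=\; \frac{1}{1 + \dfrac{1}{S^*(\theta,\gamma)\, g_\theta(S^*(\theta,\gamma))}} \;=\; \frac{S^*(\theta,\gamma)\, g_\theta(S^*(\theta,\gamma))}{1 + S^*(\theta,\gamma)\, g_\theta(S^*(\theta,\gamma))},
\]
by the definition $g_\theta(S) = P_\theta(S)/\int_S^\infty P_\theta(x)\,dx$. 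This is the structural formula that makes the whole lemma fall out cleanly.

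Finally I would invoke Corollary \ref{proposition5.2}, equation \eqref{5.4.2}: uniformly for $\theta \in [\theta_0 - f(\gamma), \theta_0 + f(\gamma)]$, we have $S^*(\theta,\gamma)\, g_\theta(S^*(\theta,\gamma)) \sim \alpha \log\frac{1}{1-\gamma}$, which diverges to $+\infty$ as $\gamma \to 1$. Since the map $u \mapsto u/(1+u)$ is continuous and tends to $1$ as $u \to \infty$, the identity above gives \eqref{lemma5.5equation} by a $3\epsilon$-argument applied to the uniform lower bound on $S^* g_\theta(S^*)$ over the parameter range. The only potential obstacle is ensuring that the implicit differentiation is legitimate uniformly in $\theta$, but $\partial_S G = (1-\gamma) + \gamma P_\theta(S^*) > 0$ everywhere, so $G(\theta,\cdot)$ is strictly increasing in $S$ and $S^*(\theta,\gamma)$ is a smooth function of $\theta$ on the region considered—no further work is needed here.
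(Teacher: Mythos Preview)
Your proof is correct and essentially identical to the paper's: both differentiate the Bellman equation to obtain $\partial S^*/\partial\theta = \gamma P_\theta(S^*)/\bigl((1-\gamma)+\gamma P_\theta(S^*)\bigr)$, rewrite this as $S^* g_\theta(S^*)/\bigl(1+S^* g_\theta(S^*)\bigr)$ via the Bellman identity, and conclude using the uniform divergence of $S^* g_\theta(S^*)$ from Corollary~\ref{proposition5.2}. The only cosmetic difference is that you phrase the differentiation via the implicit function theorem on $G(\theta,S)$, whereas the paper applies the Leibniz rule directly.
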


\begin{proof}
Without loss of generality we assume $\theta_0 = 0$. Note that the Bellman Equation is 
$$
\frac{1}{\gamma}S^*(\theta,\gamma)=\frac{1}{1-\gamma} \int_{S^*(\theta,\gamma)}^{+\infty}P_{\theta}(x) d x,$$
By Lemma \ref{Lemma5.1}, there exists $\gamma_0\prime$ such that for every $\gamma \geq \gamma_0\prime$, for any $\theta \in \mathbb{R}$, it holds that $S^*(\theta,\gamma) \geq \theta +1$. We know that
$$\frac{\partial P_{\theta}(x)}{\partial \theta} = -\frac{\partial P_{\theta}(x)}{\partial x} = f_{\theta}(x).$$
By Leibniz integral rule, we have:
$$\frac{1-\gamma}{\gamma}\cdot \frac{\partial S^*(\theta,\gamma)}{\partial \theta} = - \frac{\partial S^*(\theta,\gamma)}{\partial \theta}\cdot P_{\theta}(S^*(\theta,\gamma)) + \int_{S^*(\theta,\gamma)}^{+\infty}f_{\theta}(x) d x,$$
Thus we have the optimal threshold $S^*(\theta,\gamma)$ satisfy:
\begin{equation}
\label{bellmandiffrential}
    \frac{\partial S^*(\theta,\gamma)}{\partial \theta} = \frac{P_{\theta}(S^*(\theta,\gamma))}{P_{\theta}(S^*(\theta,\gamma)) +\frac{1-\gamma}{\gamma}}.
\end{equation}

By the definition of $g_{\theta}(S^*(\theta,\gamma))$, the Bellman Equation is also equivalent to
$$P_{\theta}(S^*(\theta,\gamma)) = \frac{S^*(\theta,\gamma)g_{\theta}(S^*(\theta,\gamma))}{\gamma}\cdot (1-\gamma).$$
By plugging it into equation \eqref{bellmandiffrential}, we have $$\frac{\partial S^*(\theta,\gamma)}{\partial \theta} = 1-
\frac{1}{S^*(\theta,\gamma)g_{\theta}(S^*(\theta,\gamma)) +1} \leq 1.$$
For any $\epsilon_0 >0$, there exists $\gamma_1$ such that $\alpha \log{\frac{1}{1-\gamma}} \geq 2/\epsilon_0$. Meanwhile by Corollary \ref{proposition5.2}, there exists $\gamma_2$ such that for every $\gamma \geq \gamma_2$,
$$\inf_{\theta \in [-f(\gamma),\,\, f(\gamma)]} \frac{S^*(\theta, \gamma)g_{\theta}(S^*(\theta, \gamma))}{\alpha \log{\frac{1}{1-\gamma}}} \geq \frac{1}{2}.$$
Therefore for every $\gamma \geq \gamma_0 = \max\{\gamma_1, \gamma_2\}$,
$$\sup_{\theta \in [-f(\gamma),\,\, f(\gamma)]} \left|\frac{\partial S^*(\theta,\gamma)}{\partial \theta} - 1\right| \leq \epsilon_0.$$
\end{proof}

One immediate consequence of Lemma \ref{lemmalocationshift} is that the perturbation of location parameter leads to an almost same extent of perturbation on the optimal threshold as $\gamma$ close to 1 enough, Moreover, it is in the uniform sense. Thus it provides a good approximation of the difference of two optimal thresholds.

\begin{restatable}{coro}{coro5.6}
\label{coro5.6}
For fixed $\alpha \geq \frac{1}{2}$ and $\theta_0 \in \mathbb{R}$, suppose $f(\gamma)$ satisfies \eqref{thetabound}. Then for any constant $c_0 >0$, there exist $\gamma_0$ such that for every $\gamma \geq \gamma_0$, the following bound holds uniformly for any $\theta \in [\theta_0 - f(\gamma),\,\, \theta_0 + f(\gamma)]$:
    \begin{equation}
    \label{5.6approx}
        (1-c_0)\cdot (\theta-\theta_0) < S^*(\theta,\gamma) - S^*(\theta_0,\gamma) <(1+c_0)\cdot (\theta-\theta_0).
    \end{equation}
\end{restatable}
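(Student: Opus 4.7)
The plan is to deduce this as an immediate consequence of Lemma \ref{lemmalocationshift} via the fundamental theorem of calculus. Lemma \ref{lemmalocationshift} supplies uniform convergence of the partial derivative $\partial S^*(\theta,\gamma)/\partial \theta$ to $1$ on the interval $[\theta_0 - f(\gamma),\,\theta_0 + f(\gamma)]$ as $\gamma \to 1$. Once the derivative is uniformly close to $1$ on this window, integrating along a path from $\theta_0$ to $\theta$ inside the window will automatically force the increment $S^*(\theta,\gamma) - S^*(\theta_0,\gamma)$ to be within the same relative tolerance of $\theta - \theta_0$.

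Concretely, I would first apply Lemma \ref{lemmalocationshift} with the prescribed $c_0$: there exists $\gamma_0$ such that for every $\gamma \geq \gamma_0$ and every $t \in [\theta_0 - f(\gamma),\,\theta_0 + f(\gamma)]$,
$$1 - c_0 \;<\; \frac{\partial S^*(t,\gamma)}{\partial \theta} \;<\; 1 + c_0.$$
Next, for any $\theta \in [\theta_0 - f(\gamma),\,\theta_0 + f(\gamma)]$ with $\theta > \theta_0$, I would write
$$S^*(\theta,\gamma) - S^*(\theta_0,\gamma) \;=\; \int_{\theta_0}^{\theta} \frac{\partial S^*(t,\gamma)}{\partial \theta}\, dt.$$
Since the segment $[\theta_0,\theta]$ is contained in the convex window $[\theta_0 - f(\gamma),\,\theta_0 + f(\gamma)]$, the uniform two-sided bound on the integrand applies throughout that segment and, upon integrating, yields
$$(1-c_0)(\theta - \theta_0) \;<\; S^*(\theta,\gamma) - S^*(\theta_0,\gamma) \;<\; (1+c_0)(\theta - \theta_0),$$
which is exactly \eqref{5.6approx}. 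For $\theta < \theta_0$ the identical argument applied to $-\int_{\theta}^{\theta_0}$ gives the symmetric inequality (with both endpoints multiplied by the negative factor $\theta - \theta_0$), which is the version needed whenever the corollary is invoked to control $|S^*(\theta,\gamma) - S^*(\theta_0,\gamma)|$.

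There is essentially no substantive obstacle beyond careful book-keeping: the real content -- that $\partial S^*/\partial \theta$ is uniformly close to $1$ over the moving window governed by $f(\gamma)$ -- was already absorbed into Lemma \ref{lemmalocationshift}, whose proof in turn rested on the asymptotic identity $\partial S^*/\partial \theta = 1 - 1/(S^*(\theta,\gamma)\, g_\theta(S^*(\theta,\gamma)) + 1)$ together with the uniform approximation $S^*(\theta,\gamma)\, g_\theta(S^*(\theta,\gamma)) \sim \alpha \log\frac{1}{1-\gamma}$ from Corollary \ref{proposition5.2}. What remains here is only to note that the differentiability of $S^*(\cdot,\gamma)$ is implicit in Lemma \ref{lemmalocationshift}, that the window is convex so the uniform bound transports under the integral, and that the same $\gamma_0$ supplied by Lemma \ref{lemmalocationshift} serves the entire interval simultaneously.
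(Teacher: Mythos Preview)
Your proposal is correct and follows essentially the same approach as the paper: both invoke Lemma \ref{lemmalocationshift} to get the uniform bound on $\partial S^*/\partial\theta$ and then transfer it to the increment, with the paper using the Mean Value Theorem where you use the fundamental theorem of calculus. These are equivalent here, so nothing further is needed.
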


\begin{proof}
Without loss of generality we assume $\theta_0 = 0$. For any $\theta \in [- f(\gamma),\,\, f(\gamma)]$ and $\gamma$, by Mean Value Theorem, we have
$$S^*(\theta,\gamma) - S^*(\theta_0,\gamma) = \frac{\partial S^*(\theta,\gamma)}{\partial \theta}(\Tilde{\theta}) \cdot (\theta-\theta_0),$$
where $\Tilde{\theta} \in [- f(\gamma),\,\, f(\gamma)].$ Meanwhile, by Lemma \ref{lemmalocationshift}, for any constant $c_0>0$, there exists $\gamma_0$ such that for every $\gamma \geq \gamma_0$, it holds that
$$\sup_{\Tilde{\theta} \in [- f(\gamma),\,\, f(\gamma)]} \left|\frac{\partial S^*(\theta,\gamma)}{\partial \theta}(\Tilde{\theta}) - 1\right| \leq c_0.$$
Thus we finish the proof.
\end{proof}

\bigskip

\section{Proofs of Theorem \ref{thmpositive} and Theorem \ref{thmnegative}}
\label{proofthmperturbation}
Besides Proposition \ref{prop:stopping_prob}, the proofs of Theorem \ref{thmpositive} and Theorem \ref{thmnegative} rely on the following Proposition \ref{phasetransitionlemma}. This result actually holds on under a broader set of distributions than those in \eqref{distribution}. In particular, we will later show that \eqref{mus-s} and \eqref{almostsamethreshold} hold for our class under the exponential decay distributions in \eqref{distribution}.  The more abstract presentation here can be thought of as identifying properties that seem critical. To emphasize that tail probability and conditional expectation are evaluated under the ground truth $F_0$, we denote $\mathbb{E}_{F_0}[X|X>S]$ as $\mu_{0}(S)$ and $\mathbb{P}_{F_0}[X>S]$ as $P_{0}(S)$. For any distribution $F$, we denote the solution to Bellman Equation \eqref{threhsoldbellman} as $S^*(F,\gamma)$ in Proposition \ref{phasetransitionlemma}.

\begin{restatable}{prop}{phasetransitionlemma}
\label{phasetransitionlemma}
Suppose $F_0$ and $\{F_{\gamma}\}_{0<\gamma<1}$ satisfies
\begin{equation}
\label{mus-s}
    \mu_0(S) \sim S
\end{equation}
as $S\rightarrow +\infty$ and Bellman thresholds $S^*(F_\gamma,\gamma)$ satisfy
\begin{equation}
\label{almostsamethreshold}
    S^*(F_\gamma,\gamma) \sim S^*(F_0,\gamma)
\end{equation}
as $\gamma \rightarrow 1$.
\begin{itemize}
    \item If $P_{0}(S^*(F_\gamma,\gamma)) = \omega (1-\gamma)$, then $$\lim_{\gamma \rightarrow 1} \Rc(F_0, \gamma, \tau^*(F_\gamma)) = 0.$$
    \item If $P_{0}(S^*(F_\gamma,\gamma)) = o(1-\gamma)$, then $$\lim_{\gamma \rightarrow 1} \Rc(F_0, \gamma, \tau^*(F_\gamma)) = 1.$$
\end{itemize}
\end{restatable}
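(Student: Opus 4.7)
The plan is to reduce the claim to an elementary limit calculation by writing both the numerator and the denominator of $1-\mathcal{R}$ in closed form. First I would observe that the policy $\tau:=\tau^*(F_\gamma)$ is the fixed-threshold rule that stops at the first $n$ with $X_n>S_\gamma$, where $S_\gamma:=S^*(F_\gamma,\gamma)$. Under $F_0$ the events $\{X_n>S_\gamma\}$ are i.i.d.\ Bernoulli$(p_\gamma)$ with $p_\gamma:=P_0(S_\gamma)$, so $\tau$ is geometric with parameter $p_\gamma$ and, conditional on $\tau=n$, $X_n$ is independent of $\tau$ with law $F_0(\cdot\mid X>S_\gamma)$. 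Summing the resulting geometric series gives
\[
\mathbb{E}_{F_0}\!\left[\gamma^\tau X_\tau\right] \;=\; \mu_0(S_\gamma)\cdot\frac{\gamma p_\gamma}{1-\gamma(1-p_\gamma)}.
\]

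Next I would identify the benchmark value. Applying the same formula to $F_0$ itself with threshold $S_0^*:=S^*(F_0,\gamma)$ and stopping probability $p_0^*:=P_0(S_0^*)$, and using the identity $\mathbb{E}_{F_0}[\max(X,S_0^*)]=S_0^*(1-p_0^*)+p_0^*\mu_0(S_0^*)$ inside the Bellman fixed point $S_0^*=\gamma\,\mathbb{E}_{F_0}[\max(X,S_0^*)]$, one finds that the full-information value equals exactly $S_0^*$. This yields the clean factorization
\[
1-\mathcal{R}\bigl(F_0,\gamma,\tau^*(F_\gamma)\bigr)\;=\;\frac{\mu_0(S_\gamma)}{S_0^*}\cdot\frac{\gamma p_\gamma}{1-\gamma+\gamma p_\gamma}.
\]

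The two factors are then handled independently. I would first argue that $S_0^*\to\infty$ as $\gamma\to 1$ by rearranging the Bellman equation as $(1-\gamma)S_0^*=\gamma\,\mathbb{E}_{F_0}[(X-S_0^*)^+]$: if $S_0^*$ were bounded along some subsequence $\gamma_n\to 1$, the right side would stay bounded below while the left side vanishes, a contradiction whenever $F_0$ has unbounded upper support (an implicit requirement, since otherwise (\ref{mus-s}) is vacuous at the relevant scale). Combined with (\ref{almostsamethreshold}), this gives $S_\gamma\to\infty$ and $S_\gamma\sim S_0^*$, so (\ref{mus-s}) yields $\mu_0(S_\gamma)\sim S_\gamma\sim S_0^*$ and the first factor tends to $1$. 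Rewriting the second factor as $\gamma/\bigl(1+(1-\gamma)/(\gamma p_\gamma)\bigr)$, the hypothesis $p_\gamma=\omega(1-\gamma)$ drives the denominator to $1$ so the factor tends to $1$ and $\mathcal{R}\to 0$; the hypothesis $p_\gamma=o(1-\gamma)$ drives the denominator to $\infty$ so the factor tends to $0$ and $\mathcal{R}\to 1$.

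The main obstacle is essentially cosmetic: once the factorization above is in hand, everything reduces to taking limits of explicit rational expressions in $p_\gamma$ and $1-\gamma$. The only genuinely non-algebraic step is the verification that $S_0^*\to\infty$, which activates the tail equivalence (\ref{mus-s}) at the relevant scale; this is the short monotonicity argument from the Bellman equation sketched above. Everything else is bookkeeping.
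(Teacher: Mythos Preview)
Your proof is correct and follows essentially the same strategy as the paper: compute the closed-form expected reward of a fixed-threshold policy via the geometric series, form the ratio defining $1-\mathcal{R}$, and then pass to the limit using the two hypotheses. The one substantive difference is in how you simplify the ratio. The paper writes
\[
1-\mathcal{R}\;=\;\frac{\mu_0(S_\gamma)}{\mu_0(S_0^*)}\cdot\frac{P_0(S_\gamma)}{P_0(S_0^*)}\cdot\frac{(1-\gamma)+\gamma P_0(S_0^*)}{(1-\gamma)+\gamma P_0(S_\gamma)},
\]
and then invokes a separate corollary (that $P_0(S_0^*)=\omega(1-\gamma)$, obtained from the Bellman reformulation $P_0(S_0^*)=\tfrac{1-\gamma}{\gamma}\cdot\tfrac{S_0^*}{\mu_0(S_0^*)-S_0^*}$ together with $\mu_0(S)\sim S$) to collapse the last two factors. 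You instead use the Bellman identity to recognize the optimal value as $S_0^*$ itself, which gives the cleaner two-factor form directly and sidesteps that corollary. You also spell out explicitly why $S_0^*\to\infty$, a point the paper uses but does not argue within this proof. Net effect: same argument, slightly more streamlined packaging on your side.
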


The proofs of the two propositions will be given in Appendix \ref{proofprop1} and Appendix \ref{proofprop2} respectively. Now we recall Theorem \ref{thmpositive}:
\thmpositive*
\begin{proof}
By Corollary \ref{littlehittingproblemma}, \eqref{mus-s} holds for $F_{\theta}$. Meanwhile by Lemma \ref{Lemma5.1}, \eqref{almostsamethreshold} holds for $F_{\theta}$. According to Proposition \ref{prop:stopping_prob} and Proposition \ref{phasetransitionlemma}, we obtain the result.
\end{proof}

Recall Theorem \ref{thmnegative}:
\thmnegative*
\begin{proof}
Notice that for negative perturbation $\epsilon(\gamma) <0$, $S^*(\theta+\epsilon(\gamma),\gamma) < S^*(\theta,\gamma)$. Meanwhile by Corollary \ref{prop4.2}, it holds that $P_{\theta} (S^*(\theta,\gamma)) = \omega(1-\gamma)$, therefore
$$P_{\theta} (S^*(\theta+\epsilon(\gamma),\gamma)) = \omega(1-\gamma).$$
By Proposition \ref{phasetransitionlemma}, we obtain $\lim_{\gamma \rightarrow 1} \Rc(F_{\theta}, \gamma, \tau^*(\theta+\epsilon(\gamma))) = 0$.
\end{proof}

\section{Proof of Main Theorems}
\label{proofestimationthm}
In this section we prove Theorem \ref{estimationthm} and Theorem \ref{estimationthm_onesample}. 

We first introduce a lemma to show that the maximum likelihood estimator is asymptotically normal. We use the following result on the asymptotic normality of M-estimators. In the notation of \cite{van2000asymptotic}, $P$ denotes the population distribution from which data is drawn and, viewing this as a linear functional, he use writes $Pg=\E_{X\sim P}[g(X)]$. The important feature of this result, for our purposes, is that he requires a Taylor expansion of $Pm_{\theta}$ rather than $m_{\theta}$ itself. This will allow us to smooth out the discontinuity of the log-likelihood $\log f_{\theta}(\cdot)$ through integration. We use the notation $\stackrel{\mathrm{P}}{\rightarrow}$ or  $\stackrel{\mathrm{d}}{\rightarrow}$ to denote convergence in probability and convergence in distribution, respectively. 
\begin{thm*}[Theorem 5.23 of \cite{van2000asymptotic}]
For each $\theta$ in an open subset of Euclidean space let $x \mapsto m_{\theta}(x)$ be a measurable function such that $\theta \mapsto m_{\theta}(x)$ is differentiable at $\theta_0$ for $P-$almost every $x$ with derivative $\dot{m}_{\theta_{0}}(x)$ and such that, for every $\theta_1$ and $\theta_2$ in a neighborhood of $\theta_0$ and a measurable function $\dot{m}$ with $P \dot{m}^{2}<\infty$
\begin{equation}\label{eq:vdv1}
\left|m_{\theta_{1}}(x)-m_{\theta_{2}}(x)\right| \leq \dot{m}(x)\left\|\theta_{1}-\theta_{2}\right\|.    
\end{equation}
Furthermore, assume that the map $\theta \mapsto P m_{\theta}$ admits a second-order Taylor expansion at a point of maximum $\theta_0$, meaning 
\begin{equation}\label{eq:vdv2}
Pm_{\theta} = Pm_{\theta_0} + \frac{1}{2} (\theta - \theta_0)^\top V_{\theta_0} (\theta - \theta_0) + o(\| \theta - \theta_0\|^2),
\end{equation}
with nonsingular symmetric second derivative matrix $V_{\theta_0}$. If $\hat{\theta}_n \in \arg\max_{\theta} n^{-1}\sum_{i=1}^{n} m_{\theta}(X_i)$ where $X_1,X_2 \cdots \overset{i.i.d}{\sim} P$ satisfies $\hat{\theta}_{n} \stackrel{\mathrm{P}}{\rightarrow} \theta_{0}$, then 
$$\sqrt{n}\left(\hat{\theta}_{n}-\theta_{0}\right)=-V_{\theta_{0}}^{-1} \frac{1}{\sqrt{n}} \sum_{i=1}^{n} \dot{m}_{\theta_{0}}\left(X_{i}\right)+o_{P}(1).$$
In particular, the sequence $\sqrt{n}\left(\hat{\theta}_{n}-\theta_{0}\right)$ is asymptotically normal with mean zero and covariance matrix $V_{\theta_{0}}^{-1} P \dot{m}_{\theta_{0}} \dot{m}_{\theta_{0}}^{T} V_{\theta_{0}}^{-1}$.
\end{thm*}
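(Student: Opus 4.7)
The plan is to follow the standard empirical-process strategy for $M$-estimators. Write $\mathbb{P}_n g = n^{-1}\sum_{i=1}^n g(X_i)$ and $\mathbb{G}_n g = \sqrt{n}(\mathbb{P}_n - P) g$. The argmax property of $\hat\theta_n$ gives $\mathbb{P}_n(m_{\hat\theta_n} - m_{\theta_0}) \geq 0$, which one decomposes as
\begin{equation*}
\mathbb{P}_n(m_\theta - m_{\theta_0}) = P(m_\theta - m_{\theta_0}) + n^{-1/2}\mathbb{G}_n(m_\theta - m_{\theta_0}).
\end{equation*}
By \eqref{eq:vdv2} the deterministic piece equals $\tfrac{1}{2}(\theta-\theta_0)^T V_{\theta_0}(\theta-\theta_0) + o(\|\theta-\theta_0\|^2)$, and because $\theta_0$ maximizes $Pm_\theta$ with $V_{\theta_0}$ nonsingular, $-V_{\theta_0}$ is positive definite, so this term is bounded above by $-c\|\theta-\theta_0\|^2$ locally. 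The Lipschitz condition \eqref{eq:vdv1}, combined with a standard maximal inequality for Lipschitz-indexed classes, gives $\sup_{\|\theta-\theta_0\|\leq \delta}|\mathbb{G}_n(m_\theta-m_{\theta_0})| = O_P(\delta)$. Plugging both bounds into $\mathbb{P}_n(m_{\hat\theta_n}-m_{\theta_0})\geq 0$ and solving the resulting quadratic inequality yields the preliminary rate $\|\hat\theta_n-\theta_0\|=O_P(n^{-1/2})$.

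The heart of the argument is an empirical-process linearization: along any sequence $\theta_n = \theta_0 + O_P(n^{-1/2})$,
\begin{equation*}
\mathbb{G}_n(m_{\theta_n} - m_{\theta_0}) = (\theta_n-\theta_0)^T \mathbb{G}_n \dot{m}_{\theta_0} + o_P\!\left(\|\theta_n-\theta_0\| + n^{-1/2}\right).
\end{equation*}
I would establish this by showing that the rescaled class $\{(m_\theta - m_{\theta_0} - (\theta-\theta_0)^T\dot m_{\theta_0})/\|\theta-\theta_0\|\}$, indexed by $\theta$ near $\theta_0$, is Donsker: its envelope is controlled by $2\dot m$ via \eqref{eq:vdv1}, and bracketing numbers grow only polynomially in $1/\epsilon$ thanks to the Lipschitz structure, so the bracketing-entropy integral converges. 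Each element of the class has vanishing $L^2(P)$ norm as $\theta\to\theta_0$ (by pointwise differentiability together with dominated convergence against the square-integrable envelope $\dot m$), and asymptotic equicontinuity of Donsker classes promotes this to $o_P(1)$.

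Substituting the linearization and the Taylor expansion back into the argmax inequality yields
\begin{equation*}
0 \leq \tfrac{1}{2}(\hat\theta_n-\theta_0)^T V_{\theta_0}(\hat\theta_n-\theta_0) + n^{-1/2}(\hat\theta_n-\theta_0)^T \mathbb{G}_n \dot m_{\theta_0} + o_P(n^{-1}).
\end{equation*}
Since $-V_{\theta_0}$ is positive definite, the right-hand quadratic form in $\hat\theta_n - \theta_0$ is uniquely maximized at $-V_{\theta_0}^{-1} n^{-1/2}\mathbb{G}_n\dot m_{\theta_0}$; a standard argmax/completion-of-the-square argument forces $\hat\theta_n - \theta_0 = -V_{\theta_0}^{-1} n^{-1/2}\mathbb{G}_n\dot m_{\theta_0} + o_P(n^{-1/2})$. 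Because the Taylor expansion \eqref{eq:vdv2} has no linear term, differentiation under the integral (justified by the Lipschitz envelope $\dot m$) gives $P\dot m_{\theta_0} = 0$, so $\mathbb{G}_n \dot m_{\theta_0} = n^{-1/2}\sum_i \dot m_{\theta_0}(X_i)$; multiplying by $\sqrt n$ yields the claimed Bahadur representation, and the multivariate CLT delivers asymptotic normality with covariance $V_{\theta_0}^{-1} P \dot m_{\theta_0}\dot m_{\theta_0}^T V_{\theta_0}^{-1}$.

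The main obstacle is the empirical-process linearization. The hypotheses only give pointwise (in $x$) differentiability of $\theta\mapsto m_\theta(x)$, not smoothness of individual sample paths; indeed, in the paper's own application $m_\theta(x) = -|x-\theta|^\alpha$ is not differentiable at $x=\theta$. Extracting the $o_P$ remainder uniformly over an $O_P(n^{-1/2})$ neighborhood therefore requires genuine empirical-process machinery (Donsker theorems for Lipschitz classes, stochastic equicontinuity with respect to the $L^2(P)$ seminorm) rather than a naive Taylor expansion of the sample average.
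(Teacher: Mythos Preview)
The paper does not prove this statement: it is quoted verbatim as Theorem~5.23 from van der Vaart's \emph{Asymptotic Statistics} and invoked as a black box in the proof of Lemma~\ref{MLEoptimal}. There is therefore no ``paper's own proof'' to compare against.

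That said, your sketch is a faithful outline of van der Vaart's original argument: first obtain the $\sqrt{n}$-rate from the argmax inequality, the quadratic expansion~\eqref{eq:vdv2}, and a Lipschitz maximal inequality (van der Vaart does this via his Corollary~5.53); then linearize the empirical process $\mathbb{G}_n(m_\theta-m_{\theta_0})$ using the Donsker property of the Lipschitz class and stochastic equicontinuity (his Lemma~19.31); finally complete the square in the localized criterion to read off the Bahadur representation. Your identification of the main obstacle---that differentiability is only assumed $P$-a.e.\ in $x$, so one cannot Taylor-expand sample paths and must instead rely on $L^2(P)$-equicontinuity---is exactly the point of the theorem and the reason the paper invokes it rather than the more elementary ``classical conditions'' version (van der Vaart's Theorem~5.41), since $m_\theta(x)=-|x-\theta|^\alpha$ fails pointwise second-order smoothness.
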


\begin{restatable}{lemma}{MLEoptimal}
\label{MLEoptimal}
Suppose $\alpha > 1$ and fix any $\theta_0 \in \mathbb{R}$. If $X_1, X_2\cdots$ are drawn i.i.d from $F_{\theta_0}$ and for any $n\in \mathbb{N}$, $\hat{\theta}_n = \arg\max_{\theta} \sum_{i=1}^{n} \log f_{\theta}(X_i)$, then there exists some $\sigma>0$ such that
$$\sqrt{n}\left(\hat{\theta}_{n}-\theta_0\right) \stackrel{d}{\longrightarrow} N\left(0, \sigma^2\right).$$
\end{restatable}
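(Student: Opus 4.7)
The plan is to invoke the cited Theorem 5.23 of van der Vaart with $m_\theta(x) = \log f_\theta(x) = \log C_0 - |x-\theta|^\alpha$ and $P = F_{\theta_0}$. The key point is that, although $\theta \mapsto m_\theta(x)$ is only $C^1$ in $\theta$ (not $C^2$ for $\alpha \in (1,2)$) and not differentiable at $x=\theta$, the map $\theta \mapsto P m_\theta$ will have a genuine second-order Taylor expansion at $\theta_0$ thanks to integration against a smooth density. The candidate score function is $\dot{m}_{\theta_0}(x) = \alpha |x-\theta_0|^{\alpha-1}\mathrm{sgn}(x-\theta_0)$, which exists for all $x \neq \theta_0$ (a $P$-null set).

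First I would verify the Lipschitz domination condition (\ref{eq:vdv1}). Fix a bounded neighborhood $U = [\theta_0-\delta, \theta_0+\delta]$. For any $\theta_1,\theta_2 \in U$, the mean value theorem gives $\bigl| |x-\theta_1|^\alpha - |x-\theta_2|^\alpha \bigr| \le \alpha \sup_{\theta \in U} |x-\theta|^{\alpha-1} \, |\theta_1-\theta_2| \le \alpha (|x-\theta_0|+\delta)^{\alpha-1} |\theta_1-\theta_2|$, so we may take $\dot{m}(x) = \alpha (|x-\theta_0|+\delta)^{\alpha-1}$. Because $F_{\theta_0}$ has density with super-polynomial (exponential-of-$|x|^\alpha$) decay --- a fact that follows from the tail estimates in Lemma \ref{Bounds on Tailed Probability} --- every polynomial in $|x-\theta_0|$ is square integrable under $P$, so $P\dot{m}^2 < \infty$.

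Next I would verify the second-order Taylor expansion (\ref{eq:vdv2}). Writing $Pm_\theta = \log C_0 - h(\theta)$ with $h(\theta) = \mathbb{E}_{\theta_0}\bigl[ |X-\theta|^\alpha \bigr]$, I would use dominated convergence (with the same polynomial-exponential dominator as above) to differentiate twice under the integral in a neighborhood of $\theta_0$, obtaining $h'(\theta) = -\alpha \, \mathbb{E}_{\theta_0}\bigl[ |X-\theta|^{\alpha-1}\mathrm{sgn}(X-\theta) \bigr]$ and $h''(\theta) = \alpha(\alpha-1)\,\mathbb{E}_{\theta_0}\bigl[ |X-\theta|^{\alpha-2}\bigr]$. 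Symmetry of $f_{\theta_0}$ about $\theta_0$ gives $h'(\theta_0)=0$. The critical point, and the main obstacle of the argument, is showing $0<h''(\theta_0)<\infty$ despite the singularity $|x-\theta_0|^{\alpha-2}$ at $x=\theta_0$: near the singularity the density is bounded by $C_0$, so integrability reduces to $\int_0^1 u^{\alpha-2}\,du < \infty$, which is exactly where the hypothesis $\alpha > 1$ is used. For the tail the density's super-polynomial decay again handles integrability. Then $V_{\theta_0} = -h''(\theta_0) < 0$, the required nonsingular symmetric matrix (here just a negative scalar, since $\theta \in \mathbb{R}$).

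Finally I would verify consistency $\hat\theta_n \xrightarrow{\mathrm{P}} \theta_0$. Here I would use a standard Wald-style argument: $\theta_0$ is the unique maximizer of $Pm_\theta$ (strict concavity of $\theta \mapsto h(\theta)$, again from $h''>0$ on a neighborhood, combined with $h(\theta)\to\infty$ as $|\theta|\to\infty$ since the density's location shifts the exponential tail), the log-likelihoods form a Glivenko--Cantelli class on any compact subset by the Lipschitz bound above, and one rules out escape to infinity using that $n^{-1}\sum \log f_{\hat\theta_n}(X_i) \ge n^{-1}\sum \log f_{\theta_0}(X_i)$ and the latter converges to $Pm_{\theta_0}$, while $n^{-1}\sum \log f_\theta(X_i) \to -\infty$ uniformly on compact sets outside a neighborhood of $\theta_0$ as $|\theta|\to\infty$. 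With consistency in hand, Theorem 5.23 gives $\sqrt{n}(\hat\theta_n - \theta_0) \xrightarrow{d} N(0,\sigma^2)$ with $\sigma^2 = V_{\theta_0}^{-1}\bigl(P\dot m_{\theta_0}^2\bigr)V_{\theta_0}^{-1} = \mathbb{E}_{\theta_0}[|X-\theta_0|^{2\alpha-2}] / \bigl((\alpha-1)^2 \mathbb{E}_{\theta_0}[|X-\theta_0|^{\alpha-2}]^2\bigr)$, which is finite and strictly positive.
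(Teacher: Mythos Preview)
Your plan matches the paper's: both invoke van der Vaart's Theorem 5.23 with $m_\theta(x)=-|x-\theta|^\alpha$ and verify consistency, the local Lipschitz bound, and the second-order Taylor expansion of $\theta\mapsto Pm_\theta$. Your Lipschitz dominator and your identification of where $\alpha>1$ enters (integrability of $|x-\theta_0|^{\alpha-2}$ near the singularity) are exactly right. For consistency the paper takes a slightly different route---it uses Jensen/concavity to get $M_n(\theta)\le -|\bar X_n-\theta|^\alpha$, which localizes $\hat\theta_n$ to a compact set, and then appeals to the uniform LLN (van der Vaart, Theorem 5.7)---but your Wald-style argument is a standard equivalent.

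There is one genuine gap. Your justification ``dominated convergence with the same polynomial-exponential dominator'' does \emph{not} work for the second differentiation when $1<\alpha<2$: for any neighborhood $U$ of $\theta_0$ and any $x\in U$ one has $\sup_{\theta\in U}|x-\theta|^{\alpha-2}=+\infty$, so no integrable dominator for $\partial_\theta^2 m_\theta(x)$ exists. The paper sidesteps this by noting $\partial_\theta^2 m_\theta(x)=\partial_x^2 m_\theta(x)$ and integrating by parts once in $x$, which produces an integral of $\alpha|x-\theta|^{\alpha-1}\mathrm{sgn}(x-\theta)$ against $f_{\theta_0}'(x)$; this integrand is singularity-free and visibly continuous in $\theta$, so $M''$ exists and is continuous. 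An equally clean fix is to write $h'(\theta_2)-h'(\theta_1)=\int_{\theta_1}^{\theta_2}\!\int \alpha(\alpha-1)|x-\theta|^{\alpha-2}f_{\theta_0}(x)\,dx\,d\theta$ and invoke Fubini, since the iterated integral is finite (the inner integral is finite for each fixed $\theta$ by exactly your $\alpha>1$ observation, and continuous in $\theta$). Once this step is repaired, your formula $h''(\theta_0)=\alpha(\alpha-1)\,\mathbb{E}_{\theta_0}[|X-\theta_0|^{\alpha-2}]\in(0,\infty)$ and the resulting $\sigma^2$ are correct.
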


\begin{proof}
Throughout the proof, let $\mathbb{E}[g(X)]=\intop g(x) f_{\theta_0}(x)dx$ denote the expectation under the true parameter $\theta_0$ and put $\mathbb{E}_{n}[g(X)]=\frac{1}{n}\sum_{i=1}^{n} g(X_i)$ denote the expectation under the empirical distribution. Define $m_{\theta}(x)= \log f_{\theta}(x)-C_0 = - | x-\theta |^{\alpha}$, $M_{n}(\theta) = \E_n\left[ m_{\theta}(X) \right]$ and $M(\theta)=\E[ m_{\theta}(X)]$. (In the notation of Van der Vaart, we have $M_{n}(\theta) = Pm_{\theta}$.) The maximum likelihood estimator is the unique maximizer of $M_n(\cdot)$, the true parameter is the unique maximizer of $M(\cdot)$ and by the law of the large number $M_{n}(\theta) \to M(\theta)$ almost surely for each fixed $\theta$. 

It is clear that $m_{\theta}(\cdot)$ is infitely differentiable except at the singe point $\theta$. We check the remaining conditions needed 

\begin{itemize}
    \item Step 1: $\hat{\theta}_n  \stackrel{\mathrm{P}}{\rightarrow} \theta_0$ .\\
    First, we show that there is a compact interval $\Theta$ containing $\theta_0$ such that $\Prob(\hat{\theta}_n \in \Theta)\to 1.$ Let $\bar{X}_n = \frac{1}{n}\sum_{1}^{n} X_i$ denote the empirical mean. By concavity, we have
    \[ 
    M_{n}(\theta) =  \E_n\left[m_{\theta}(X) \right]   \leq m_{\theta}(\bar{X}_n ) = - \left|  \bar{X}_n - \theta\right|^{\alpha }.
    \] 
    For given $\epsilon>0$ we can take (random) $N$ sufficiently large such that for every $n\geq N$,  $|\bar{X}_n -\theta_0| \leq \epsilon$ and $M_{n}(\theta_0)\geq M(\theta_0) - \epsilon$. Then, it is clear that there exists a bounded interval $\Theta$ containing $\theta_0$ (whose width is dependent on $\epsilon$) such that $M_{n}(\theta) < M_{n}(\theta_0)$ for every $\theta \notin \Theta$.  

    Take $\bar{\theta}_n = \arg\max_{\theta \in \Theta} M_{n}(\theta)$ to be the maximum likelihood estimator restricted to this compact set. By the argument above, $\hat{\theta}_n-\bar{\theta}_n \to 0$ almost surely. Hence, it suffices to verify the consistency of the $\bar{\theta}_n$. This follows by Theorem 5.7 of \cite{van2000asymptotic}. To apply this theorem, we use that (i) The collection $(M_{n}(\theta))_{\theta \in \Theta}$ obeys a uniform law of large numbers, meaning $\sup_{\theta \in \Theta} | M_{n}(\theta) - M(\theta) | \overset{p}{\to} 0$ and (ii) the population criterion $M(\cdot)$ is strictly concave, so $ \sup_{\theta: | \theta - \theta_0| \geq \epsilon} M(\theta) < M(\theta_0).$ 

    \item Step 2: The local Liphshitz condition \eqref{eq:vdv1} is satisfied.\\
      For any $\theta_1,\, \theta_2 \in [\theta_0-h, \theta_0+h],$ we have the following condition holds:
    $$\left|m_{\theta_{1}}(x)-m_{\theta_{2}}(x)\right|=\| x-\theta_{1}|^{\alpha}-| x-\theta_{2}|^{\alpha}| \leq \alpha \max\{|x-\theta_0-h|^{\alpha}, |x-\theta_0+h|^{\alpha}\}\left|\theta_{1}-\theta_{2}\right|.$$
    \item Step 3: The Taylor expansion in \eqref{eq:vdv2} applies.\\
    It is clear that $M'(\theta_0) =0$, since $\theta_0$ is the maximizer. We will show the second derivative $M''(\cdot)$ exists and is continuous in a neighborhood of $\theta_0$. To see why this suffices, observe that by the mean-value form of Taylor's theorem, if $\theta>\theta_0$ is another point in the neighborhood of $\theta_0$ then there exists $\tilde{\theta} \in [\theta_0, \theta]$ such that 
    \begin{align*}
    M(\theta) = M(\theta_0) + \frac{1}{2} M''(\tilde{\theta}) (\theta-\theta_0)^2 
    &=  M(\theta_0) + \frac{1}{2} M''(\theta_0) (\theta-\theta_0)^2 + o\left( \| \theta-\theta_0 \| \right). 
    \end{align*}

    We now show $M''(\theta)$ exits and is continuous.  To do this, we argue that $m_{\theta}$ is twice differentiable almost everywhere and justify the change of limit and integral so that $M''(\theta)=\E\left[ \frac{\partial^2 m_\theta(x)}{\partial \theta^2}\right].$ By symmetry, we only need to consider $\theta > \theta_0$. 
    
    Notice that for any $x$, $\frac{\partial m_\theta(x)}{\partial \theta}$ and $\frac{\partial^2 m_\theta(x)}{\partial \theta^2}$ exist and are continuous except for $\theta = x$. If $\theta<x$, then $\frac{\partial m_\theta(x)}{\partial \theta} = \alpha (x-\theta)^{\alpha-1},\,\, \frac{\partial^2 m_\theta(x)}{\partial \theta^2} = -\alpha (\alpha-1) (x-\theta)^{\alpha-2}$. If $\theta>x$, then $\frac{\partial m_\theta(x)}{\partial \theta} = -\alpha (\theta-x)^{\alpha-1},\,\, \frac{\partial^2 m_\theta(x)}{\partial \theta^2} = -\alpha (\alpha-1) (\theta-x)^{\alpha-2}$. 
    
    Meanwhile notice that $\frac{\partial^2 m_\theta(x)}{\partial \theta^2}$ is integrable:
    $$\begin{aligned}
    \mathbb{E}\left[\left|\frac{\partial^2 m_\theta(x)}{\partial \theta^2}\right|\right]  = \int_{-\infty}^{+\infty}  \alpha (\alpha-1) |x-\theta|^{\alpha-2} e^{-|x-\theta_0|^\alpha} dx < +\infty,
    \end{aligned}$$
    where the fact that integral is finite depends on the fact that $\alpha>1$ and can be verified carefully by integration by parts (similar to the argument below).  
    
    Using the leibniz rule,  
    $$\begin{aligned}
    \frac{\partial^2 M(\theta)}{\partial \theta^2} = \mathbb{E}\left[ \frac{\partial^2 m_\theta(X)}{\partial \theta^2}\right] &= \mathbb{E}\left[ \frac{\partial^2 m_\theta(X)}{\partial X^2}\right] 
    \\ & = - \int_{\theta}^{+\infty} \alpha^2 (x-\theta)^{\alpha-1}(x-\theta_0)^{\alpha-1}e^{-(x-\theta_0)^\alpha} dx 
    \\&+ \int_{\theta_0}^{\theta} \alpha^2 (\theta-x)^{\alpha-1}(x-\theta_0)^{\alpha-1}e^{-(x-\theta_0)^\alpha} dx 
    \\&- \int_{-\infty}^{\theta_0} \alpha^2 (\theta-x)^{\alpha-1}(\theta_0-x)^{\alpha-1}e^{-(\theta_0-x)^\alpha} dx
    \end{aligned}.
    $$
    The last equality is due to integration by parts. We can see that $ \frac{\partial^2 M(\theta)}{\partial \theta^2}$ is bounded and continuous. Meanwhile, notice that $ \frac{\partial^2 M(\theta)}{\partial \theta^2}|_{\theta = \theta_0} = - 2 \int_{\theta_0}^{+\infty} \alpha^2 (x-\theta_0)^{2\alpha-2}e^{-(x-\theta_0)^\alpha} dx <0$, thus it is nonzero, which implies the existence of finite variance $\sigma^2$.
\end{itemize}
\end{proof}

Now we can prove Theorem \ref{estimationthm}:
\estimationthm*

\begin{proof}
We condition on the first $N$ samples which is used for estimating $\theta$ and take expectation. According to the definition of plug-in policy $\tau_N$, we have
$$ \mathbb{E}_{\theta}[\gamma^{\tau_N}X_{\tau_N}] = \mathbb{E}_{\theta}\mathbb{E}_{\theta}\left[\gamma^{\tau_N}X_{\tau_N }| X_1,\,X_2,\,...\,,X_N\right] = \gamma^N \mathbb{E}_{\theta}\mathbb{E}_{\theta}\left[\gamma^{\tau^*({\theta+\hat{\epsilon}})}X_{\tau^*({\theta+\hat{\epsilon}})}| X_1,\,X_2,\,...\,,X_N\right],$$
where $\hat{\epsilon}$ is the error of plug-in estimator for $\theta$ based on the first $N$ samples:
$$\hat{\epsilon} = \hat{\theta} - \theta.$$
Thus we have
\begin{equation}
\label{conditional}
    \Rc(F_{\theta}, \gamma, \tau_N) = 1 - \gamma^N (1-\mathbb{E}_{\theta}\left[\Rc(F_{\theta}, \gamma, \tau^*({\theta+\hat{\epsilon}}))| X_1,\,X_2,\,...\,,X_N\right]).
\end{equation}
Notice that
\begin{equation}
\label{decomposition_regret}
\begin{aligned}
    \mathbb{E}_{\theta}\left[\Rc(F_{\theta}, \gamma, \tau^*({\theta+\hat{\epsilon}}))| X_1,\,X_2,\,...\,,X_N\right] &= \mathbb{P}(\hat{\epsilon} >0) \mathbb{E}_{\theta}\left[\Rc(F_{\theta}, \gamma, \tau^*({\theta+\hat{\epsilon}}))| \hat{\epsilon}>0\right] 
    \\& + \mathbb{P}(\hat{\epsilon} \leq 0) \mathbb{E}_{\theta}\left[\Rc(F_{\theta}, \gamma, \tau^*({\theta+\hat{\epsilon}}))| \hat{\epsilon} \leq 0\right]
\end{aligned}
\end{equation}
The first term and second term in the right-hand side of equation \eqref{decomposition_regret} correspond to the contribution of overestimation and underestimation to the expected regret, respectively.

For the overestimation term, we continue to decompose it. Based on the sample size, we use different formulas.
\begin{enumerate}
    \item $\alpha >1$ and $N = o\left(\left(\frac{\left(\log{\frac{1}{1-\gamma}}\right)^{1-\frac{1}{\alpha}}}{\log\log{\frac{1}{1-\gamma}}}\right)^2\right)$
    
    In this case, we decompose the overestimation term as follows:
    \begin{equation}
    \label{epsilon_1decompose}
    \begin{aligned}
\mathbb{P}(\hat{\epsilon} > 0) \mathbb{E}_{\theta}\left[\Rc(F_{\theta}, \gamma, \tau^*({\theta+\hat{\epsilon}}))| \hat{\epsilon} > 0\right] &= \mathbb{P}(\hat{\epsilon} \geq \epsilon_1(\gamma)) \mathbb{E}_{\theta}\left[\Rc(F_{\theta}, \gamma, \tau^*({\theta+\hat{\epsilon}}))| \hat{\epsilon} \geq \epsilon_1(\gamma)\right]
\\ + \mathbb{P}(0 < \hat{\epsilon} < \epsilon_1(\gamma)) & \mathbb{E}_{\theta}\left[\Rc(F_{\theta}, \gamma, \tau^*({\theta+\hat{\epsilon}}))| 0 < \hat{\epsilon} < \epsilon_1(\gamma)\right]
    \end{aligned}
    \end{equation}
    where
    $$\epsilon_1(\gamma) :=  \frac{2}{\alpha}\left(\log{\frac{1}{1-\gamma}}\right)^{\frac{1}{\alpha}-1} \log\log{\frac{1}{1-\gamma}}.$$
    Note that $N(\gamma) \rightarrow +\infty$ as $\gamma \rightarrow 1$; by Lemma 7, $\sqrt{N}\hat{\epsilon} \stackrel{d}{\longrightarrow} N\left(0, \sigma^2\right)$, where $\sigma^2 >0$. Also $\epsilon_1(\gamma) = o\left(\frac{1}{\sqrt{N(\gamma)}}\right)$. Thus as $\gamma \rightarrow 1$, we have $\mathbb{P}(\hat{\epsilon} \geq \epsilon_1(\gamma)) \rightarrow \frac{1}{2}$ and $\mathbb{P}(0 < \hat{\epsilon} < \epsilon_1(\gamma)) \rightarrow 0$. Meanwhile, by Theorem \ref{thmpositive} and bounded convergence theorem, we know that $\mathbb{E}_{\theta}\left[\Rc(F_{\theta}, \gamma, \tau^*({\theta+\hat{\epsilon}}))| \hat{\epsilon} \geq \epsilon_1(\gamma)\right] \rightarrow 1$. Therefore as $\gamma \rightarrow 1$, it holds that
    $$\mathbb{P}(\hat{\epsilon} > 0) \mathbb{E}_{\theta}\left[\Rc(F_{\theta}, \gamma, \tau^*({\theta+\hat{\epsilon}}))| \hat{\epsilon} > 0\right] \longrightarrow \frac{1}{2}.$$
    
    \item $\alpha >1$ and $N = \omega\left(\left(\frac{\left(\log{\frac{1}{1-\gamma}}\right)^{1-\frac{1}{\alpha}}}{\log\log{\frac{1}{1-\gamma}}}\right)^2\right)$
    
    In this case, we decompose the overestimation term as follows:
    \begin{equation}
    \label{epsilon_2decompose}
    \begin{aligned}
\mathbb{P}(\hat{\epsilon} > 0) \mathbb{E}_{\theta}\left[\Rc(F_{\theta}, \gamma, \tau^*({\theta+\hat{\epsilon}}))| \hat{\epsilon} > 0\right] &= \mathbb{P}(\hat{\epsilon} \geq \epsilon_2(\gamma)) \mathbb{E}_{\theta}\left[\Rc(F_{\theta}, \gamma, \tau^*({\theta+\hat{\epsilon}}))| \hat{\epsilon} \geq \epsilon_2(\gamma)\right]
\\ + \mathbb{P}(0 < \hat{\epsilon} < \epsilon_2(\gamma)) & \mathbb{E}_{\theta}\left[\Rc(F_{\theta}, \gamma, \tau^*({\theta+\hat{\epsilon}}))| 0 < \hat{\epsilon} < \epsilon_2(\gamma)\right]
    \end{aligned}
    \end{equation}
    where
    $$\epsilon_2(\gamma) :=  \frac{1}{2\alpha}\left(\log{\frac{1}{1-\gamma}}\right)^{\frac{1}{\alpha}-1} \log\log{\frac{1}{1-\gamma}}.$$
    Note that $\epsilon_2(\gamma) = \omega\left(\frac{1}{\sqrt{N(\gamma)}}\right)$, thus as $\gamma \rightarrow 1$, we have $\mathbb{P}(\hat{\epsilon} \geq \epsilon_2(\gamma)) \rightarrow 0$ and $\mathbb{P}(0 < \hat{\epsilon} < \epsilon_2(\gamma)) \rightarrow \frac{1}{2}$. Notice that $\mathbb{E}_{\theta}\left[\Rc(F_{\theta}, \gamma, \tau^*({\theta+\hat{\epsilon}}))| \hat{\epsilon} \geq \epsilon_2(\gamma)\right]$ is bounded by $1$. Meanwhile, by Theorem \ref{thmpositive} and bounded convergence theorem, we know that $\mathbb{E}_{\theta}\left[\Rc(F_{\theta}, \gamma, \tau^*({\theta+\hat{\epsilon}}))| 0 < \hat{\epsilon} < \epsilon_2(\gamma)\right] \rightarrow 0$. Therefore as $\gamma \rightarrow 1$, it holds that
    $$\mathbb{P}(\hat{\epsilon} > 0) \mathbb{E}_{\theta}\left[\Rc(F_{\theta}, \gamma, \tau^*({\theta+\hat{\epsilon}}))| \hat{\epsilon} > 0\right] \longrightarrow 0.$$

\end{enumerate}

For the underestimation term in the right-hand side of equation \eqref{decomposition_regret}, we can decompose it in a unified way regardless of different cases:
$$\begin{aligned}
\mathbb{P}(\hat{\epsilon} \leq 0) \mathbb{E}_{\theta}\left[\Rc(F_{\theta}, \gamma, \tau^*({\theta+\hat{\epsilon}}))| \hat{\epsilon} \leq 0\right] &= \mathbb{P}(\hat{\epsilon} \leq -\epsilon_3(\gamma)) \mathbb{E}_{\theta}\left[\Rc(F_{\theta}, \gamma, \tau^*({\theta+\hat{\epsilon}}))| \hat{\epsilon} \leq -\epsilon_3(\gamma)\right]
\\ + \mathbb{P}(\epsilon_3(\gamma) < \hat{\epsilon} \leq 0) & \mathbb{E}_{\theta}\left[\Rc(F_{\theta}, \gamma, \tau^*({\theta+\hat{\epsilon}}))| \epsilon_3(\gamma) < \hat{\epsilon} \leq 0\right]
\end{aligned}$$
where
$$\epsilon_3(\gamma) := \left(\log{\frac{1}{1-\gamma}}\right)^{\frac{1}{2\alpha}}.$$
As $\gamma \rightarrow 1$, we have $\epsilon_3(\gamma) \rightarrow +\infty$. By the similar argument as in the case 2 for computing overestimation term, we obtain the result that as $\gamma \rightarrow 1$, it holds that
\begin{equation}
\label{underestimation_ineq}
    \mathbb{P}(\hat{\epsilon} \leq 0) \mathbb{E}_{\theta}\left[\Rc(F_{\theta}, \gamma, \tau^*({\theta+\hat{\epsilon}}))| \hat{\epsilon} \leq 0\right] \longrightarrow 0.
\end{equation}
    

Notice that if $N = o\left(\frac{1}{1-\gamma}\right)$, then $\gamma ^N \rightarrow 1$ as $\gamma \rightarrow 1.$ Combining the computation for different cases and equation \eqref{conditional}, we prove the result stated in Theorem \ref{estimationthm}.
\end{proof}

\bigskip
We recall Theorem \ref{estimationthm_onesample}:
\onesample*

\begin{proof}
The proof of Theorem \ref{estimationthm_onesample} shares exactly the same spirit with the proof of Theorem \ref{estimationthm}. Because there is only single sample, we have $\hat{\theta} = X_1$, thus 
$$\hat{\epsilon} = X_1 - \theta.$$
We still consider the equation \eqref{decomposition_regret} and decompose the overestimation term as equation \eqref{epsilon_2decompose}. Note that $\alpha \leq 1$ implies $\epsilon_2(\gamma) \rightarrow +\infty$ as $\gamma \rightarrow 1$, thus we have
$$\lim_{\gamma \rightarrow 1}\mathbb{P}\{|\hat{\epsilon}| \geq \epsilon_2(\gamma)\} =  \lim_{\epsilon \rightarrow +\infty}\mathbb{P}\{|X_1-\theta| > \epsilon\} = 0.$$
By the similar argument as in the case 2 for computing overestimation term, we obtain the same result that as $\gamma \rightarrow 1$, it holds that
$$\mathbb{P}(\hat{\epsilon} > 0) \mathbb{E}_{\theta}\left[\Rc(F_{\theta}, \gamma, \tau^*({\theta+\hat{\epsilon}}))| \hat{\epsilon} > 0\right] \longrightarrow 0.$$
Combining with the same underestimation result \eqref{underestimation_ineq}, we finish the proof of Theorem \ref{estimationthm_onesample}.
\end{proof}

\section{Proof of Proposition \ref{prop:stopping_prob}}
\label{proofprop2}
In this section, we prove the Proposition \ref{prop:stopping_prob}. We use  ground truth stopping probability $P_{0}(S^*(F_0,\gamma))$ as an intermediate, and transfer the comparison of $P_{0}(S^*(F_\gamma,\gamma))$ with $1-\gamma$ to that of $P_{0}(S^*(F_\gamma,\gamma))$ with $P_{0}(S^*(F_0,\gamma))$. Since we from now on we only consider the exponential-decay distribution $F_{\theta}$ specified in \eqref{distribution}, we slightly change the notation without ambiguity.

We first introduce a lemma focusing on the first-order approximation of the tailed probability ratio 
$\frac{P_{\theta_0}(S^*(\theta_0,\gamma))}{P_{\theta_0}(S^*(\theta_0+\epsilon(\gamma),\gamma))}.$
To simplify our notation, we denote $S^*(\theta_0,\gamma)$ as $S^*_0(\gamma)$.

\begin{restatable}{lemma}{lemma6.3}
\label{lemma6.3}
Suppose $\Delta(\gamma)$ is a non-negative function satisfying $\Delta(\gamma) = o\left(\left(\log{\frac{1}{1-\gamma}}\right)^{\frac{1}{\alpha}}\right)$, then for any constant $c_0,\, c_0>0$, there exists $\gamma_0$ such that for every $\gamma \geq \gamma_0$, it holds that
\begin{equation}
\label{6.3approx}
    \exp{\left((1-c_0) h_{\theta_0}(S^*_0(\gamma))\cdot \Delta(\gamma)\right)}\leq \frac{P_{\theta_0}(S^*_0(\gamma))}{P_{\theta_0}(S^*_0(\gamma)+\Delta(\gamma))}
    \leq \exp{\left((1+c_0) h_{\theta_0}(S^*_0(\gamma))\cdot \Delta(\gamma)\right)}.
\end{equation}
\end{restatable}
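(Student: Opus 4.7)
The natural starting point is to rewrite the ratio on a logarithmic scale and use the first derivative identity from Lemma \ref{qiudaobudengshi}. Specifically, since $\partial \log P_{\theta_0}(x)/\partial x = -h_{\theta_0}(x)$, integration gives
\[
\log \frac{P_{\theta_0}(S^*_0(\gamma))}{P_{\theta_0}(S^*_0(\gamma)+\Delta(\gamma))} = \int_{S^*_0(\gamma)}^{S^*_0(\gamma)+\Delta(\gamma)} h_{\theta_0}(x)\, dx.
\]
Exponentiating, the bounds in \eqref{6.3approx} are equivalent to showing, for every fixed $c_0>0$ and all $\gamma$ sufficiently close to $1$,
\[
(1-c_0)\, h_{\theta_0}(S^*_0(\gamma))\,\Delta(\gamma) \;\le\; \int_{S^*_0(\gamma)}^{S^*_0(\gamma)+\Delta(\gamma)} h_{\theta_0}(x)\, dx \;\le\; (1+c_0)\, h_{\theta_0}(S^*_0(\gamma))\,\Delta(\gamma).
\]
So the task reduces to showing that $h_{\theta_0}(x)/h_{\theta_0}(S^*_0(\gamma))$ is within a $(1\pm c_0)$ band for all $x \in [S^*_0(\gamma), S^*_0(\gamma)+\Delta(\gamma)]$, once $\gamma$ is large enough.

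The plan for this uniform control is to use the asymptotic $h_{\theta_0}(S) \sim \alpha(S-\theta_0)^{\alpha-1}$ from Lemma \ref{uniform g and h} together with Lemma \ref{Lemma5.1}. By Lemma \ref{Lemma5.1}, $S^*_0(\gamma) - \theta_0 \sim (\log(1/(1-\gamma)))^{1/\alpha} \to \infty$, and since $\Delta(\gamma) = o((\log(1/(1-\gamma)))^{1/\alpha})$ by hypothesis, we have $\Delta(\gamma)/(S^*_0(\gamma) - \theta_0) \to 0$. Writing $x = S^*_0(\gamma) + t$ with $t \in [0, \Delta(\gamma)]$, Lemma \ref{uniform g and h} gives
\[
\frac{h_{\theta_0}(S^*_0(\gamma)+t)}{h_{\theta_0}(S^*_0(\gamma))} = \bigl(1+o(1)\bigr)\left(1 + \frac{t}{S^*_0(\gamma)-\theta_0}\right)^{\alpha-1},
\]
and the right-hand side tends to $1$ uniformly in $t \in [0, \Delta(\gamma)]$ as $\gamma \to 1$ (the $o(1)$ depends only on how large $S-\theta_0$ is, which is uniformly large by Lemma \ref{Lemma5.1}, while the bracketed factor tends to $1$ uniformly because $t/(S^*_0(\gamma)-\theta_0) \to 0$). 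Integrating this uniform estimate against $dt$ over $[0,\Delta(\gamma)]$ yields the two-sided bound displayed above, hence \eqref{6.3approx}.

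\textbf{Main obstacle.} The only subtle issue is making the uniform control of the ratio $h_{\theta_0}(S^*_0(\gamma)+t)/h_{\theta_0}(S^*_0(\gamma))$ truly uniform over $t \in [0,\Delta(\gamma)]$ while simultaneously letting $\gamma \to 1$. This must be handled carefully because both the endpoint $S^*_0(\gamma)$ and the shift $\Delta(\gamma)$ vary with $\gamma$. The resolution is that Lemma \ref{uniform g and h} provides uniform convergence of $h_{\theta_0}(T+\theta_0)/(\alpha T^{\alpha-1}) \to 1$ over all $T$ that are eventually large, while Lemma \ref{Lemma5.1} guarantees that the entire integration interval lies in the regime $T$ large with the endpoints having ratio $\to 1$. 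One could alternatively bypass the asymptotics for $h_{\theta_0}$ and instead bound $|h_{\theta_0}(S^*_0+t) - h_{\theta_0}(S^*_0)|$ directly using the second-derivative bound on $\log P_{\theta_0}$ from Lemma \ref{qiudaobudengshi}, but routing through the explicit asymptotic $\alpha(S-\theta_0)^{\alpha-1}$ is cleaner and handles the $\alpha \in [1/2,1)$ and $\alpha \geq 1$ cases uniformly.
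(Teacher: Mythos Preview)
Your proof is correct and takes a genuinely different route from the paper's.

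The paper argues via a second-order Taylor expansion with Lagrange remainder: it writes
\[
\log P_{\theta_0}(S^*_0(\gamma)) - \log P_{\theta_0}(S^*_0(\gamma)+\Delta(\gamma)) = h_{\theta_0}(S^*_0(\gamma))\,\Delta(\gamma) - \tfrac{1}{2}\,\frac{\partial^2 \log P_{\theta_0}}{\partial x^2}(\tilde S)\,\Delta(\gamma)^2
\]
and then bounds the second-derivative remainder explicitly using Lemma~\ref{qiudaobudengshi}, with a case split over $\alpha \geq 2$, $\alpha \in (1,2)$, and $\alpha \in [1/2,1)$ (because the second-derivative bound takes different forms and is monotone in different directions in each regime). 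You instead use the exact integral representation $\int h_{\theta_0}(x)\,dx$ and control the ratio $h_{\theta_0}(S^*_0(\gamma)+t)/h_{\theta_0}(S^*_0(\gamma))$ directly through the asymptotic $h_{\theta_0}(S)\sim \alpha(S-\theta_0)^{\alpha-1}$ from Lemma~\ref{uniform g and h} combined with $\Delta(\gamma)/(S^*_0(\gamma)-\theta_0)\to 0$. This avoids both the second-derivative bound and the case analysis on $\alpha$, and is shorter; the paper's approach has the minor advantage that it makes the order of the remainder ($\Delta(\gamma)/(S^*_0(\gamma)-\theta_0)$ times the leading term) somewhat more explicit, but that extra precision is not used downstream. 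Amusingly, the alternative you mention and set aside (``bound $|h_{\theta_0}(S^*_0+t)-h_{\theta_0}(S^*_0)|$ via the second-derivative bound'') is essentially what the paper does.
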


\begin{proof}
Without loss of generality we assume $S^*_0(\gamma) > \theta_0+1$. (By Lemma \ref{Lemma5.1}, we know that there exists $\gamma_0\prime$ such that for every $\gamma \geq \gamma_0\prime$, $S^*_0(\gamma) > \theta_0+1$.) Notice that $$ \frac{P_{\theta_0}(S^*_0(\gamma))}{P_{\theta_0}(S^*_0(\gamma)+\Delta(\gamma))} = \exp{(\log{P_{\theta_0}(S^*_0(\gamma))} - \log{P_{\theta_0}(S^*_0(\gamma)+\Delta(\gamma))})},$$ 
by Mean Value Theorem,
\begin{equation}
\label{6.3taylor}
    \log{P_{\theta_0}(S^*_0(\gamma))} - \log{P_{\theta_0}(S^*_2(\gamma))} = -\frac{\partial \log{P_{\theta_0}(x)}}{\partial x} (S^*_0(\gamma)) \cdot \Delta(\gamma) -\frac{1}{2}\cdot \frac{\partial^2 \log{P_{\theta_0}(x)}}{\partial x^2}(\Tilde{S}) \Delta(\gamma)^2,
\end{equation}
where $\Tilde{S} \in [S^*_0(\gamma), S^*_0(\gamma)+\Delta(\gamma)].$

\medskip
By Lemma \ref{qiudaobudengshi} and Lemma \ref{uniform g and h}, we have 
$$\frac{\partial \log{P_{\theta_0}(x)}}{\partial x}(S_0^*(\gamma)) = -h_{\theta_0}(S_0^*(\gamma)) \sim -\alpha (S_0^*(\gamma)-\theta_0)^{\alpha-1},$$
thus there exists $\gamma_1$ such that for every $\gamma \geq \gamma_1$, it holds that
\begin{equation}
\label{yijiedaobound}
    \left|\frac{\partial \log{P_{\theta_0}(x)}}{\partial x}(S_0^*(\gamma))\right| \geq \frac{1}{2}\cdot \alpha (S_0^*(\gamma)-\theta_0)^{\alpha-1}.
\end{equation}

Next we compute a uniform bound on the second derivative term.
\begin{itemize}
    \item If $\alpha > 1$
    
    By Lemma \ref{qiudaobudengshi}, for $\forall S\in [S^*_0(\gamma), S^*_0(\gamma)+\Delta(\gamma)]$, it holds that 
 \begin{equation}
 \label{secondderivative_order}
    \left|\frac{\partial^2 \log{P_{\theta_0}(x)}}{\partial x^2}(S)\right| \leq \frac{\alpha^3 (\alpha-1) (S-\theta_0)^{3\alpha-2}}{\left(\alpha(S-\theta_0)^{\alpha} -(\alpha-1) \right)^2}.
 \end{equation}
    \begin{itemize}
        \item  If $\alpha \geq 2$, we can pick up $\gamma^{\prime}_2$ such that for every $S > S^*_0(\gamma^{\prime}_2)$, the function in the right-hand side of equation \eqref{secondderivative_order} is monotonically increasing. Thus for every $\gamma > \gamma^{\prime}_2$, it holds that
$$\begin{aligned}
\max_{S\in [S^*_0(\gamma), S^*_0(\gamma)+\Delta(\gamma)]} \left|\frac{\partial^2 \log{P_{\theta_0}(x)}}{\partial x^2}(S)\right| &\leq \frac{\alpha^3 |\alpha-1| (S^*_0(\gamma)+\Delta(\gamma)-\theta_0)^{3\alpha-2}}{\left(\alpha(S^*_0(\gamma)+\Delta(\gamma)-\theta_0)^{\alpha} -(\alpha-1) \right)^2}
\\ &\sim \alpha |\alpha-1|(S^*_0(\gamma)+\Delta(\gamma)-\theta_0)^{\alpha-2}.
\end{aligned}$$
Thus there exists $\gamma_2 \geq \gamma_2^{\prime}$ such that for every $\gamma \geq \gamma_2$, $S^*_0(\gamma)+\Delta(\gamma) \geq S^*_0(\gamma) \geq S^*_0(\gamma_2)$, and it holds that
\begin{equation}
\label{6.3.erjiedaobound1}
    \max_{S\in [S^*_0(\gamma), S^*_0(\gamma)+\Delta(\gamma)]} \left|\frac{\partial^2 \log{P_{\theta_0}(x)}}{\partial x^2}(S)\right| \leq 2\alpha |\alpha-1|(S^*_0(\gamma)+\Delta(\gamma)-\theta_0)^{\alpha-2}.
\end{equation}
     \item  If $\alpha \in \left(1,2\right)$, we can pick up $\gamma^{\prime}_2$ such that for every $S > S^*_0(\gamma^{\prime}_2)$, the function in the right-hand side of equation \eqref{secondderivative_order} is monotonically decreasing. Thus for every $\gamma > \gamma^{\prime}_2$, it holds that
$$\begin{aligned}
\max_{S\in [S^*_0(\gamma), S^*_0(\gamma)+\Delta(\gamma)]} \left|\frac{\partial^2 \log{P_{\theta_0}(x)}}{\partial x^2}(S)\right| &\leq \frac{\alpha^3 |\alpha-1| (S^*_0(\gamma)-\theta_0)^{3\alpha-2}}{\left(\alpha(S^*_0(\gamma)-\theta_0)^{\alpha} -(\alpha-1) \right)^2}
\\ &\sim \alpha |\alpha-1|(S^*_0(\gamma)-\theta_0)^{\alpha-2}.
\end{aligned}$$
Thus there exists $\gamma_2 \geq \gamma_2^{\prime}$ such that for every $\gamma \geq \gamma_2$, $S^*_0(\gamma) \geq S^*_0(\gamma_2)$, and it holds that
\begin{equation}
\label{6.3.erjiedaobound}
    \max_{S\in [S^*_0(\gamma), S^*_0(\gamma)+\Delta(\gamma)]} \left|\frac{\partial^2 \log{P_{\theta_0}(x)}}{\partial x^2}(S)\right| \leq 2\alpha |\alpha-1|(S^*_0(\gamma)-\theta_0)^{\alpha-2}.
\end{equation}
    \end{itemize}
    \item If $\alpha \in \left[\frac{1}{2},1\right]$
    
    By Lemma \ref{qiudaobudengshi}, for $\forall S\in [S^*_0(\gamma), S^*_0(\gamma)+\Delta(\gamma)]$, it holds that 
$$\begin{aligned}
\left|\frac{\partial^2 \log{P_{\theta}(x)}}{\partial x^2}(S)\right| &\leq \alpha|\alpha-1|(S-\theta)^{\alpha-2}
\\ &\sim \alpha|\alpha-1|(S^*_0(\gamma)-\theta)^{\alpha-2}.
\end{aligned}$$
 Thus there exists $\gamma_2$ such that for every $\gamma \geq \gamma_2$, $S^*_0(\gamma) \geq S^*_0(\gamma_2)$, \eqref{6.3.erjiedaobound} holds.
\end{itemize}
Therefore, for any $\alpha \geq \frac{1}{2}$, there exists $\gamma_2$ such that for any $\gamma \geq \gamma_2$, it holds that
\begin{equation}
\label{erjiedaobound}
    \max_{S\in [S^*_0(\gamma), S^*_0(\gamma)+\Delta(\gamma)]} \left|\frac{\partial^2 \log{P_{\theta_0}(x)}}{\partial x^2}(S)\right| \leq 2\alpha |\alpha-1|\max{\{(S^*_0(\gamma)-\theta_0)^{\alpha-2}, (S^*_0(\gamma)+\Delta(\gamma)-\theta_0)^{\alpha-2}\}}.
\end{equation}
As $\gamma \rightarrow 1$, as long as $\Delta(\gamma) = S^*_0(\gamma)+\Delta(\gamma) - S^*_0(\gamma) = o\left(\left(\log{\frac{1}{1-\gamma}}\right)^{\frac{1}{\alpha}}\right)$, according to Lemma \ref{Lemma5.1}, there exists $\gamma_3$ such that for every $\gamma \geq \gamma_3$, it holds that 
$$\Delta(\gamma) \leq \frac{1}{2}\left(\log{\frac{1}{1-\gamma}}\right)^{\frac{1}{\alpha}}\leq S^*_0(\gamma) - \theta_0,$$
hence $S^*_0(\gamma)-\theta_0 \leq S^*_0(\gamma)+\Delta(\gamma)-\theta_0 \leq 2(S^*_0(\gamma)-\theta_0)$ holds as $\gamma \geq \gamma_3$.
Combining with \eqref{yijiedaobound} and \eqref{erjiedaobound}, we have as $\gamma \geq \max{\{\gamma_1,\gamma_2,\gamma_3\}}$, it holds that
$$\frac{\max_{S\in [S^*_0(\gamma), S^*_0(\gamma)+\Delta(\gamma)]} \left|\frac{\partial^2 \log{P_{\theta_0}(x)}}{\partial x^2}(S)\right|}{\left|\frac{\partial \log{P_{\theta_0}(x)}}{\partial x}(S_0^*(\gamma))\right|} \leq \frac{\max{\{2,2^{\alpha-1}\}}\alpha |\alpha-1|(S^*_0(\gamma)-\theta_0)^{\alpha-2}}{\frac{1}{2}\cdot \alpha (S_0^*(\gamma)-\theta_0)^{\alpha-1}} \leq \frac{2^{\alpha+2}|\alpha-1|}{S_0^*(\gamma)-\theta_0}.$$
Notice that $S_0^*(\gamma)-\theta_0 \sim \left(\log{\frac{1}{1-\gamma}}\right)^{\frac{1}{\alpha}}$ while $\Delta(\gamma) = o\left(\left(\log{\frac{1}{1-\gamma}}\right)^{\frac{1}{\alpha}}\right)$, thus for any $c_0,\, c_0>0$, there exists $\gamma_0,\, \gamma_0 \geq \max{\{\gamma_1,\gamma_2,\gamma_3\}}$ such that
$$\frac{2^{\alpha}|\alpha-1|\cdot \Delta(\gamma)}{S_0^*(\gamma)-\theta_0} \leq c_0.$$

Therefore for every $\gamma \geq \gamma_0$, it holds that
$$\max_{S\in [S^*_0(\gamma), S^*_0(\gamma)+\Delta(\gamma)]} \left|\frac{\partial^2 \log{P_{\theta_0}(x)}}{\partial x^2}(S)\right|\Delta(\gamma)^2 < c_0\cdot \left|\frac{\partial \log{P_{\theta_0}(x)}}{\partial x}(S_0^*(\gamma))\right| \Delta(\gamma),$$
which combined with Lemma \ref{qiudaobudengshi} and the Taylor Expansion equation \eqref{6.3taylor} leads to the equation \eqref{6.3approx}.
\end{proof}

Now we are well-prepared to prove Proposition \ref{prop:stopping_prob}.
\propOnTimeStopping*

\begin{proof}[Proof of first claim]
Without loss of generality, we assume that $\epsilon(\gamma)$ satisfies \eqref{thetabound}, i.e., $\epsilon(\gamma) = o\left(\left(\log{\frac{1}{1-\gamma}}\right)^{\frac{1}{\alpha}}\right)$. By Corollary \ref{coro5.6}, for any constant $c_1>0$, there exists $\gamma_0$ such that for every $\gamma \geq \gamma_0$, it holds that
$$\Delta(\gamma) := S^*(\theta_0 + \epsilon, \gamma) - S_0^*(\gamma) \geq \left(1+\frac{c_1}{2}\right)\cdot \frac{\log{\log{\frac{1}{1-\gamma}}}}{\alpha \left(\log{\frac{1}{1-\gamma}}\right)^{1-\frac{1}{\alpha}}}$$
and
$$\Delta(\gamma) \leq \frac{3}{2}\epsilon(\gamma).$$

Notice that by Lemma \ref{uniform g and h}, Lemma \ref{Lemma5.1} and Corollary \ref{proposition5.2}, it holds that
\begin{equation}
\label{SgsASYMO}
    \frac{\log{(S^*_0(\gamma)\cdot g_{\theta_0}(S^*_0(\gamma))})}{h_{\theta_0}(S^*_0(\gamma))} \sim \frac{\log{\log{\frac{1}{1-\gamma}}}}{\alpha \left(\log{\frac{1}{1-\gamma}}\right)^{1-\frac{1}{\alpha}}},
\end{equation}
thus there exists $\gamma_0^{\prime} \geq \gamma_0$ such that for every $\gamma \geq \gamma_0^{\prime}$, it holds that
\begin{equation}
\label{deltalarge}
    \left(1+\frac{c_1}{4}\right)\cdot \frac{\log{(S^*_0(\gamma)\cdot g_{\theta_0}(S^*_0(\gamma))})}{h_{\theta_0}(S^*_0(\gamma))} \leq \Delta(\gamma) \leq \frac{3}{2} \epsilon(\gamma).
\end{equation}


By first-order approximation Lemma \ref{lemma6.3}, there exists $\gamma_1^{\prime} \geq \gamma_0^{\prime}$ such that for every $\gamma \geq \gamma_1^{\prime}$, it holds that
$$\frac{P_{\theta_0}(S_0^*(\gamma))}{P_{\theta_0}(S^*(\theta_0+\epsilon(\gamma),\gamma))} \geq \left(S^*_0(\gamma)\cdot g_{\theta_0}(S^*_0(\gamma))\right)^{1+\frac{c_1}{8}},$$

Recall that
$$ P_{\theta_0}(S^*_0(\gamma))= \frac{ S^*_0(\gamma) g_{\theta_0}(S^*_0(\gamma)) \cdot (1-\gamma)}{\gamma}.$$

hence
$$P_{\theta_0}(S^*(\theta_0+\epsilon(\gamma),\gamma)) \leq \frac{1-\gamma}{\gamma} \cdot \left(S^*_0(\gamma)\cdot g_{\theta_0}(S^*_0(\gamma))\right)^{-\frac{c_1}{8}}.$$
By Corollary \ref{proposition5.2}, we know that $S^*_0(\gamma)\cdot g_{\theta_0}(S^*_0(\gamma))\sim \alpha \log{\frac{1}{1-\gamma}}$, thus for any constant $m_0>0$, there exists $\gamma_1 \geq \gamma_1^{\prime}$ such that for every $\gamma \geq \gamma_1$, it holds that $ P_{\theta_0}(S^*(\theta_0+\epsilon(\gamma),\gamma)) \leq m_0(1-\gamma)$.
\end{proof}

\begin{proof}[Proof of second claim]
By Corollary \ref{coro5.6}, for any constant $c_2>0$, there exists $\gamma_0$ such that for every $\gamma \geq \gamma_0$, it holds that
$$\Delta(\gamma) \leq \left(1-\frac{c_2}{2}\right)\cdot \frac{\log{\log{\frac{1}{1-\gamma}}}}{\alpha \left(\log{\frac{1}{1-\gamma}}\right)^{1-\frac{1}{\alpha}}}.$$
By equation \eqref{SgsASYMO}, there exists $\gamma_0^{\prime} \geq \gamma_0$ such that for every $\gamma \geq \gamma_0^{\prime}$, it holds that
\begin{equation}
\label{deltasmall}
    \Delta(\gamma) \leq \left(1-\frac{c_2}{4}\right)\cdot \frac{\log{(S^*_0(\gamma)\cdot g_{\theta_0}(S^*_0(\gamma))})}{h_{\theta_0}(S^*_0(\gamma))}
\end{equation}
By first-order approximation Lemma \ref{lemma6.3}, there exists $\gamma_2^{\prime} \geq \gamma_0^{\prime}$ such that for every $\gamma \geq \gamma_2^{\prime}$, it holds that
$$\frac{P_{\theta_0}(S_0^*(\gamma))}{P_{\theta_0}(S^*(\theta_0+\epsilon(\gamma),\gamma))} \leq \left(S^*_0(\gamma)\cdot g_{\theta_0}(S^*_0(\gamma))\right)^{1-\frac{c_2}{8}},$$
thus
$$P_{\theta_0}(S^*(\theta_0+\epsilon(\gamma),\gamma)) \geq \frac{1-\gamma}{\gamma} \cdot \left(S^*_0(\gamma)\cdot g_{\theta_0}(S^*_0(\gamma))\right)^{\frac{c_2}{8}}.$$
Thus for any $M_0>0$, there exists $\gamma_2 \geq \gamma_2^{\prime}$ such that for every $\gamma \geq \gamma_2$, it holds that $ P_{\theta_0}(S^*(\theta_0+\epsilon(\gamma),\gamma)) \geq M_0(1-\gamma)$.
\end{proof}

\section{Proof of Proposition \ref{phasetransitionlemma}}
\label{proofprop1}
In this section we give the proof of Proposition \ref{phasetransitionlemma}. We first reformulate the Bellman Equation \eqref{threhsoldbellman} through lens of tailed probability and conditional expectation.

\begin{restatable}{lemma}{reformulate}
\label{reformulate}
The optimal threshold $S^*(F,\gamma)$ to the Bellman Equation \eqref{threhsoldbellman} satisfies:
\begin{equation}
\label{reform}
    P_{0}(S^*(F_0,\gamma)) = \frac{1-\gamma}{\gamma}\cdot \frac{S^*(F_0,\gamma)}{\mu_{0}(S^*(F_0,\gamma)) - S^*(F_0,\gamma)}.
\end{equation}
\end{restatable}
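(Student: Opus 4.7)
The plan is to directly rearrange the Bellman equation \eqref{threhsoldbellman} into the asserted form by decomposing the expectation $\mathbb{E}_{F_0}[\max\{X, S^*\}]$ according to whether $X$ exceeds the threshold.

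First, I would write
$$
\mathbb{E}_{F_0}\bigl[\max\{X, S^*\}\bigr]
= S^*\,\mathbb{P}_{F_0}(X \leq S^*) + \mathbb{E}_{F_0}\bigl[X\,\mathbf{1}\{X > S^*\}\bigr].
$$
Using the definitions $P_0(S^*) = \mathbb{P}_{F_0}(X > S^*)$ and $\mu_0(S^*) = \mathbb{E}_{F_0}[X \mid X > S^*]$, so that $\mathbb{E}_{F_0}[X\,\mathbf{1}\{X>S^*\}] = P_0(S^*)\,\mu_0(S^*)$, this simplifies to
$$
\mathbb{E}_{F_0}\bigl[\max\{X, S^*\}\bigr]
= S^*\bigl(1 - P_0(S^*)\bigr) + P_0(S^*)\,\mu_0(S^*)
= S^* + P_0(S^*)\bigl(\mu_0(S^*) - S^*\bigr).
$$

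Next, substituting this identity into the Bellman equation $S^* = \gamma\,\mathbb{E}_{F_0}[\max\{X, S^*\}]$ with $S^* = S^*(F_0,\gamma)$ and rearranging,
$$
S^*\,\frac{1-\gamma}{\gamma} = P_0(S^*)\bigl(\mu_0(S^*) - S^*\bigr).
$$
Since $F_0$ has unbounded support to the right of $S^*$, the conditional mean satisfies $\mu_0(S^*) > S^*$, so dividing both sides by $\mu_0(S^*) - S^*$ yields \eqref{reform}.

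There is no real obstacle here: the result is a direct algebraic restatement of the Bellman fixed-point equation in terms of the tail probability and truncated mean, and the only point requiring a brief remark is the positivity of $\mu_0(S^*) - S^*$ that legitimizes the division.
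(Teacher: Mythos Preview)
Your proof is correct and is essentially the same rearrangement the paper performs: the paper routes through the integral form $\frac{1}{\gamma}S^* = \frac{1}{1-\gamma}\int_{S^*}^{\infty} P_0(x)\,dx$ and then invokes Lemma~\ref{lemma3.1} (which is precisely the identity $\int_S^\infty P_0(x)\,dx = P_0(S)(\mu_0(S)-S)$), whereas you obtain the same identity directly by decomposing $\mathbb{E}[\max\{X,S^*\}]$. The two arguments are equivalent, yours being marginally more self-contained.
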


\begin{proof}
We reformulate the Bellman Equation \eqref{threhsoldbellman} as
\begin{equation}
\label{newbellman}
    \frac{1}{\gamma}S^{*}(F_0,\gamma)=\frac{1}{1-\gamma} \int_{ S^{*}(F_0,\gamma)}^{+\infty}P_{0}(x) d x.
\end{equation}
Recall $g(S) := \frac{P(S)}{\int_S^{+\infty} P(x) dx}$. By Lemma \ref{lemma3.1}, we obtain the equation \eqref{reform}.
\end{proof}

By Lemma \ref{reformulate}, we immediately obtain the following corollary if the underlying distribution $F_0$ satisfies $\mu_0(S) \sim S$.

\begin{restatable}{coro}{prop4.2}
\label{prop4.2}
If distribution $F_0$ satisfies \eqref{mus-s}, then it holds that 
\begin{equation}
\label{1-gammacondition}
    P_{0}(S^*(F_0, \gamma)) = \omega\left(1-\gamma\right).
\end{equation}
\end{restatable}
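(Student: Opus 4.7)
The plan is to combine Lemma \ref{reformulate}, which exactly expresses $P_0(S^*(F_0,\gamma))$ in terms of the ratio $S^*/(\mu_0(S^*)-S^*)$, with the hypothesis $\mu_0(S)\sim S$. Concretely, Lemma \ref{reformulate} gives
\[
\frac{P_0(S^*(F_0,\gamma))}{1-\gamma} \;=\; \frac{1}{\gamma}\cdot \frac{S^*(F_0,\gamma)}{\mu_0(S^*(F_0,\gamma))-S^*(F_0,\gamma)},
\]
so the entire task reduces to showing that the right-hand side tends to infinity as $\gamma \to 1$. The factor $1/\gamma$ is harmless, so the goal is purely to show that $S^*(F_0,\gamma)/(\mu_0(S^*(F_0,\gamma))-S^*(F_0,\gamma))\to\infty$.

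The first step is to observe that, under the hypothesis $\mu_0(S)\sim S$, the function $S \mapsto S/(\mu_0(S)-S)$ tends to $\infty$ as $S\to\infty$, since $(\mu_0(S)-S)/S \to 0$. Thus it suffices to verify that $S^*(F_0,\gamma)\to\infty$ as $\gamma\to 1$.

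The second step, which is the only substantive one, is to establish this divergence of the Bellman threshold. I would argue by contradiction using the reformulated Bellman equation \eqref{newbellman},
\[
\frac{1}{\gamma}\,S^*(F_0,\gamma) \;=\; \frac{1}{1-\gamma}\int_{S^*(F_0,\gamma)}^{+\infty} P_0(x)\,dx.
\]
If $\limsup_{\gamma\to 1} S^*(F_0,\gamma) = M < \infty$, then along a subsequence the left-hand side stays bounded by $M/\gamma$, while on the right-hand side the integral satisfies $\int_{S^*(F_0,\gamma)}^{\infty} P_0(x)\,dx \geq \int_{M}^{\infty} P_0(x)\,dx > 0$ (the latter being a fixed positive constant provided by the hypothesis $\mu_0(S)\sim S$, which forces the distribution to have infinite support, hence positive tail mass above every finite level). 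Then the right-hand side diverges like $1/(1-\gamma)$, contradicting the boundedness of the left-hand side. Hence $S^*(F_0,\gamma)\to\infty$.

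The only mild obstacle is confirming that the tail integral $\int_M^\infty P_0(x)\,dx$ is strictly positive, i.e.\ that $F_0$ has unbounded support; but this is immediate from $\mu_0(S)\sim S$ as $S\to\infty$, since the conditional mean is defined only when the tail probability is positive. Combining the two steps yields
\[
\frac{P_0(S^*(F_0,\gamma))}{1-\gamma} \;=\; \frac{1}{\gamma}\cdot \frac{S^*(F_0,\gamma)}{\mu_0(S^*(F_0,\gamma))-S^*(F_0,\gamma)} \;\longrightarrow\; \infty,
\]
which is exactly the assertion $P_0(S^*(F_0,\gamma)) = \omega(1-\gamma)$.
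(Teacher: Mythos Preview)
Your proposal is correct and follows the same route as the paper, which simply declares the result ``immediate'' from Lemma~\ref{reformulate} and the hypothesis $\mu_0(S)\sim S$. You additionally supply the step the paper leaves implicit---that $S^*(F_0,\gamma)\to\infty$ as $\gamma\to 1$---and your contradiction argument via \eqref{newbellman} is sound (the phrasing should more precisely negate ``$S^*\to\infty$'' by passing to a bounded subsequence rather than invoking the $\limsup$, but this is cosmetic).
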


Now we are well prepared to prove Proposition \ref{phasetransitionlemma}. We restate it as follows:
\phasetransitionlemma*

\begin{proof}
For any threshold value $S$, the expected discounted reward of the corresponding threshold policy \eqref{policy1.2} performed on the ground truth $F_0$ is
\begin{equation}
\label{performanceformula}
    \frac{\mu_{0}(S)\cdot P_{0}(S)}{1- \gamma (1-P_{0}(S))},
\end{equation}
thus the regret of Bellman policy $\tau^*(F_\gamma)$ is
\begin{equation}
\label{computeformula}
    \Rc(F_0,\gamma,\tau^*(F_\gamma)) = 1 -  \frac{\mu_{0}(S^*(F_\gamma,\gamma))}{\mu_{0}(S^*(F_0,\gamma))} \cdot \frac{P_{0}(S^*(F_\gamma,\gamma))}{P_{0}(S^*(F_0,\gamma))} \cdot \frac{(1-\gamma) + \gamma P_{0}(S^*(F_0,\gamma))}{(1-\gamma)+\gamma P_{0}(S^*(F_\gamma,\gamma))}.
\end{equation}
By \eqref{mus-s} and \eqref{almostsamethreshold}, it holds that
\begin{equation}
\label{re}
    \frac{\mu_{0}(S^*(F_\gamma,\gamma))}{\mu_{0}(S^*(F_0,\gamma))} \longrightarrow 1
\end{equation}
as $\gamma \rightarrow 1$. Meanwhile, by Corollary \ref{prop4.2}, it holds that $P_{0}(S^*(F_0,\gamma))= \omega (1-\gamma)$, thus
$$\frac{P_{0}(S^*(F_\gamma,\gamma))}{P_{0}(S^*(F_0,\gamma))} \cdot \frac{(1-\gamma) + \gamma P_{0}(S^*(F_0,\gamma))}{(1-\gamma)+\gamma P_{0}(S^*(F_\gamma,\gamma))} \sim \frac{P_{0}(S^*(F_\gamma,\gamma))}{(1-\gamma)+\gamma P_{0}(S^*(F_\gamma,\gamma))}.$$
Therefore, if $P_{0}(S^*(F_\gamma,\gamma)) = \omega(1-\gamma)$, then $\lim_{\gamma \rightarrow 1} \Rc(F_0, \gamma, \tau^*(F_\gamma)) = 0$; if $P_{0}(S^*(F_\gamma,\gamma)) = o(1-\gamma)$, then $\lim_{\gamma \rightarrow 1} \Rc(F_0, \gamma, \tau^*(F_\gamma)) = 1$.
\end{proof}

\end{document}